\definecolor{iccvblue}{rgb}{0.21,0.49,0.74}
\theoremstyle{plain}
\newtheorem{theorem}{Theorem}[section]
\newtheorem{lemma}[theorem]{Lemma}
\newtheorem{prop}[theorem]{Proposition}
\newtheorem*{crit}{Non-Minimality Criterion}
\theoremstyle{definition}
\newtheorem{definition}[theorem]{Definition}
\newtheorem{example}[theorem]{Example}
\theoremstyle{remark}
\newtheorem{remark}[theorem]{Remark}
\newcommand{\tx}[1]{\text{#1}}
\newcommand{\rank}{\operatorname{rank}}
\newcommand{\im}{\operatorname{im}}
\newcommand{\Stab}{\operatorname{Stab}}
\newcommand{\PP}{\mathbb{P}}
\newcommand{\RR}{\mathbb{R}}
\newcommand{\GG}{\mathbb{G}}
\newcommand{\PGL}[1]{\mathrm{PGL}_{#1}}
\newcommand{\II}{\mathcal{I}} 
\newcommand{\XX}{\mathcal{X}} 
\newcommand{\Xs}{\XX_{p, l, \II}} 
\newcommand{\YY}{\mathcal{Y}} 
\newcommand{\Ys}{\YY_{p, l, \II}} 
\newcommand{\CC}{\mathcal{C}} 
\newcommand{\Cs}{\CC^{m}} 
\newcommand{\FF}{\Phi} 
\newcommand{\Fm}{\FF_{p, l, \II, m}} 
\newcommand{\dashedrightarrow}[1][2pt]{%
  \settowidth{\@tempdima}{$\rightarrow$}\rightarrow
  \makebox[-\@tempdima]{\hskip-1.2ex\color{white}\rule[0.410ex]{#1}{4pt}}
  \phantom{\rightarrow}
}
\newcommand{\longdashedrightarrow}[1][2pt]{%
  \settowidth{\@tempdima}{$\longrightarrow$}\longrightarrow
  \makebox[-\@tempdima]{\hskip-1.2ex\color{white}\rule[0.410ex]{#1}{5pt}\phantom{\rule{4pt}{5pt}}\rule[0.410ex]{#1}{5pt}}
  \phantom{\longrightarrow}
}
\newcommand\blfootnote[1]{
    \begingroup
    \renewcommand\thefootnote{}\footnote{#1}
    \addtocounter{footnote}{-1}
    \endgroup
}
\title{PLMP -- Point-Line Minimal Problems for Projective SfM}
\author{Kim Kiehn\\
KTH, Department of Mathematics\\
Lindstedtsvägen 25, 10044 Stockholm\\
{\tt\small kiehn@kth.se}
\and
Albin Ahlbäck%
\\
CNRS, LIX (UMR 7161), École Polytechnique\\
1 Rue Honoré d'Estienne d'Orves, 91120 Palaiseau\\
{\tt\small ahlback@lix.polytechnique.fr}
\and
Kathlén Kohn\\
KTH, Department of Mathematics\\
Lindstedtsvägen 25, 10044 Stockholm\\
{\tt\small kathlen@kth.se}
}
\begin{document}
\maketitle

\begin{abstract}
    We completely classify all minimal problems for Structure-from-Motion (SfM) where arrangements of points and lines are fully observed by multiple uncalibrated pinhole cameras. 
    We find 291 minimal problems, 73 of which have unique solutions and can thus be solved linearly.
    Two of the linear problems allow an arbitrary number of views, while all other minimal problems have at most 9 cameras.
    All minimal problems have at most 7 points and at most 12 lines. 
    We compute the number of solutions of each minimal problem, as this gives a measurement of the problem's intrinsic difficulty, and find that these number are relatively low (e.g., when comparing with minimal problems for calibrated cameras).
    Finally, by exploring stabilizer subgroups of subarrangements, we develop a geometric and systematic way to 1) factorize minimal problems into smaller problems, 2) identify minimal problems in underconstrained problems, and 3) formally prove non-minimality.
    \blfootnote{KK and KK were supported by the Wallenberg AI, Autonomous Systems and Software Program (WASP) funded by the Knut and Alice Wallenberg Foundation. AA was supported by an ERC-2023-ADG grant for the ODELIX project (number 101142171).}
    \end{abstract}

\section{Introduction}
\ifdraft{\textcolor{red}{This text will disappear when removing \texttt{draft} from class option.}}{}
Minimal problems are families of algebraic inverse problems that generically have a finite number of solutions. 
They are a key ingredient in RANSAC schemes for reconstruction problems, where one selects a minimal amount of random data samples to obtain subproblems with finitely many solutions \cite{schoenberger2016sfm,ransac}. 
The solution to the original problem is then obtained by finding a consensus among the solutions of many minimal subproblems and optimization refinement. 
Particularly if the data is compromised by outliers, a large number of minimal problems has to be solved in this scheme.
Therefore, it is crucial to work with minimal problems that can be solved efficiently and stably.

A given reconstruction problem has in general many minimal subproblems. 
Thus, it is desirable to identify those minimal problems that can be solved most efficiently. 

In this article, we provide a \emph{complete catalog of minimal problems} (including the most efficient ones) for the following Structure-from-Motion (SfM) problem: Multiple uncalibrated pinhole cameras observe an arrangement of points and lines with prescribed incidences. We assume complete visibility in our minimal problems, meaning that every point and line appears in each view. 
The reconstruction task is to find the camera matrices and the 3D coordinates of the points and lines, up to projective transformations.

We then measure the intrinsic difficulty of solving each minimal problem by its generic number of solutions. This number is known as the \emph{degree} of the minimal problem. 
Of course, we re-discover the famous 7-point problem for 2 views, which has degree 3, and its degenerations.
We also find 2 point-line arrangements where 7 points lie on 3 lines that admit \emph{unique} reconstruction (via linear solvers) from \emph{arbitrarily} many views (these are the 2 right-most arrangements in Table \ref{tab:7Pts}).
We show that there are precisely 285 further minimal problems (illustrated in SM Section \ref{sm:minimal}). 
These have 3-9 views and their degrees range from 1 through 114.
Some problems are well-known  (such as 6 points in 3 views \cite{schaffalitzky2000six,quan1995invariants,heyden1995geometry}, or 9 lines in 3 views \cite{larsson2017efficient}, see also \cite{oskarsson2004minimal} for all incidence-free 3-view problems), but most are new. 
Almost all minimal problems in our catalog have very low degrees.
In particular, 73 of them have degree 1, and so admit unique reconstruction which can be found by linear solvers.

In addition to this catalog of minimal problems, we provide formal proofs for the non-minimality of a given problem, which is in contrast to all previous works where non-minimality is established via strong numerical evidence (see Section \ref{subsec:related}).
The strategy we explain here gives new formal certificates for non-minimality that are applicable for many algebraic inverse problems beyond the setting of this paper.
At the same time, our strategy brings practical advantages: First, it can be used to factorize minimal problems into minimal subproblems of smaller degrees or with less unknowns. Second, in underconstrained problems, our strategy can identify minimal subproblems such that some of the unknown parameters can be reconstructed. 

\medskip
For projective reconstruction from pinhole-camera views with point- and line-matches, our catalog allows to identify the most efficient minimal subproblems. This is particularly interesting when not enough point-matches have been found to use standard 7-point solvers (as discussed e.g. in \cite{fabbri2020trplp}) but line-matches are given. Our catalog suggests many efficient minimal problems involving line-matches (potentially with point incidences as studied e.g. in \cite{ventura2024absolute,fabbri2012camera}).

In addition to the minimal problems found in this article, we expect there to be many more minimal problems with partial visibility, i.e., where some  points/lines do not appear in all views. Since classifying all such minimal problems is a less tractable endeavour, we will address it in future work. 
The complete-visibility assumption in this article is reasonable as complete point-line incidence matches arise for instance from SIFT point features together with their orientation \cite{sift}, simultaneous point and line detections as in \cite{miraldo2018minimal}, or lines constructed from matched affine frames \cite{matas2002local}.

\subsection{Related work} \label{subsec:related}
There is a large zoo of minimal problems described in the computer vision literature
\cite{Nister-5pt-PAMI-2004,
Stewenius-ISPRS-2006,
kukelova2008automatic,
Byrod-ECCV-2008,
DBLP:conf/cvpr/RamalingamS08,
Elqursh-CVPR-2011,
mirzaei2011optimal,
DBLP:conf/eccv/KneipSP12,
Hartley-PAMI-2012,
kuang-astrom-2espc2-13,
Kuang-ICCV-2013,
saurer2015minimal,
ventura2015efficient,
DBLP:conf/eccv/CamposecoSP16,
SalaunMM-ECCV-2016,
larsson2017efficient,
Larsson-Saturated-ICCV-2017,
larsson2017making,
kukelova2017clever,
Larsson-CVPR-2018,
AgarwalLST17,
Barath-CVPR-2017,
Barath-CVPR-2018,
Barath-TIP-2018,
miraldo2018minimal,
Larsson_2019_ICCV,
Bhayani_2020_CVPR,
Mateus_2020_CVPR,
Ding_2020_CVPR}
and new minimal problems are appearing constantly. 
This work follows a systematic approach to enumerate all minimal problems in a given setting, similar to \mbox{\cite{kileel2017minimal,plmp,duff2024pl,hahn2024order}}.

The most relevant work is \cite{plmp}, which classifies all minimal problems for SfM from calibrated cameras observing point-line arrangements under the assumption of complete visibility.
The paper at hand provides the analogous extension to uncalibrated cameras.
In comparison to \cite{plmp}, we find many more minimal problems (291 instead of 30).
The degrees of the uncalibrated minimal problems are overall smaller than for the calibrated ones.
This can be explained by that, while the space of uncalibrated cameras is much larger than the space of calibrated cameras (their dimensions are 11 and 6, respectively), the latter space is nonlinear and thus has a more complicated geometry than the linear (projective) space of uncalibrated cameras. 
In addition, for uncalibrated cameras, we find minimal problems with \emph{arbitrarily} many views, while all minimal problems described in \cite{plmp} have at most 6 calibrated views.

The article \cite{duff2024pl} extends \cite{plmp} to partial visibility, but to make the classification of minimal problems tractable the authors restrict themselves to 3 calibrated views and lines that are incident to at most one point each.
Also \cite{kileel2017minimal} focuses on 3 calibrated views, classifying all minimal problems that reconstruct the relative poses of the cameras from linear constraints on the trifocal tensor.
The article \cite{hahn2024order} enumerates all minimal problems for SfM with rolling-shutter cameras that move with constant speed along a line, do not rotate, and see each world point exactly once.

Finally, a general approach for identifying whether a minimal problem factorizes into simpler minimal subproblems is described in \cite{duff2022galois}. 
However, that approach does not yield a systematic way to actually find such a factorization. 
In contrast, the strategy we describe here to formally prove non-minimality provides -- when applied to minimal problems -- one  concrete factorization strategy.

\subsection{Notation}
We write $\PP^n$ for $n$-dimensional real projective space.
We denote by $\GG_{1,n}$ the set of lines in $\PP^n$, known as the \emph{Grassmannian of lines}.
A \emph{projective variety} in $\PP^n$ is the zero set of a system of homogeneous polynomial equations in $n+1$ variables. Grassmannians are examples of projective varieties.
$\GG_{1,2}$ is isomorphic to $\PP^2$ (the line $\{x \in \PP^2 \mid \sum_{i=1}^3 a_ix_i=0 \} $  corresponds to the point $(a_1:a_2:a_3)$).
The Grassmannian $\GG_{1,3}$ of world lines can be seen as a subvariety of $\PP^5$ via its \emph{Plücker coordinates}.
A variety is called \emph{irreducible} if it cannot be written as the union of two proper subvarieties.
An \emph{algebraic map} between projective varieties is a map that can be locally represented by polynomials.
A \emph{fiber} of a map is the preimage of a singleton.

\section{Main results}
We study \emph{point-line problems (PLPs)}, as defined in \cite{plmp}. A PLP is a tuple $(p,l,\II,m)$ specifying that $m$ cameras observe $p$ points and $l$ lines in $3$-space that satisfy the incidences prescribed by $\II \subseteq \{ 1, \ldots, p \} \times \{ 1, \ldots, l \}$.
Here, $(i,j) \in \II$ means that point $i$ lies on line $j$.
Intersecting lines are modeled by requiring that their intersection is one of the $p$ points in space.
We only consider incidences $\II$ that are realizable (e.g., two distinct lines cannot be incident to the same two distinct points) and complete (i.e., any incidence relation that is automatically implied by the relations present in $\II$ is already contained in $\II$).

In Structure-from-Motion (SfM), we aim to solve the following reconstruction problem associated with a PLP:
Given $m$ images, each showing $p$ points and $l$ lines satisfying the incidences in $\II$, find $m$ cameras and a 3D-arrangement of points and lines such that the pictures taken by the cameras of the arrangement agree with the given images.
A PLP is said to be \emph{minimal} if, given $m$ generic images, the reconstruction problem has finitely many (and at least one) solutions, up to coordinate changes in $3$-space. More formally, we consider \emph{projective cameras}, i.e., full-rank $3 \times 4$ matrices up to scaling:\vspace*{-2mm}
\begin{gather*}
    \CC := \PP (\{ P \in \RR^{3 \times 4} \mid \rank P = 3\}).
\end{gather*}\vspace*{-2mm}
Such a camera observes point-line arrangements in
\begin{align*}
    \Xs
:=
    \{ (X,L) \in (\PP^3)^p \!\times\! \GG_{1,3}^l \mid
    \forall (i,j) \!\in\!\II: X_i \in L_j \}
\tx{.}
\end{align*}
The resulting image is an element in \vspace*{-2mm}
\begin{gather*}
    \Ys :=
    \{ (x,\ell) \in (\PP^2)^p \times \GG_{1,2}^l \mid \forall (i,j) \in \II: x_i \in \ell_j \}.
\end{gather*}
Given $m$ such images, we hope to recover the $m$ cameras and a consistent 3D-arrangement in $\Xs$.
Since acting with the group $\PGL{4}$ of projective linear transformations in $\PP^3$ simultaneously on the cameras and 3D-scene does not affect the resulting images, we can only hope to do reconstruction modulo $\PGL{4}$.
Thus, formally, reconstruction means:
Given an element in $\Ys^m$, compute its preimage under the \emph{joint camera map}\vspace*{-2mm}
\begin{align*}
  \Fm : (\Cs \times \Xs) / \PGL{4} \to \Ys^m
\tx{.}
\end{align*}
\begin{definition} \label{def:minimal}
    A PLP $(p,l,\II,m)$ is \emph{minimal} if the preimage of a generic element in $\Ys^m$ under the joint camera map $\Fm$ is non-empty and finite.
\end{definition}
\begin{remark}
    The image of the joint camera map, known as the \emph{joint image} \cite{handbook}, is generally a proper subset of $\Ys^m$. We use the latter in Definition~\ref{def:minimal} to allow for noise in the image measurements. \hfill $\diamondsuit$
\end{remark}
We observe that the joint camera map $\Fm$ is an algebraic map between algebraic varieties. Hence, over the complex numbers, almost all elements in $\Ys^m$ have the same number of elements in their preimage under $\Fm$. For a minimal PLP, this finite cardinality is known as the \emph{degree} of the minimal problem.
It is an upper bound for the number of real solutions and measures the intrinsic difficulty of the problem (without reformulations).

\begin{theorem}
    \label{thm:main}
    All minimal PLPs for at least two projective cameras are:
    \begin{enumerate}[label={\alph*)}, ref={(\alph*)}]
    \item\label{item:thm:main:a}
      2 cameras viewing one of the point-line arrangements in Table~\ref{tab:7Pts}, plus arbitrarily many additional~lines;
    \item\label{item:thm:main:b}
      arbitrarily many cameras observing one of the 2 right-most point-line arrangements in Table~\ref{tab:7Pts};
    \item\label{item:thm:main:c}
      one of the 285 PLPs  in SM Section~\ref{sm:minimal} (with 3--9 views).
    \end{enumerate}
    Their degrees are given in Table~\ref{tab:7Pts} and SM Section~\ref{sm:minimal}.
\end{theorem}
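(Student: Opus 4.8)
The plan is to characterize minimality by an exact dimension count, cut the classification down to a finite list by necessary combinatorial conditions, and then decide every surviving candidate by a Jacobian-rank computation backed by a formal non-minimality certificate. I would work over $\mathbb{C}$ and use that the realizability and completeness hypotheses on $\II$ make the incidence conditions independent, so that $\Xs$ is irreducible of the expected dimension $3p+4l-2|\II|$ and $\PGL{4}$ acts with finite stabilizers on generic elements of $\Cs\times\Xs$; then all varieties in sight are irreducible with dimensions given by the naive parameter counts.

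First I would record the first-order necessary condition. As $\Fm$ is an algebraic map of irreducible varieties, Definition~\ref{def:minimal} forces both dominance and $\dim\big((\Cs\times\Xs)/\PGL{4}\big)=\dim\Ys^{m}$. With $\dim\CC=11$, $\dim\PGL{4}=15$, and $\dim\Ys=2p+2l-|\II|$, the dimension equality collapses to the balance equation
\begin{equation*}
    (m-2)\,\big(11-\dim\Ys\big)=p-7 .
\end{equation*}
This already predicts the coarse shape of the theorem: at $m=2$ it forces $p=7$, giving case~\ref{item:thm:main:a}; when $\dim\Ys=11$ and $p=7$ it holds for all $m$, isolating the arbitrary-view families of case~\ref{item:thm:main:b}; and otherwise it fixes $m=2+(p-7)/(11-\dim\Ys)$, so bounding $p,l,|\II|$ leaves only finitely many $m$, as in case~\ref{item:thm:main:c}.

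The heart of the argument is to upgrade ``balanced'' to ``minimal'' and, in particular, to a finite list. Balance by itself is far too weak: it admits infinite families, for instance all $(p,l,|\II|,m)=(8,l,2l+6,3)$, none of which can be minimal since the theorem caps the point count at $7$. To remove such families I would invoke the Non-Minimality Criterion: if some sub-arrangement is over-constrained -- roughly, when the image data it must reproduce exceeds its degrees of freedom modulo the stabilizer subgroup in $\PGL{4}$ of the complementary arrangement -- then $\Fm$ is not dominant and the PLP is non-minimal. Propagating this criterion over all sub-arrangements is what both kills the infinite balanced families and yields the hard bounds $p\le 7$, $l\le 12$, and $m\le 9$ away from the two arbitrary-view families, leaving a finite, explicitly enumerable list. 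I expect this pruning -- proving the bounds and that nothing outside the list survives -- to be the main obstacle, precisely because it must rest on formal stabilizer arguments rather than numerics.

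Finally I would decide each remaining candidate and certify the verdict. For a balanced PLP, minimality is equivalent to the differential of $\Fm$ attaining rank $\dim\Ys^{m}$ at a single generic point, which I would test by evaluating this rank at a random point over a large finite field; this simultaneously confirms the minimal problems and flags the non-minimal ones. For the minimal cases I would compute the degree by monodromy and homotopy continuation, recovering the entries of Table~\ref{tab:7Pts} and SM Section~\ref{sm:minimal}, with the classical degree-$3$ seven-point problem and the degree-$1$ linear families serving as sanity checks. For every non-minimal candidate I would then replace the numerical rank drop by the Non-Minimality Criterion, exhibiting a concrete over- or under-constrained sub-arrangement as a certificate; producing these certificates is exactly what turns the finite search into a proof rather than strong numerical evidence.
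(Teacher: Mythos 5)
Your proposal reproduces the paper's overall architecture --- the balance equation, pruning, Jacobian rank over a large finite field, formal stabilizer certificates, and monodromy for degrees --- and your identity $(m-2)\,(11-\dim \Ys)=p-7$ is exactly the paper's rewriting of \eqref{Dimeq} in Lemma~\ref{cambound}. However, two of your steps would fail as stated. First, cases \ref{item:thm:main:a} and \ref{item:thm:main:b} are \emph{infinite} families (arbitrarily many lines, respectively arbitrarily many cameras), so they cannot be decided by ``evaluating the rank at a random point'' for each surviving candidate: that is one computation per fixed PLP, and no finite set of such computations covers these cases. The paper needs two uniform arguments your plan omits: for $m=2$, a pair of image lines determines its world line uniquely, so extra lines change neither minimality nor degree, reducing case \ref{item:thm:main:a} to the six 7-point problems of Table~\ref{tab:7Pts}; for case \ref{item:thm:main:b}, normalizing the free points to $e_1,\dots,e_4$ and the dependent points to sums $e_i+e_j$ turns the joint camera map into a full-rank \emph{linear} map simultaneously for every $m$ (Section~\ref{subsec:mainProof}, part b). Second, your plan to certify \emph{every} non-minimal candidate by the Non-Minimality Criterion does not go through: the paper finds that only $130$ of the $149$ non-minimal balanced PLPs violate inequality \eqref{prop:Ineq} for some reduced subproblem; the remaining $19$ resist this and require the elimination arguments of SM Sections~\ref{appendix:nonMinimal} and~\ref{appendix:nonMinimal19}, where one first solves a reduced subproblem, reintroduces the stabilizer parameters as unknowns, and only then exhibits over- (and in some cases simultaneously under-) constrained subsystems in the residual equations. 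So the criterion alone is not a complete certificate scheme.

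There is also a foundational inaccuracy that propagates through your dimension bookkeeping. The claim that realizability and completeness of $\II$ make the incidence conditions independent, so that $\dim \Xs = 3p+4l-2|\II|$, is false: a complete quadrilateral (four coplanar lines, six vertices, twelve incidences) is realizable and complete, yet in $\PP^3$ two of its coplanar lines meet automatically, so the incidences do not impose $2|\II|$ independent conditions. Your ``infinite balanced family'' $(p,l,|\II|,m)=(8,l,2l+6,3)$ is an artifact of this: under correct dimension counts its realizable members are not balanced at all (and they are excluded from minimality by Lemma~\ref{lem:homography}). Consequently, the finite enumeration in case \ref{item:thm:main:c} does not come from non-minimality pruning, as you suggest, but from arithmetic on the balance equation itself: reducing \eqref{Dimeq} modulo $m-2$ gives $p\equiv 7 \pmod{m-2}$, which forces $m\le 9$ once $p\le 6$ (Lemma~\ref{cambound}), while Lemma~\ref{sevpoint} disposes of $p\ge 7$; for each $3\le m\le 9$ the positive coefficients of $l^f,l^a$ then bound the line counts. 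Where your stabilizer instinct is genuinely on target is Lemma~\ref{lem:homography}: its SM proof is exactly a Non-Minimality-Criterion argument, and it is this lemma --- not the balance equation and not per-candidate rank tests --- that kills the remaining arbitrary-$m$ balanced families, restricts the point arrangements to Tables~\ref{tab:7Pts} and~\ref{tab:atMost6Pts}, and removes precisely the degenerate configurations on which the dimension formulas would otherwise break.
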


\begin{table}[!ht]
    \centering
    \input{tab_7pt_v2.tex}%
    \caption{7-point arrangements in minimal PLPs with their degrees.}
    \label{tab:7Pts}
\end{table} 

The PLPs in part \ref{item:thm:main:a} of this theorem are simply degenerations of the classical 7-point minimal problem  (note that arbitrarily many lines can be added since world lines can be uniquely reconstructed from 2 views).
Theorem~\ref{thm:main} \ref{item:thm:main:b} describes 2 infinite families of minimal PLPs with \emph{linear solvers}: They are infinite since the number of views is arbitrary. This is in contrast to the classification of minimal PLPs for calibrated cameras in \cite{plmp} where the maximum numbers of views is 6.
The degrees of the minimal problems reported in Theorem~\ref{thm:main} \ref{item:thm:main:c} are rather low: The largest degree -- by far -- is $114$, and $70$ of the problems have degree 1 which allows their unique reconstruction to be found by linear solvers.
Note that there are additional minimal problems for a single view; see SM Section \ref{sec:singleView}.

To prove Theorem~\ref{thm:main}, we proceed as in \cite{plmp}.
A necessary condition for a PLP to be minimal is that domain and codomain of its joint camera map need to have the same dimension (by the Fiber-Dimension Theorem \ref{thm:fiberDim}).
We say that a PLP satisfying this condition is \emph{balanced}.
Section~\ref{sec:balanced} classifies all balanced PLPs.
Section~\ref{sec:minimal} determines which balanced PLPs are actually minimal.
Section~\ref{sec:degree} computes the degrees of the minimal PLPs.

What is new in comparison to \cite{plmp} and related papers, is that we develop \textbf{new strategies to formally prove non-minimality} of reconstruction problems.
To show minimality of a balanced problem, it is sufficient to compute that the Jacobian matrix of the joint camera map at \emph{one} point in the domain has full rank (see Section~\ref{sec:minimal}).
However, to establish non-minimality, one would have to show that the Jacobian drops rank at \emph{every} point.
For that, previous papers only compute the Jacobian rank at a few random points, which is not a full proof.
Here, we provide a new algebraic proof strategy based on stabilizer groups of subarrangements.

This strategy has practical applications.
They enable a definition of minimality for subproblems that are too small to completely reconstruct the cameras, but instead allow partial reconstruction.
For underconstrained problems, this can help us in reconstructing as many camera parameters as possible.
For minimal problems, this strategy identifies subproblems of smaller (often linear) complexity. These can then be solved first to reduce the algorithmic complexity of the original reconstruction problem by eliminating several variables (see Remark \ref{rem:positiveEffect} and Example \ref{ex:positiveEffect}), thus leading to an effective factorization of minimal problems into smaller subproblems.

\section{Balanced problems} \label{sec:balanced}
We call a PLP $(m,p,l,\II)$ balanced if domain and codomain of the joint camera map $\Fm$ have the same dimension. The dimension of the camera space $\CC$ is $11$.
The dimensions of $\Xs$ and $\Ys$ have been computed in \cite{plmp}.
We consider a generic point-line arrangement in $\Xs$. Some points in the arrangement may be \emph{dependent} on other points, i.e., lie on a line spanned by two other points. Every minimal set of \emph{independent} (also called \emph{free}) points has the same cardinality, which we denote by $p^f$.
The number of dependent points is then $p^d := p - p^f$.
We denote by $l^f$ the number of lines not incident to any point, while $l^a$ counts the lines adjacent to precisely one point.
The lines meeting two or more points are already determined by only the points, so $l = l^f+l^a$.
Then, as in \cite{plmp}, \vspace*{-2mm}
\begin{align}\label{eq:dimXY}
    \begin{split}
        \dim \Xs &= 3p^f + p^d + 4l^4 + 2l^a, \\  \dim \Ys &= 2p^f + p^d + 2l^4 + l^a.
    \end{split}
\end{align}
The group $\PGL{4}$ has dimension 15.
If $m \geq 2$, the generic orbits of the group's action on the camera space $\Cs$ have dimension $15$, and so  \vspace*{-2mm}
\begin{align}\label{eq:dimDomain}
    \dim \left(\Cs \!\times\! \Xs   \right) / \PGL{4} = 11m + \dim \Xs - 15. \!
\end{align}
(Concretely, given two generic cameras, after modding out their scaling and $\PGL{4}$ we can assume that the cameras are
$\begin{bsmallmatrix}
        1 & 0 & 0 & 0 \\
        0 & 1 & 0 & 0 \\
        0 & 0 & 1 & 0
\end{bsmallmatrix}
$
 and 
 $\begin{bsmallmatrix}
           0 &    0 &    0 &    1 \\
        \ast & \ast & \ast & \ast \\
        \ast & \ast &    1 & \ast
    \end{bsmallmatrix}
$,
leaving 7 degrees of freedom.)

Combining \eqref{eq:dimXY} and \eqref{eq:dimDomain}, we see that a PLP $(m,p,l,\II)$ with projective cameras is balanced if and only if  \vspace*{-2mm}
\begin{multline}\label{Dimeq}
    11m-15 = (2m-3)p^f+(m-1)p^d \\ +2(m-2)l^f+(m-2)l^a
\tx{.}
\end{multline}
\begin{theorem}\label{thm:balanced}
    All balanced PLPs for at least two projective cameras  are:
    \begin{enumerate}[label={\alph*)}, ref={(\alph*)}]
        \item\label{item:thm:balanced:m_2-pt_7} for $m=2$, all PLPs with 7 points,
        \item\label{item:thm:balanced:settled_cases}
        for  arbitrary $m$, all PLPs satisfying  $(p^f, p^d,l^f, l^a) \in \lbrace(4,3,0,0),(3,4,0,1),(2,5,1,0),(2,5,0,2) \rbrace$,
        \item\label{item:thm:balanced:3gemge9}
        for $3 \leq m \leq 9$, all PLPs such that $(p^f, p^d,l^f, l^a)$ is one of the 124 tuples  in Table~\ref{tab:balanced}.
    \end{enumerate}
\end{theorem}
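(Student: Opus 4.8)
The plan is to collapse the balanced condition \eqref{Dimeq} into a single Diophantine identity and then enumerate its realizable solutions. Writing $p := p^f + p^d$ for the total number of points and $s := p^f + 2l^f + l^a$, I would first regroup the right-hand side of \eqref{Dimeq}: since $(2m-3)p^f+(m-1)p^d = (m-1)p + (m-2)p^f$ and $2(m-2)l^f+(m-2)l^a = (m-2)(2l^f+l^a)$, a PLP is balanced if and only if $(m-1)p + (m-2)s = 11m-15$. Rearranging this same equation into
\[
    (m-2)\,(p + s - 11) = 7 - p
\]
gives the form that drives everything: it is linear, it isolates the dependence on $m$, and it immediately separates the three parts of the theorem.

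For $m=2$ the left-hand side vanishes, forcing $p = 7$ while leaving $s$ (hence the number of lines) completely free; this is part \ref{item:thm:balanced:m_2-pt_7}. For $m \geq 3$ the factor $m-2$ is a positive integer, so the identity forces $p+s-11$ and $7-p$ to share a sign and $(m-2)\mid(7-p)$. The case $p=7$ yields $p+s=11$, i.e. $s=4$, and then the identity holds for \emph{every} $m$; enumerating the decompositions $p^f+2l^f+l^a = 4$ with $p^f+p^d=7$ (subject to the realizability constraints below) leaves exactly the four tuples of part \ref{item:thm:balanced:settled_cases}. When $p \neq 7$ the same identity bounds $m$: if $p<7$ then $p+s-11 \geq 1$, so $m-2 = (7-p)/(p+s-11) \leq 7-p \leq 7$; and if $p>7$ then $p+s \leq 10$ together with the realizability bound $s \geq 2$ forces $p=8,\,s=2,\,m=3$. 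In every case $m \leq 9$, which is the range in part \ref{item:thm:balanced:3gemge9}.

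What remains is a finite enumeration: for each $3 \leq m \leq 9$ I would list all $(p^f,p^d,l^f,l^a)$ solving $(m-1)p+(m-2)s = 11m-15$ and discard the non-realizable ones. The constraints I would impose are $0 \leq p^f \leq 4$ (points in $\PP^3$ span an at most $4$-dimensional space), $p^f \geq 2$ whenever $p \geq 2$ (two distinct points already span a line), hence $s \geq 2$ once $p \geq 2$, the requirement $p^f \geq 2$ as soon as $p^d \geq 1$ (a dependent point lies on a line through two others), and $p \geq 1$ if $l^a \geq 1$. These are precisely what prunes the naive integer solutions: among the nine integer solutions with $p=7,s=4$ only four survive $p^f \geq 2$, and spurious large-$p$ solutions such as $(p,s)=(9,1)$ (for $m=4$) or $(10,0)$ (for $m=5$) are killed by $s \geq 2$. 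Collecting the survivors produces the $124$ entries of Table~\ref{tab:balanced}.

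I expect the main obstacle to be the realizability bookkeeping rather than the Diophantine analysis. The reduction to $(m-2)(p+s-11)=7-p$ and the bound $m\leq 9$ are short, but deciding exactly which count-vectors $(p^f,p^d,l^f,l^a)$ are actually realized by a complete, realizable incidence structure $\II$ -- and verifying that the survivors number precisely $124$ -- requires a careful combinatorial case analysis, best organized by the value of $p^f$ and confirmed by computer.
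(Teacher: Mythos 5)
Your Diophantine reduction is sound and is essentially the paper's own argument in compressed form: the identity $(m-2)(p+s-11)=7-p$ is exactly what the paper obtains in Lemma~\ref{cambound} when it rewrites \eqref{Dimeq} as $11(m-2)+7=(m-2)(2p^f+p^d+2l^f+l^a)+p^f+p^d$, and your case split ($m=2$ forcing $p=7$; $p=7$, $s=4$ valid for all $m$; $p>7$ forcing $(p,s,m)=(8,2,3)$; $p<7$ forcing $m\leq 9$) reproduces Lemmas~\ref{sevpoint} and~\ref{cambound}. Parts~\ref{item:thm:balanced:m_2-pt_7} and~\ref{item:thm:balanced:settled_cases} and the bound $m \leq 9$ are correctly established.

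However, your enumeration for part~\ref{item:thm:balanced:3gemge9} rests on a false realizability constraint: ``$0 \leq p^f \leq 4$ (points in $\PP^3$ span an at most $4$-dimensional space).'' This conflates projective independence with the paper's notion of a \emph{free} point. Here a point is dependent only if it lies on a line spanned by two \emph{other points of the arrangement}; any number of points in general position (no three collinear) are all free, which is precisely why each free point contributes $3$ degrees of freedom in $\dim \Xs = 3p^f + p^d + 4l^f + 2l^a$. Consequently Table~\ref{tab:balanced} contains balanced tuples with $p^f \geq 5$, namely $(6,0,0,0)$, $(5,1,0,1)$, $(5,0,1,1)$, $(5,0,0,3)$ for $m=3$ (the first being the classical six-points-in-three-views problem) and $(5,0,1,0)$, $(5,0,0,2)$ for $m=4$. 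Your constraint would discard these six tuples, so your enumeration would yield $118$ rather than $124$ survivors, and the final claim of the proposal fails. The correct pruning conditions, as used in the paper's enumeration, are only $p \leq 6$ (which your Diophantine analysis already guarantees in this case), $(p^d \geq 1 \Rightarrow p^f \geq 2)$, and $(l^a \geq 1 \Rightarrow p^f \geq 1)$, with no upper bound on $p^f$; note that in your $p=7$ and $p=8$ cases the bound $p^f \leq 4$ happened to be harmless because $s \leq 4$ forces it automatically, which is why the error only surfaces in part~\ref{item:thm:balanced:3gemge9}. Dropping the bogus bound and rerunning your enumeration repairs the proof.
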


\begin{table*}[!ht]
\centering
\renewcommand\tabularxcolumn[1]{m{#1}}
\newcolumntype{Y}{>{\small\centering\arraybackslash}X}
\begin{tabularx}{\linewidth}{|c|Y|}
\hline
  $m$ & \multicolumn{1}{c|}{$(p^f,p^d,l^f,l^a)$}%
  \vphantom{\raisebox{-3pt}{a}\rule{0pt}{10pt}}%
\\\hline
  $9$ & $(0,0,6,0)$
\\\hline
  $8$ & $(1,0,5,0)$, $(1,0,4,2)$, $(1,0,3,4)$, $(1,0,2,6)$, $(1,0,1,8)$, $(1,0,0,10)$
\\\hline
  $7$ & $(2,0,4,0)$, $(2,0,3,2)$, $(2,0,2,4)$, $(2,0,1,6)$, $(2,0,0,8)$
\\\hline $6$ & $(3,0,3,0)$, $(3,0,2,2)$, $(3,0,1,4)$, $(3,0,0,6)$, $(2,1,3,1)$, $(2,1,2,3)$, $(2,1,1,5)$, $(2,1,0,7)$
\\\hline
  $5$ &  $(4,0,2,0)$, $(4,0,1,2)$, $(4,0,0,4)$, $(3,1,2,1)$, $(3,1,1,3)$, $(3,1,0,5)$, $(2,2,3,0)$, $(2,2,2,2)$, $(2,2,1,4)$, $(2,2,0,6)$, $(1,0,5,1)$, $(1,0,4,3)$, $(1,0,3,5)$, $(1,0,2,7)$, $(1,0,1,9)$, $(1,0,0,11)$
\\\hline
  $4$ & $(5,0,1,0)$, $(5,0,0,2)$, $(4,1,1,1)$, $(4,1,0,3)$, $(3,2,2,0)$, $(3,2,1,2)$, $(3,2,0,4)$, $(3,0,3,1)$, $(3,0,2,3)$, $(3,0,1,5)$, $(3,0,0,7)$, $(2,3,2,1)$, $(2,3,1,3)$, $(2,3,0,5)$, $(2,1,4,0)$, $(2,1,3,2)$, $(2,1,2,4)$, $(2,1,1,6)$, $(2,1,0,8)$ $(1,0,6,0)$, $(1,0,5,2)$, $(1,0,4,4)$, $(1,0,3,6)$, $(1,0,2,8)$, $(1,0,1,10)$, $(1,0,0,12)$
\\\hline
  $3$ & $(6,0,0,0)$, $(5,1,0,1)$,$(5,0,1,1)$, $(5,0,0,3)$, $(4,2,1,0)$, $(4,2,0,2)$, $(4,1,2,0)$, $(4,1,1,2)$, $(4,1,0,4)$, $(4,0,3,0)$, $(4,0,2,2)$, $(4,0,1,4)$, $(4,0,0,6)$, $(3,3,1,1)$, $(3,3,0,3)$, $(3,2,2,1)$, $(3,2,1,3)$, $(3,2,0,5)$, $(3,1,3,1)$, $(3,1,2,3)$, $(3,1,1,5)$, $(3,1,0,7)$, $(3,0,4,1)$, $(3,0,3,3)$, $(3,0,2,5)$, $(3,0,1,7)$, $(3,0,0,9)$, $(2,6,0,0)$, $(2,4,2,0)$, $(2,4,1,2)$, $(2,4,0,4)$, $(2,3,3,0)$, $(2,3,2,2)$, $(2,3,1,4)$, $(2,3,0,6)$, $(2,2,4,0)$, $(2,2,3,2)$, $(2,2,2,4)$, $(2,2,1,6)$, $(2,2,0,8)$, $(2,1,5,0)$, $(2,1,4,2)$, $(2,1,3,4)$, $(2,1,2,6)$, $(2,1,1,8)$, $(2,1,0,10)$, $(2,0,6,0)$, $(2,0,5,2)$, $(2,0,4,4)$, $(2,0,3,6)$, $(2,0,2,8)$, $(2,0,1,10)$, $(2,0,0,12)$, $(1,0,7,1)$, $(1,0,6,3)$, $(1,0,5,5)$, $(1,0,4,7)$, $(1,0,3,9)$, $(1,0,2,11)$, $(1,0,1,13)$, $(1,0,0,15)$, $(0,0,9,0)$
\\\hline
\end{tabularx}%
\caption{The classes of balanced PLPs from Theorem~\ref{thm:balanced}~\ref{item:thm:balanced:3gemge9}.  \vspace*{-2mm}}%
\label{tab:balanced}%
\end{table*}

We note that, in general, a tuple  $(m, p^f, p^d,l^f, l^a)$ does not uniquely determine a PLP.
For instance, $(m, p^f, p^d,l^f, l^a)=(3,5,1,0,1)$ encodes two non-equivalent PLPs: Both have 6 points, out of which 3 are collinear. The PLPs are distinguished by whether the line is adjacent to one of the collinear points or to one of the other 3 points. For more details on distinguishing PLPs with the same tuple $(m, p^f, p^d,l^f, l^a)$, see Section~\ref{sec:minimal}.

We split the proof of Theorem~\ref{thm:balanced} into 2 key lemmas.

\begin{lemma}\label{sevpoint}
    The only balanced PLPs with $m\geq 3$ and $p \geq 7$ are the infinite families in Theorem~\ref{thm:balanced} $b)$ 
    and the PLP consisting of $8$ collinear points and $3$ cameras.
\end{lemma}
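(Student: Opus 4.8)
The plan is to reduce everything to the balanced condition \eqref{Dimeq} together with a handful of realizability constraints on $(p^f,p^d,l^f,l^a)$, and to push the linear algebra as far as possible before any geometry enters.

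First I would eliminate $p^d$ via $p = p^f+p^d$, rewriting \eqref{Dimeq} as
\[
  11m-15 = (m-2)p^f + (m-1)p + 2(m-2)l^f + (m-2)l^a.
\]
Setting $S := p^f + 2l^f + l^a \ge 0$ and collecting the multiples of $m-2$ yields $(m-2)S = 11m-15-(m-1)p$; substituting $11m-15 = 11(m-2)+7$ and $(m-1)p = (m-2)p+p$ gives the clean identity
\[
  S = (11-p) + \frac{7-p}{m-2}.
\]

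Second, I would read off the admissible $(m,p,S)$ from integrality and $S\ge 0$. Because $m\ge 3$ makes $m-2\ge 1$, the fraction $(7-p)/(m-2)$ must be an integer, and for $p\ge 11$ both summands are nonpositive with the second strictly negative, forcing $S<0$; hence $7\le p\le 10$. Checking the four values: $p=7$ gives $S=4$ for every $m$; $p=8$ forces $m-2\mid 1$, so $m=3$, $S=2$; $p=9$ forces $m-2\mid 2$, so $m\in\{3,4\}$ with $S\in\{0,1\}$; and $p=10$ forces $m-2\mid 3$ with $S\ge 0$, so $m=5$, $S=0$.

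Third, I would turn each numerical candidate into an actual PLP using two facts about generic realizable arrangements: any nonempty point set has $p^f\ge 1$, and a dependent point requires two points spanning its line, so $p^d\ge 1$ forces $p^f\ge 2$; moreover $p^f=2$ means the points have rank $2$ and hence all lie on a single line. Since $p^f\le S$ always, the cases $p=9,10$ die immediately: there $S\le 1$ while $p^d = p-p^f\ge 8$ would require $p^f\ge 2$. For $p=8$ we get $S=2$ and $p^d\ge 6$, which forces $p^f=2$, whence $l^f=l^a=0$ and all eight points collinear --- exactly the claimed exceptional problem with $m=3$. For $p=7$, the constraints $S=4$ and $p^f\ge 2$ (as $p^d\ge 3$) leave precisely $p^f\in\{2,3,4\}$, producing the four tuples $(4,3,0,0),(3,4,0,1),(2,5,1,0),(2,5,0,2)$ of Theorem~\ref{thm:balanced}~\ref{item:thm:balanced:settled_cases}.

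The algebra in the first two steps is routine; the step that needs care is the translation of admissible tuples into realizable arrangements. The crux is the rank argument that $p^f=2$ collapses the entire point configuration onto one line --- this is what both singles out the unique $p=8$ family (eight collinear points) and kills the larger values of $p$, where the value of $p^f$ forced by the balance equation is simply too small to support the required number of dependent points.
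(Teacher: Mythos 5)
Your proposal is correct: every step checks out, the identity $11m-15 = (m-2)p^f + (m-1)p + 2(m-2)l^f + (m-2)l^a$ is a valid rewriting of \eqref{Dimeq}, the derived formula $S = (11-p) + \tfrac{7-p}{m-2}$ with $S = p^f + 2l^f + l^a$ is exact, and the subsequent enumeration together with the realizability constraints ($p^d \geq 1 \Rightarrow p^f \geq 2$, and $p^f \leq S$) recovers precisely the four infinite families and the single $(m,p^f,p^d,l^f,l^a)=(3,2,6,0,0)$ problem. However, your route differs genuinely from the paper's. The paper first bounds $p^f \leq 4$ via the inequality $2(p^f-4) \geq m(p^f-4)$ (obtained by substituting $p^d \geq 7-p^f$ into \eqref{Dimeq}), and then case-splits on $p^f \in \{2,3,4\}$, solving the residual equation for $(p^d, l^f, l^a)$ and $m$ in each case; the exceptional $8$-point problem appears inside the $p^f=2$ case as the sub-case $p^d=6$. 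You instead eliminate $p^d$, case-split on $p$, and drive everything through integrality of $(7-p)/(m-2)$ plus nonnegativity of $S$ --- which is exactly the congruence $7 \equiv p \pmod{m-2}$ that the paper deploys only later and separately, in Lemma~\ref{cambound}, to bound the number of cameras when $p \leq 6$. Your approach thus unifies the two lemmas: a single identity handles all $(m,p)$ uniformly, makes it transparent why $p=8$ forces $m=3$ and why nothing with $p \geq 9$ survives, and pushes all geometry into one clean realizability check at the end. What the paper's decomposition buys in exchange is that its case split on $p^f$ connects directly to the geometric structure of the arrangements (how many points are collinear), which is the information actually used downstream in Section~\ref{sec:minimal} when minimality of these candidates is decided.
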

\begin{proof}
Since $p \geq 7$, we have $p^d\geq 7-p^f$. Inserting this and  $l^f,l^a \geq 0$ into  \eqref{Dimeq}, we obtain
$11m-15\geq (2m-3)p^f+(m-1)(7-p^f)$, i.e. $2(p^f-4)\geq m(p^f-4)$.
This inequality shows that  $p^f>4$ would imply $2\geq m$, contradicting our assumption of $m\geq 3$. Thus, $p^f \leq 4$.
Since among $7$ or more points we must have at least two independent points,  we conclude $p^f\in \lbrace 2,3,4  \rbrace$.

If \underline{$p^f=4$}, then \eqref{Dimeq}  simplifies to  $
3(m-1)=(m-1)p^d+2(m-2)l^f+(m-2)l^a.$
Since $p^d = p - p^f \geq 3,m\geq 3$ and $l^a,l^f\geq 0$, the only solution of this equation is $p^d=3, l^f=l^a=0$, which holds for all $m$.

If \underline{$p^f=3$}, then  \eqref{Dimeq} becomes \vspace*{-2mm}
\begin{gather*}
    5m-6=(m-1)p^d+2(m-2)l^f+(m-2)l^a.
\end{gather*}
This equation shows that $p^d \geq 5$ is not possible, as it would require $2(m-2)l^f+(m-2)l^a\leq -1$.
Thus, $p^d \leq 4$, and due to $p \geq 7$ we have in fact $p^d=4$. Now, the equation reduces further to $
    m-2= 2(m-2)l^f+(m-2)l^a.$
Dividing by $m-2$, we obtain $1=2l^f+l^a$, which implies $l^f=0$ and $l^a=1$. This solution is valid for all $m$.

Inserting \underline{$p^f=2$} into \eqref{Dimeq},
 we observe similarly to the above that $p^d\geq7$ leads to a contradiction. Therefore, $p^d\in \lbrace 5,6\rbrace$.
For $p^d=5$, \eqref{Dimeq} reduces to $
2(m-2)=2(m-2)l^f+(m-2)l^a$, which has two solutions $(l^f,l^a)\in \lbrace(1,0),(0,2)\rbrace$, and both are independent of $m$. Finally,
$p^d=6$ simplifies  \eqref{Dimeq} to $
    m-3=2(m-2)l^f+(m-2)l^a$, which has the unique solution $m=3$ and $ l^f=l^a=0$.
\end{proof}

\begin{lemma}\label{cambound}
    A balanced PLP with  $m\geq 3$ and $p \leq 6$ satisfies $m\leq 9$.
\end{lemma}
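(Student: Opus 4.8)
The plan is to collapse the balanced equation \eqref{Dimeq} into a single clean Diophantine identity and read off the bound from it, rather than splitting into cases by the value of $p^f$ as in the proof of Lemma~\ref{sevpoint}. The first step is to regroup the right-hand side of \eqref{Dimeq} by powers of $m$: every coefficient there is affine in $m$, and collecting the $m$-linear part against the constant part recovers exactly the two dimension formulas in \eqref{eq:dimXY}. Concretely, I would observe that \eqref{Dimeq} is nothing but the equality $\dim(\text{domain})=\dim(\text{codomain})$, i.e.
\[
11m - 15 = m\,\dim\Ys - \dim\Xs .
\]

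The second step exploits the rigid relation between the two dimension formulas. From \eqref{eq:dimXY} one checks directly that $2\dim\Ys - \dim\Xs = (4p^f-3p^f)+(2p^d-p^d)+(4l^f-4l^f)+(2l^a-2l^a)=p^f+p^d=p$, so $\dim\Xs = 2\dim\Ys - p$. Substituting this into the display above yields $11m-15=(m-2)\dim\Ys+p$, and writing $11m-15 = 11(m-2)+7$ and moving the $11(m-2)$ term across produces the identity
\[
(m-2)\,(\dim\Ys - 11) = 7 - p ,
\]
which holds for \emph{every} balanced PLP.

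The bound is then immediate. Both $m$ and $\dim\Ys = 2p^f+p^d+2l^f+l^a$ are integers, so the two factors on the left are integers whose product equals $7-p$. Since $m\ge 3$ we have $m-2\ge 1$, and since $p\le 6$ we have $7-p\ge 1>0$; hence $\dim\Ys-11$ is forced to be a positive integer, so $\dim\Ys-11\ge 1$. Therefore
\[
m-2 = \frac{7-p}{\dim\Ys-11} \le 7-p \le 7 ,
\]
giving $m\le 9$, as claimed. The inequality is sharp: taking $p=0$ with $\dim\Ys-11=1$, realized by $(p^f,p^d,l^f,l^a)=(0,0,6,0)$, attains $m=9$.

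This computation is routine; the only step that needs care is the algebraic regrouping that produces the product form $(m-2)(\dim\Ys-11)=7-p$ — once found, it trivializes the lemma and even exhibits the extremal case transparently, so I would regard \emph{finding} this factorization as the only real obstacle. I would also note that no division is actually needed, since the product form sidesteps the case $\dim\Ys=11$ entirely; and that case cannot occur here anyway, as $\dim\Ys=11$ would force $\dim\Xs=2\dim\Ys-p=22-p\ge 16\neq 15$, contradicting \eqref{Dimeq}.
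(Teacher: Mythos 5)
Your proof is correct and is essentially the paper's own argument: your identity $(m-2)(\dim\Ys-11)=7-p$ is precisely the paper's rewriting $11(m-2)+7=(m-2)(2p^f+p^d+2l^f+l^a)+p^f+p^d$ with all terms collected on one side, and your positivity-and-integrality reading of the factors is equivalent to the paper's step of reducing modulo $m-2$ to get $7\equiv p \pmod{m-2}$, hence $m-2\le 7-p\le 7$ when $p\le 6$. The additions you make — identifying the cofactor as $\dim\Ys-11$ via $\dim\Xs=2\dim\Ys-p$, and the sharpness example $(p^f,p^d,l^f,l^a)=(0,0,6,0)$ with $m=9$ — are pleasant refinements, not a different route.
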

\begin{proof}
    We rewrite  \eqref{Dimeq} as $
        11(m-2)+7=
        (m-2)(2p^f+p^d+2\ell^f+\ell^a) +p^f+p^d.$
    Taking this  modulo $m-2$ yields  \vspace*{-2mm}\begin{gather}\label{Dimmod}
        7 \equiv p\mod  (m-2).
    \end{gather}
    So $m\geq 10$  yields $p\geq 7$, contradicting our assumption.
\end{proof}
\begin{proof} [Proof of Theorem~\ref{thm:balanced}]
    We start by considering the case $m=2$. Then,  \eqref{Dimeq} reduces to $7=p^f+p^d$ and there are no restrictions on the lines in the arrangements.

    Hence, we assume from now on $m\geq 3$. If $p\geq 7$, we  invoke Lemma~\ref{sevpoint} as a classification for all balanced PLPs in this case.
    Thus, we further assume $p\leq 6$. Then, we have  $3\leq m\leq 9$ by Lemma~\ref{cambound}. In this range, \eqref{Dimeq} has  finitely many solutions.
    In fact, it is straightforward to enumerate all solutions\footnote{\label{code} see code (lines 122-175) at \url{https://github.com/albinahlback/minimal_plps}} satisfying the conditions $p \leq 6$, $(p^d \geq 1 \Rightarrow p^f \geq 2)$,  $(l^a \geq 1 \Rightarrow p^f \geq 1)$: There are 123 solutions, listed in Table~\ref{tab:balanced} together with the single solution $(m,p^f,p^d,l^d,l^a)=(3,2,6,0,0)$ from Lemma~\ref{sevpoint}.
\end{proof}

\section{Checking minimality} \label{sec:minimal}

Here, we determine which of the balanced PLPs are actually minimal.
To reduce the number of PLPs to consider, we invoke the following standard homography-based certificates for non-minimality.
\begin{lemma}\label{lem:homography}
A minimal PLP with $\geq 2$ views cannot contain
\begin{itemize}
\item 
    4 or more collinear points, or
\item
    3 free points and 3 or more dependent points in the plane spanned by the free points.
\end{itemize}
\end{lemma}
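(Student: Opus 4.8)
The plan is to exploit a single mechanism behind both certificates: whenever a subarrangement is confined to a linear subspace (a world line or a world plane) and carries a \emph{positive-dimensional projective modulus} there, that modulus is transferred verbatim to every image and therefore forces an equation between the views that fails for generic image data, making the generic fiber of $\Fm$ empty. Concretely, for a generic camera--arrangement pair the camera center (the kernel of the $3\times 4$ matrix) lies off any fixed world line $L$ or world plane $\Pi$, so the restriction of the camera $P_k$ of view $k$, namely $P_k|_L \colon L \to \PP^2$ (resp.\ $P_k|_\Pi \colon \Pi \to \PP^2$), is a projective isomorphism onto its image, i.e.\ a homography. Homographies preserve all $\PGL{2}$- (resp.\ $\PGL{3}$-) invariants of point configurations, so the invariants of the image subconfiguration in view $k$ equal those of the \emph{fixed} world subconfiguration, independently of $k$. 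Hence, for the data to lie in the image of $\Fm$, the image subconfigurations in any two of the $\geq 2$ views must be projectively equivalent. If the world subconfiguration admits a nonconstant invariant, this is a nontrivial closed condition cutting out a proper subvariety of $\Ys^m$; a generic point of $\Ys^m$ violates it, its preimage under $\Fm$ is empty, and the PLP fails the non-emptiness requirement of Definition~\ref{def:minimal}.

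For the first bullet, suppose $X_1,X_2,X_3,X_4$ lie on a common world line $L$. In each view their images $x_{k,1},\dots,x_{k,4}$ lie on the image line $P_k|_L(L)$ and have cross-ratio equal to the view-independent cross-ratio of $X_1,\dots,X_4$ on $L$. Four points on $\PP^1\cong L$ carry exactly $4-\dim\PGL{2}=1$ modulus, namely this cross-ratio, so it is nonconstant. A generic element of $\Ys^m$ has these four image points either not collinear at all (when the spanning line is not an explicit line of the arrangement) or collinear but with cross-ratios differing between views; either way it is not in the image of $\Fm$, so the generic fiber is empty. The same argument applies verbatim to any $k\geq 4$ collinear points, which carry $k-3\geq 1$ moduli.

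For the second bullet, let $X_1,X_2,X_3$ be free points spanning a plane $\Pi$ and let $Y_1,\dots,Y_d$ with $d\geq 3$ be dependent points of the arrangement lying in $\Pi$, giving $n:=3+d\geq 6$ coplanar points. Each $P_k|_\Pi$ is a homography of $\PP^2$, so the image configurations of these $n$ points are pairwise projectively equivalent across views. The relevant count is the dimension of the space of such configurations in $\PP^2$ minus $\dim\PGL{3}$: the three free points contribute $6$ degrees of freedom, each dependent point contributes one degree of freedom along the line constraining it, and $\dim\PGL{3}=8$, so the expected number of moduli is $(6+d)-8=d-2\geq 1$. Thus there is a nonconstant $\PGL{3}$-invariant, the across-view projective-equivalence condition is proper and closed in $\Ys^m$, and a generic point has empty preimage under $\Fm$. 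The threshold is sharp: $3$ collinear points and the case of $3$ free plus $d=2$ dependent coplanar points both give exactly $0$ moduli, which is precisely why those cases are not excluded.

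The main obstacle is the bookkeeping in this last modulus count: one must check that the incidences in $\II$ do not accidentally collapse all moduli to zero, i.e.\ that the relevant $\PGL{2}$- or $\PGL{3}$-action on the confined subconfiguration has the expected orbit dimension once enough points are present, even when some dependent point is over-constrained (e.g.\ pinned as an intersection of two determined lines, contributing $0$ rather than $1$ degree of freedom). For the extreme case of three dependent points pinned to the three edges of the triangle $X_1X_2X_3$ one checks directly that fixing two of the edge points already trivializes the stabilizer, leaving the predicted single modulus; the remaining realizable incidence patterns are no more constrained and are treated the same way. One also records the routine genericity facts that the camera centers avoid $L$ and $\Pi$ and that $P_k|_L$, $P_k|_\Pi$ are isomorphisms, so that the cross-ratios and planar invariants are genuinely defined and transferred.
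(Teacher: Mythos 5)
Your proof is correct in substance, and its engine --- restricting each camera to the spanning line or plane, so that every $\PGL{2}$- resp.\ $\PGL{3}$-invariant of the confined subconfiguration is copied unchanged into each view, whence generic data in $\Ys^m$, whose per-view invariants vary independently, has empty preimage --- is precisely the homography/cross-ratio argument that the paper gives as its main-text proof of Lemma~\ref{lem:homography}. Where you genuinely diverge is from the paper's \emph{formal} proof, which it defers to the supplementary material: there the authors instead normalize the collinear (resp.\ coplanar) subarrangement to a reduced problem $(\CC',\XX')$ in the sense of Definition~\ref{def:reduced}, compute its stabilizer and reduced cameras explicitly, and feed them into the inequality~\eqref{prop:Ineq} of Proposition~\ref{Schlüssellemma}, which collapses to $0\geq(m-1)(p^d-1)$ in the collinear case and to $1\geq m$ in the coplanar case. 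Your route is more elementary and self-contained: it needs neither the reduced-problem formalism nor Proposition~\ref{Schlüssellemma}, it exhibits the obstruction concretely as a mismatch of cross-ratios/invariants between views, and it treats both bullets by one uniform moduli count. The paper's route buys uniformity with the stabilizer machinery it reuses for all its other non-minimality certificates, and it includes a combinatorial supporting-line analysis (reducing the coplanar case to the normal form of a triangle with one dependent point per edge) in place of your invariant-theoretic counting.

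The one step you must tighten is the degenerate-incidence caveat that you raise and then dismiss. The count ``each dependent point contributes one degree of freedom'' fails exactly when a dependent point is pinned, i.e.\ forced to be the intersection of two lines that are each already determined by other points of the subarrangement: with $d=3$ and the third point pinned at $\overline{X_2X_3}\cap\overline{Y_1Y_2}$, the count becomes $6+1+1+0-8=0$, there is no nonconstant invariant, and your closing claim that the remaining realizable patterns are ``no more constrained'' is false precisely where it matters. Note that such a pinned point is determined by the other points both in space and in every image, so deleting it changes nothing about the joint camera map $\Fm$; in particular a pinned point attached to a minimal configuration leaves it minimal, so the lemma can only hold under the standing assumption --- implicit in the paper's formalism and enumeration, and equally tacit in its supplementary proof when it asserts that the normalized $\XX'$ is dense in the subproblem --- that no dependent point is pinned. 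State that assumption (or handle pinned points by this redundancy observation) instead of the ``no more constrained'' claim; with that amendment your moduli-count argument is complete.
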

\begin{proof}
    Projecting collinear world points yields collinear points in each view and, when comparing 2 views, it induces a homography between the line spanned by the collinear points on the first view and that line on the second view.
    Since a homography between projective lines is uniquely determined by 3 points on each line, 4 or more collinear image points generally do not permit a reconstruction (for that, they would need to have the same cross ratios).
    Analogously, for 2 views showing 3 free points $x_1,x_2,x_3$ and 2 dependent points on the lines through $x_1,x_2$ and  $x_1,x_3$, respectively, the homography between the 2 image planes is uniquely determined. Thus, further dependent points generically forbid a reconstruction. We give a more formal proof using our stabilizer techniques in the SM.
\end{proof}  \vspace*{-1mm}
This lemma directly excludes the balanced PLPs from Theorem~\ref{thm:balanced}~\ref{item:thm:balanced:settled_cases} with $(p^f,p^d)\!\in\! \{ (3,4), \, (2,5) \}$ to be minimal.
In fact, the only 7-point configurations not excluded by the lemma are shown in Table~\ref{tab:7Pts} (for a proof, see Lemma~\ref{lem:minimalPointArrangement}).
Theorem~\ref{thm:main} \ref{item:thm:main:a} and \ref{item:thm:main:b} say that all of those remaining configurations indeed yield  minimal PLPs.
We prove this using the Jacobian check, which we explain~next.

\subsection{Fiber-dimension theorem \& Jacobian check} \label{subsec:Jacobian}
First, we formally explain the relation between minimal and balanced PLPs.
For that, we  use  the fiber-dimension theorem \cite[Theorem 1.25]{ShafarevichHirsch94}), which extends the rank-nullity theorem from linear algebra to polynomial maps.
\begin{theorem}[Fiber-Dimension] \label{thm:fiberDim}
    Let $\Phi: \mathcal{Z} \to \mathcal{Y}$ be an algebraic map of irreducible varieties. Over~$\mathbb{C}$, almost  all $z \in \mathcal{Z}$ satisfy
    \mbox{$\dim \mathcal{Z} = \dim \im(\Phi) + \dim \Phi^{-1} (\Phi(z))$}.
\end{theorem}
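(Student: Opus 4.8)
The plan is to prove this as the classical theorem on the dimension of fibres, reducing to the dominant case and then bounding the generic fibre dimension from both sides. First I would reduce to the situation where $\Phi$ is dominant: replacing $\mathcal{Y}$ by the closure $\overline{\im(\Phi)}$ leaves $\dim \im(\Phi)$ unchanged and, since the image of an algebraic map contains a dense open subset of its closure (Chevalley's theorem), does not affect the fibre over a generic point of $\im(\Phi)$. Thus I may assume $\Phi$ is dominant and must show that $\dim \Phi^{-1}(\Phi(z)) = d$ for generic $z$, where $d := \dim \mathcal{Z} - \dim \mathcal{Y}$.

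A dominant morphism induces an inclusion of function fields $\Phi^{*} : k(\mathcal{Y}) \hookrightarrow k(\mathcal{Z})$, and $k(\mathcal{Z})$ is a finitely generated extension of $k(\mathcal{Y})$ of transcendence degree exactly $d$. I would establish the two inequalities separately. For the lower bound $\dim \Phi^{-1}(\Phi(z)) \geq d$ -- valid for every nonempty fibre -- I localise near $y = \Phi(z)$: since $\dim \mathcal{Y} = \dim \mathcal{Z} - d$, a neighbourhood of $y$ in $\mathcal{Y}$ is cut out (up to passing to an irreducible component through $y$) by $\dim \mathcal{Y}$ regular functions, whose pullbacks cut out $\Phi^{-1}(y)$ inside $\mathcal{Z}$. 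Krull's principal ideal theorem guarantees that each such equation lowers dimension by at most one, giving $\dim \Phi^{-1}(y) \geq \dim \mathcal{Z} - \dim \mathcal{Y} = d$.

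For the upper bound I would choose $t_1, \ldots, t_d \in k(\mathcal{Z})$ forming a transcendence basis of $k(\mathcal{Z})$ over $k(\mathcal{Y})$. These rational functions, together with $\Phi$, define a dominant rational map $\mathcal{Z} \dashrightarrow \mathcal{Y} \times \mathbb{A}^d$ that is generically finite, because $k(\mathcal{Z})$ is algebraic over $k(\mathcal{Y})(t_1, \ldots, t_d) = k(\mathcal{Y} \times \mathbb{A}^d)$. A generically finite dominant map has zero-dimensional generic fibres; restricting to the fibre of $\Phi$ over a generic $y$, the induced map $\Phi^{-1}(y) \dashrightarrow \mathbb{A}^d$ is then dominant with finite generic fibres, forcing $\dim \Phi^{-1}(y) \leq d$. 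Combined with the lower bound, this yields equality on a dense open subset $U \subseteq \mathcal{Y}$, and since $\Phi$ is dominant, $\Phi^{-1}(U)$ is a dense open subset of $\mathcal{Z}$, giving the statement for almost all $z$.

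The main obstacle I anticipate is making the upper bound genuinely \emph{generic} rather than merely valid at the level of function fields: the transcendence-basis construction and the generic finiteness are birational data, so I must ``spread out'' them to an honest dense open locus over which the bound holds simultaneously. This is where I would invoke the constructibility and upper semicontinuity of the fibre-dimension function $y \mapsto \dim \Phi^{-1}(y)$ together with generic flatness, both available over $\mathbb{C}$; these ensure that a single open set $U$ works and that the exceptional locus where the fibre dimension jumps is a proper closed subset of $\mathcal{Y}$.
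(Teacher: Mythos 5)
The paper never actually proves this statement: Theorem \ref{thm:fiberDim} is quoted as a classical fact with a pointer to \cite[Theorem 1.25]{ShafarevichHirsch94}, so there is no internal proof to measure yours against. What you have written is a sketch of the standard textbook argument --- reduce to a dominant map via Chevalley, obtain the lower bound $\dim \Phi^{-1}(y) \geq \dim\mathcal{Z} - \dim\mathcal{Y}$ on \emph{every} nonempty fibre by cutting out $y$ locally by $\dim\mathcal{Y}$ equations and applying Krull's height theorem, and obtain the generic upper bound from a transcendence basis of $k(\mathcal{Z})$ over $k(\mathcal{Y})$, which yields a generically finite dominant map $\mathcal{Z} \dashrightarrow \mathcal{Y}\times\mathbb{A}^d$. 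This is in substance the proof found in the very reference the paper cites, so your route and the paper's source coincide; the paper simply outsources the argument.

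The one step you should tighten is the final patching argument. The fibre $\Phi^{-1}(y)$ can have components lying entirely outside the open set on which your transcendence-basis map is defined and finite, and you propose to control these via ``upper semicontinuity of the fibre-dimension function''. Be careful here: on the target, the function $y \mapsto \dim\Phi^{-1}(y)$ is in general only constructible (it is upper semicontinuous for proper maps), and the semicontinuity statement that does hold in general --- that $z \mapsto \dim_z \Phi^{-1}(\Phi(z))$ is upper semicontinuous on the source --- is, in standard treatments, a \emph{corollary} of the very theorem you are proving, derived from it by Noetherian induction; invoking it at this point is therefore circular. The non-circular ways to finish are exactly the two you brush against: either run Noetherian induction directly on the irreducible components of the closed complement (each component either fails to dominate $\mathcal{Y}$, so a generic $y$ misses its image, or dominates $\mathcal{Y}$ with strictly smaller total dimension, so induction bounds its generic fibre dimension by $d-1$), or invoke generic flatness together with the dimension formula for flat morphisms, which is genuinely independent of the statement. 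With either completion your proof is correct.
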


\begin{lemma} \label{lem:balancedVsMinimal}
    Every minimal PLP is balanced. Moreover, a balanced PLP $(p,l,\II,m)$ is minimal if and only if
    $\dim \im (\Fm) = \dim \Ys^{m}$.
\end{lemma}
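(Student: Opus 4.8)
The plan is to apply the Fiber-Dimension Theorem~\ref{thm:fiberDim} to the joint camera map $\Fm$, regarded as an algebraic map between the irreducible varieties $(\Cs \times \Xs)/\PGL{4}$ and $\Ys^m$. First I would check these irreducibility hypotheses: the camera space $\Cs$ is a product of projective spaces, and the incidence variety $\Xs$ is irreducible because the incidences are realizable and complete, so their product is irreducible; passing to the quotient by $\PGL{4}$ preserves irreducibility, as does taking the $m$-fold product $\Ys^m$ of incidence varieties. Writing $Z := (\Cs \times \Xs)/\PGL{4}$, Theorem~\ref{thm:fiberDim} then gives, for almost all $z \in Z$, the identity $\dim Z = \dim \im(\Fm) + \dim \Fm^{-1}(\Fm(z))$, whose second summand I will call the generic fiber dimension.

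For the first claim, I would assume the PLP is minimal, so that a generic $y \in \Ys^m$ has a non-empty and finite preimage. Non-emptiness for generic $y$ means that $\im(\Fm)$ is dense, hence $\overline{\im(\Fm)} = \Ys^m$ by irreducibility of $\Ys^m$, so $\dim \im(\Fm) = \dim \Ys^m$. Finiteness forces the generic fiber dimension to vanish. Substituting both facts into the fiber-dimension identity yields $\dim Z = \dim \Ys^m$, which is precisely the balancedness condition; thus every minimal PLP is balanced.

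For the equivalence I would assume the PLP is balanced, i.e.\ $\dim Z = \dim \Ys^m$. The forward implication is exactly the first claim. For the converse, I would suppose $\dim \im(\Fm) = \dim \Ys^m$; since $\Ys^m$ is irreducible this means $\im(\Fm)$ is dense, and by Chevalley's theorem the image is constructible and therefore contains a dense open subset of $\Ys^m$, so a generic $y$ has non-empty preimage. At the same time, the fiber-dimension identity gives the generic fiber dimension as $\dim Z - \dim \im(\Fm) = \dim \Ys^m - \dim \Ys^m = 0$, so generic fibers are finite. Hence the generic preimage is non-empty and finite, and the PLP is minimal.

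The main obstacle will be the bookkeeping that reconciles the two notions of ``generic fiber.'' Theorem~\ref{thm:fiberDim} controls the fiber over the image of a generic \emph{domain} point $z$, whereas minimality is phrased via a generic \emph{codomain} point $y$. These two coincide precisely because the image is dense and constructible: a generic $y$ then lies in the dense open subset of $\Ys^m$ contained in $\im(\Fm)$, and a generic $z$ maps into that same subset, so both fibers realize the generic fiber dimension. Making this identification careful—and invoking Chevalley's theorem to pass from density of the image to non-emptiness of the fiber over a generic $y$—is the crux; the accompanying dimension counts are then immediate from Theorem~\ref{thm:fiberDim}.
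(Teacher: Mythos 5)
Your proposal is correct and follows essentially the same route as the paper's proof: both apply the Fiber-Dimension Theorem to $\Fm$, identify minimality with the two conditions of a full-dimensional image (via irreducibility of $\Ys^m$) and zero-dimensional generic fibers, and read off balancedness and the stated equivalence from the dimension identity. The only difference is one of detail: you make explicit the constructibility of the image (Chevalley) and the reconciliation of fibers over generic codomain points with fibers through generic domain points, steps the paper's terser argument leaves implicit.
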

\begin{proof}
We shortly write $\mathcal{Z}$ and $\mathcal{Y}$ for domain and codomain of the joint camera map $\Phi := \Fm$.
Both varieties $\mathcal{Z}$ and $\mathcal{Y}$ are irreducible.
The PLP being minimal means that:\\
1) almost all points in $\mathcal{Y}$ are in $\im(\Phi)$, which is equivalent to $\dim \im(\Phi) = \dim \mathcal{Y}$ due to the irreducibility of $\mathcal{Y}$, and\\
2) almost all points $y$ in the image have a finite fiber $\Phi^{-1}(y)$, which is equivalent to $\dim \Phi^{-1} (\Phi(z)) = 0$ for almost all $z \in \mathcal{Z}$.
\quad
Plugging both conditions of minimality into the fiber-dimension theorem, we obtain $\dim \mathcal{Z} = \dim \mathcal{Y}$, i.e., balancedness.

For the second part of the claim, we assume that the given PLP is balanced, i.e., $\dim \mathcal{Z} = \dim \mathcal{Y}$.
If furthermore $\dim \im(\Phi) = \dim \mathcal{Y}$ holds (which is the first condition of minimality), then the fiber-dimension theorem implies that $\dim \Phi^{-1} (\Phi(z)) = 0$ for almost all $z \in \mathcal{Z}$ (which is the second condition of minimality).
\end{proof}

Hence, to prove our Main Theorem~\ref{thm:main}, it is enough to compute the dimension of the image of the joint camera map for all balanced PLPs  found in Theorem~\ref{thm:balanced}.
This can be done via Jacobian matrices \cite[Chapter 6, Lemma 2.4]{ShafarevichHirsch94}.

\begin{prop}[Jacobian check]
    Let $\Phi: \mathcal{Z} \to \mathcal{Y}$ be an algebraic map of irreducible varieties, and let $J_z \Phi$ be its Jacobian matrix at $z \in Z$. Then,
    $ \rank J_z \Phi \leq \dim \im(\Phi)$, and equality holds for  almost all $z \in \mathcal{Z}$.
\end{prop}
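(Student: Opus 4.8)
The plan is to reduce everything to a single statement about the \emph{generic} rank of the differential and then propagate it to all points by semicontinuity. I read $J_z\Phi$ as the matrix representing the differential $d\Phi_z \colon T_z\mathcal{Z} \to T_{\Phi(z)}\mathcal{Y}$ on the (Zariski) tangent spaces, so that $\rank J_z \Phi = \rank d\Phi_z$. First I would replace the codomain by $\mathcal{W} := \overline{\im(\Phi)}$, the closure of the image. This is an irreducible variety with $\dim \mathcal{W} = \dim\im(\Phi)$, the map $\Phi\colon \mathcal{Z}\to\mathcal{W}$ is dominant, and since $\Phi$ factors through the closed embedding $\mathcal{W}\hookrightarrow\mathcal{Y}$, the differential factors as $T_z\mathcal{Z} \to T_{\Phi(z)}\mathcal{W}\hookrightarrow T_{\Phi(z)}\mathcal{Y}$. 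As the last inclusion is injective, $\rank d\Phi_z$ is unchanged whether we compute it into $\mathcal{Y}$ or into $\mathcal{W}$. Hence it suffices to treat a dominant map onto an irreducible variety of dimension $d := \dim\im(\Phi)$.

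Next I would establish the two halves of the generic equality $\rank d\Phi_z = d$. For the upper bound, at a generic $z$ the point $\Phi(z)$ is a smooth point of $\mathcal{W}$, so $\dim T_{\Phi(z)}\mathcal{W} = d$ and therefore $\rank d\Phi_z \le d$. For the lower bound I would invoke generic smoothness: over $\mathbb{C}$ a dominant morphism of irreducible varieties is smooth over a dense open subset of the target (equivalently, the generic fiber is smooth, because we are in characteristic $0$ and the induced function-field extension is separable). On the corresponding dense open locus of $\mathcal{Z}$ the differential $d\Phi_z$ is surjective onto $T_{\Phi(z)}\mathcal{W}$, so $\rank d\Phi_z = d$. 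Together these give $\rank d\Phi_z = d$ for all $z$ in some dense open subset $U\subseteq\mathcal{Z}$.

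Finally I would upgrade this to all of $\mathcal{Z}$ using semicontinuity of the rank. The entries of $J_z\Phi$ are regular functions of $z$, so for each $r$ the locus $\{z : \rank J_z\Phi \ge r\}$ is open, being cut out by the non-vanishing of some $r\times r$ minor. Because $\mathcal{Z}$ is irreducible, any nonempty open set is dense; hence the generic value $d$ is in fact the \emph{maximum} of $\rank J_z\Phi$ over all $z$, since a point of rank $>d$ would force the open set $\{z : \rank J_z\Phi \ge d+1\}$ to be nonempty and thus to meet $U$, a contradiction. This simultaneously yields $\rank J_z\Phi \le d$ for every $z$ and $\rank J_z\Phi = d$ on the dense open set $U$, which is the assertion.

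The step I expect to be the main obstacle is the generic lower bound $\rank d\Phi_z \ge d$: this is precisely where the hypothesis ``over $\mathbb{C}$'' (characteristic $0$) is essential, via generic smoothness and separability of the generic fiber --- in positive characteristic an inseparable dominant map can have generic rank strictly below $d$, so the clean statement would fail. Everything else (the reduction to a dominant map, the smooth-point upper bound, and the semicontinuity propagation) is formal once the differential is identified with the Jacobian on tangent spaces.
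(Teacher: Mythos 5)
The paper itself offers no proof of this proposition --- it is quoted directly from the literature (the citation to Shafarevich) --- so there is no internal argument to match; what you supply is the standard textbook proof, and its skeleton is sound. The reduction to the dominant map onto $\mathcal{W}=\overline{\im(\Phi)}$, the use of generic smoothness in characteristic $0$ to make $d\Phi_z$ surject onto $T_{\Phi(z)}\mathcal{W}$ at a smooth image point (giving generic rank $d=\dim\im(\Phi)$), and the identification of characteristic $0$ as the truly essential hypothesis (Frobenius gives dominant maps with identically vanishing differential in characteristic $p$) are all correct. One small repair: generic smoothness needs a smooth source, so it should be applied to the restriction of $\Phi$ to the smooth locus $\mathcal{Z}^{\mathrm{sm}}$ (and over $\mathcal{W}^{\mathrm{sm}}$); this is harmless since you only assert a generic statement and $\mathcal{Z}^{\mathrm{sm}}$ is dense open.

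There is, however, a genuine gap in the propagation step, under your own reading of $J_z\Phi$ as the differential on Zariski tangent spaces. The claim ``the entries of $J_z\Phi$ are regular functions of $z$, so $\{z:\rank J_z\Phi\ge r\}$ is open'' fails at singular points of $\mathcal{Z}$: there the Zariski tangent spaces jump in dimension, they do not form a vector bundle, and the rank of $d\Phi_z$ can jump \emph{up}, not down. Concretely, let $\mathcal{Z}=V\bigl(y^2-x^2(x+1)\bigr)\subset\mathbb{A}^2$ be the (irreducible) nodal cubic and $\Phi$ its inclusion into $\mathcal{Y}=\mathbb{A}^2$. Then $\dim\im(\Phi)=1$, but at the node $z=(0,0)$ the tangent space $T_z\mathcal{Z}$ is the full two-dimensional ambient space and $d\Phi_z$ is injective, so $\rank J_z\Phi=2$: both the openness of the rank locus and the inequality $\rank J_z\Phi\le\dim\im(\Phi)$ fail at $z$. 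So the ``for all $z$'' half of the statement cannot be proved as written --- it is genuinely a statement about smooth points. Your argument (and the proposition) become correct verbatim once you either assume $\mathcal{Z}$ smooth or restrict the universal quantifier to $z\in\mathcal{Z}^{\mathrm{sm}}$; the ``almost all'' clause is unaffected, and this restricted version is exactly what the paper uses, since every Jacobian there is evaluated on an explicit smooth polynomial chart of the domain. (At such smooth points one can also get the inequality directly, without semicontinuity: $\ker d\Phi_z$ contains the tangent space to the fiber through $z$, whose dimension is at least $\dim\mathcal{Z}-\dim\im(\Phi)$ by the fiber-dimension theorem.)
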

\begin{lemma} \label{lem:jacobianMinimal}
    A balanced PLP $(p,l,\II,m)$ is minimal if and only if
    there exists a point in the domain of  $\Fm$ where 
    its Jacobian matrix is invertible.
\end{lemma}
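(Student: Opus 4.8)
The plan is to combine the Jacobian-check Proposition with the dimensional characterization of minimality from Lemma~\ref{lem:balancedVsMinimal}, using balancedness precisely to turn the Jacobian into a \emph{square} matrix. I would write $\mathcal{Z} := (\Cs \times \Xs)/\PGL{4}$ and $\mathcal{Y} := \Ys^m$ for the domain and codomain of $\Phi := \Fm$; as noted in the proof of Lemma~\ref{lem:balancedVsMinimal}, both are irreducible. Since the PLP is balanced, $\dim \mathcal{Z} = \dim \mathcal{Y}$. Hence at a smooth point $z$ the differential $J_z\Phi$ maps between tangent spaces of equal dimension $\dim\mathcal{Z} = \dim\mathcal{Y}$, so it is represented by a square matrix, and ``invertible'' is synonymous with ``full rank $\dim\mathcal{Y}$''. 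By Lemma~\ref{lem:balancedVsMinimal}, minimality of the (balanced) PLP is equivalent to $\dim\im(\Phi) = \dim\mathcal{Y}$, so the task reduces to relating this image dimension to the existence of an invertible Jacobian.

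For the forward direction, I would assume the PLP is minimal, so $\dim\im(\Phi) = \dim\mathcal{Y} = \dim\mathcal{Z}$. The Jacobian-check Proposition then gives $\rank J_z\Phi = \dim\im(\Phi) = \dim\mathcal{Z}$ for almost all $z \in \mathcal{Z}$; at any such generic (hence smooth) point the square matrix $J_z\Phi$ has full rank and is therefore invertible, which establishes existence (indeed, the Jacobian is invertible on a dense open subset). For the converse, I would assume that $J_{z_0}\Phi$ is invertible at some $z_0$, i.e.\ $\rank J_{z_0}\Phi = \dim\mathcal{Y}$. The Jacobian-check Proposition yields $\rank J_{z_0}\Phi \leq \dim\im(\Phi)$, while $\im(\Phi) \subseteq \mathcal{Y}$ forces $\dim\im(\Phi) \leq \dim\mathcal{Y}$. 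Chaining these gives $\dim\im(\Phi) = \dim\mathcal{Y}$, and Lemma~\ref{lem:balancedVsMinimal} then delivers minimality.

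Logically the argument is short, so the hard part will not be the deduction but the setup that legitimizes speaking of an invertible matrix at all. Concretely, I would have to fix explicit local parametrizations of $\mathcal{Z}$ and $\mathcal{Y}$ -- for instance the camera normalization recalled after \eqref{eq:dimDomain} for the domain -- so that $J_{z_0}\Phi$ is genuinely a $\dim\mathcal{Y} \times \dim\mathcal{Z}$ matrix, and I would have to ensure the chosen $z_0$ lies in the smooth locus where the tangent spaces have the expected dimensions (otherwise the rank bound of the Proposition need not be attained). Once these technicalities are settled, the equivalence is immediate. Its practical payoff is exactly the point worth stressing: verifying minimality of a balanced PLP collapses to computing the rank of a single square Jacobian at one conveniently chosen point, which is what makes the check algorithmically cheap.
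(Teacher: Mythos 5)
Your proof is correct and takes essentially the same route as the paper's: both directions combine Lemma~\ref{lem:balancedVsMinimal} with the Jacobian check, using balancedness to make the Jacobian square and the containment $\im(\Fm) \subseteq \Ys^m$ to upgrade the inequality to an equality in the converse. The extra care you flag about smooth points and explicit local parametrizations is reasonable but does not change the argument, which matches the paper's.
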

\begin{proof}
    For a minimal PLP, Lemma~\ref{lem:balancedVsMinimal} and the Jacobian check together state that the rank of the Jacobian matrix equals $ \dim \Ys^{m}$ at almost every point in the domain. Those matrices are invertible due to the balancedness assumption.
    In fact, if one finds a single point
    with an invertible Jacobian matrix, then balancedness and the Jacobian check yield $ \dim \Ys^{m} \leq \dim \im(\Fm)$. This has to be an equality since the latter variety is contained in the former, and so Lemma~\ref{lem:balancedVsMinimal} shows minimality.
\end{proof}

\subsection{Proof of main theorem}
\label{subsec:mainProof}
Here, we explain the proof of Theorem~\ref{thm:main}, except for the degree computations that are discussed in Section~\ref{sec:degree}.
We consider the list of balanced PLPs described in Theorem~\ref{thm:balanced} and determine their minimality as follows.

\smallskip \noindent \textbf{a)}
As mentioned above, Lemma~\ref{lem:homography} excludes all balanced  PLPs from Theorem~\ref{thm:balanced} \ref{item:thm:balanced:m_2-pt_7}, except those in Table~\ref{tab:7Pts}.
For any camera pair, lines in their 2 views yield a unique world line, so  minimality (and degree) of a PLP with 2 cameras is not affected by the presence of additional lines.
For each of the 2-view PLPs in Table~\ref{tab:7Pts}, we verify computationally (see code\footref{code} lines 1699-1727) that  some Jacobian matrix of the joint camera map is invertible, showing minimality by Lemma~\ref{lem:jacobianMinimal}.

\begin{remark}
    To speed up computations, we  compute the rank of the Jacobian matrices over finite fields of large prime order. Indeed, if the Jacobian matrix at a rational point is invertible over a finite field, then it is also invertible over $\mathbb{Q}$; see \cite[Section 13.2]{duff2024pl}. \hfill $\diamondsuit$
\end{remark}

\smallskip \noindent \textbf{b)}
Lemma~\ref{lem:homography} eliminates all balanced PLPs from Theorem~\ref{thm:balanced} \ref{item:thm:balanced:settled_cases} except the right-most two in Table~\ref{tab:7Pts}. We can prove their minimality for $m \geq 2$ cameras directly.
 Generically, the four free world points form a basis. Depending on whether the configuration corresponds to the last or penultimate entry of Table~\ref{tab:7Pts}, there is a transformation in $ \PGL{4}$ that maps the free points to the standard unit vectors $e_1, \ldots, e_4$ and the dependent points to either $e_1+e_2, e_2+e_3,e_3+e_4$, or $e_1+e_2,e_1+e_3,e_1+e_4$. This transformatiom is unique, thus the joint camera map $(\Cs \times \XX) / \PGL{4} \to \YY$ can be seen as the linear map $(P_k)\mapsto ((P_k\cdot e_i), (P_k\cdot e_i+e_j))$, turning the reconstruction problem into a linear one. Furthermore, we see that the matrix defining this linear map has full rank implying the existence of a unique solution.

\smallskip \noindent \textbf{c)}
Recall that the tuples in Table~\ref{tab:balanced} generally encode several balanced PLPs.
Hence, as a first step, we need to distinguish the different PLPs encoded by a tuple $(p^f, p^d, l^f, l^a)$.
Lemma~\ref{lem:homography} disqualifies many PLPs from being minimal, and we only have to distinguish the remaining ones.
The $p^f + p^d$ many points have to be arranged as shown in Table~\ref{tab:atMost6Pts} (see Lemma~\ref{lem:minimalPointArrangement}).
Then, there are generally several (non-equivalent) ways how the $l^a$ adjacent lines can be attached to the points.
This process turns the 124 tuples from Table~\ref{tab:balanced} into 434 balanced PLPs that are candidates for being minimal (see code\footref{code} lines 394-548).

For each of the 434, we pick a random point in the domain of the joint camera map and check whether the Jacobian is invertible over a large finite field (see code\footref{code} lines 1026-1050). After repeating this for several random points, we verified that 285 of the balanced PLPs are indeed minimal by Lemma~\ref{lem:jacobianMinimal}.
This strongly suggests that the remaining 149 balanced PLPs are not minimal,
 but it does \emph{not} provide a formal proof for non-minimality.
Therefore, in contrast to previous works on minimal-problem classification \cite{plmp,duff2024pl} which have been satisfied with this computational evidence, we provide formal strategies in the following.

\subsection{Non-minimality criteria}
\label{subsec:nonMinimal}
In a square system of polynomial equations (i.e.,  same number of unknows as equations) that generically does not have any solution, one can often identify an overconstrained subsystem to show that the original  system does indeed not have any solution generically.
In our setting, this means that in a given balanced PLP $(p,l,\II,m)$ that is non-minimal, we hope to find a subarrangement of its point-line arrangement  giving rise to a PLP $(p',l',\II',m)$ with fewer unknowns and fewer constraints that overconstrain \emph{some} of the camera parameters.
To find out which exact camera parameters those are,  we normalize -- by modding out the $\PGL{4}$-action -- as many of the unknowns in the points and lines in the subarrangement as possible.
\emph{If} then all normalized subarrangements $A \in \XX_{p',l',\II'}$ have the same stabilizer subgroup $\Stab(A) := \{ H \in \PGL{4} \mid H \cdot A = A \} $,  we can mod out this stabilizer from \emph{each} of the cameras \emph{independently}. 
We will see that the remaining camera parameters are often those that are overconstrained in the subproblem. 
Formally, we consider subproblems satisfying the following:

\begin{definition}\label{def:reduced}
We call a pair of varieties $(\CC',\XX')$ a \emph{reduced problem} for the PLP $(p',l',\II',m)$ if:
\begin{enumerate}
\item $\XX'\subset \XX_{p',l',\II'}$  and $\CC'\subset \PP(\RR^{3 \times 4})$ are  subvarieties,
\item for all $ A_1,A_2\in \XX'$, we have $\Stab(A_1)=\Stab(A_2)$,
\\
(we denote this group by $\Stab(\XX')$)
\item $\dim (\PGL{4}\cdot \XX') = \dim (\XX_{p',l',\II'})$, and
\item $\dim (\CC'\cdot \Stab(\XX')) = 11$.
\end{enumerate}
    The \emph{reduced joint camera map} associated to the reduced problem $(\CC',\XX')$ is defined by \vspace*{-2mm}
    \begin{align*}
      \FF^{\mathrm{red}}_{p',l',\II',m}:  (\CC')^m\times \XX' & \,\dashrightarrow
       \YY_{p',l',\II'}^m\\
        (P_1,...,P_m,A) &\,\mapsto (P_1\cdot A,...,P_m\cdot A).
    \end{align*}
\end{definition}

\begin{example}\label{ex:running1}
    We consider the balanced PLP where 8 cameras observe 2 free lines and 6 lines intersecting at a single point. We will show in Example \ref{ex:running2} that it is not minimal.
    For that, we consider its subarrangements obtained by removing the 2 free lines, i.e.,
    $(p',l',\II',m) = (1,6,\{ 1 \} \times \{ 1,\ldots,6 \},8)$.
    To define $\XX'$, we let $e_i \in \mathbb{P}^3$ be the standard unit vectors, set $e := e_2+e_3+e_4$, and let $\overline{Q_1Q_2}$ be the line spanned by the points $Q_1,Q_2 \in \PP^3$.
   Then, $\XX' := \{ (e_1, \overline{e_1 e_2}, \overline{e_1 e_3}, \overline{e_1 e_4}, \overline{e_1 e}, \overline{e_1 Q_1}, \overline{e_1 Q_2}) \mid Q_i \!\neq\! e_1 \} $.

    First, we compute the stabilizer of elements in $\XX'$. Let $H \in \PGL{4}$ be in the stabilizer. Fixing the projective point $e_1$ means $He_1=\lambda_1e_1$ for some scalar $\lambda_1\neq 0$. Similarly, fixing the first three lines implies that $e_i$ (for $i\geq 2$) gets mapped to  a linear combination of $e_1$ and $e_i$. Thus, 
    \vspace*{-2mm}
    \begin{gather*}
    H=\left[\begin{smallmatrix}
        \lambda_1 & \lambda_2 &\lambda_3 & \lambda_4\\
    0 &\mu_1 &0&0\\
    0&0&\mu_2&0\\
    0&0&0&\mu_3
    \end{smallmatrix} \right].
\end{gather*} 
Fixing the fourth line enforces that $He = \lambda e_1 + \mu e$. Combining this with $He=He_1+He_2+He_3$, we see that  $\mu = \mu_1=\mu_2=\mu_3$. 
Now a straightforward computation shows that the line passing through $e_1$ and an arbitrary point $Q$ is already fixed by any such $H$. Thus, all elements in $\XX'$ have the same stabilizer subgroup:\vspace*{-2mm}
\begin{gather*}
    \Stab(\XX')=\left\lbrace \begin{bsmallmatrix}
        \lambda_1 & \lambda_2 &\lambda_3 & \lambda_4\\
    0 &\mu &0&0\\
    0&0&\mu&0\\
    0&0&0&\mu
    \end{bsmallmatrix} \right\rbrace \subset \PGL{4}.
\end{gather*}

Second, we verify that the orbit $\PGL{4} \cdot \XX'$ has the same dimension as $\XX_{p',l',\II'}$. For this it suffices to show that almost every arrangement  $A \in \XX_{p',l',\II'}$ can be brought into the form in $\XX'$ by a projective transformation. 
The point in the arrangement $A$ can be mapped linearly to $e_1$. Then, we are left with an arrangement of lines attached to $e_1$.
These lines intersect the plane spanned by $e_2,e_3,e_4$. Generically, the intersections of the first four lines lie in general position in this plane. Now we can choose a homography of this plane to map the intersections to $e_2,e_3,e_4$ and $e$, showing that $A$ can indeed be mapped linearly to an element of~$\XX'$.

Lastly, we define \vspace*{-2mm}
\begin{gather*}
  \CC'
:=
  \left\{
    \begin{bsmallmatrix}
      1 & 1 & 1 &1\\
      c_1 & c_2 & c_3 &c_4\\
      c_5 & c_6 & c_7 &1
    \end{bsmallmatrix}
  \mathrel{\Big\vert}
    c_i \in \mathbb{R}
  \right\}
\subset
  \mathbb{P}(\RR^{3 \times 4})
\end{gather*}
and show that the dimension of the orbit $\CC'\cdot\Stab(\XX')$ is $11$. We compute how elements in the orbit look like:
\vspace*{-2mm}
\begin{multline}\label{ReintroStab}
  \begin{bsmallmatrix}
    1 & 1 & 1 &1\\
    c_1 & c_2 & c_3 &c_4\\
    c_5 & c_6 & c_7 &1
  \end{bsmallmatrix}
  \begin{bsmallmatrix}
    \lambda_1 & \lambda_2 &\lambda_3 & \lambda_4\\
    0 &1 &0&0\\
    0&0&1&0\\
    0&0&0&1
  \end{bsmallmatrix}
\\
=
  \begin{bsmallmatrix}
    \lambda_1 & \lambda_2+1 & \lambda_3+1 &\lambda_4+1\\
    \lambda_1c_1 & \lambda_2c_1+c_2 & \lambda_3c_1+c_3 &\lambda_4c_1+c_4\\
    \lambda_1c_5 & \lambda_2c_5+c_6 & \lambda_3c_5+c_7 &\lambda_4c_5+1
  \end{bsmallmatrix}.
\end{multline}
Indeed, we see that the latter family of matrices has $11$ degrees of freedom. 
Thus, we have shown that $(\mathcal{C}',\mathcal{X}')$ is a reduced problem for the PLP $(p',l',\mathcal{I}',m)$. 
\hfill $\diamondsuit$
\end{example}

   The property $\dim (\CC'\cdot\Stab(\XX'))=11$ shows that the reduced joint camera map observes as much as the original joint camera map. This is because it implies that, for a generic camera $P$, there is a matrix $H\in \Stab(\mathcal{X}')$ and a reduced camera $P'\in \mathcal{C}'$ with $P=P'H$. For any  arrangement $A\in \mathcal{X}'$, we then have by definition of the stabilizer 
   \vspace*{-2mm}
   \begin{gather} \label{eq:stabCameraImpact}
       P\cdot A=P'\cdot (H\cdot A)=P'\cdot A.
   \end{gather}
   Therefore, the images of the reduced camera agree with the images of the full camera. 
   This allows us to mod out the stabilizer from \emph{every} camera, which is in contrast to the standard approach of modding out the global $\mathrm{PGL}_4$-action by fixing some coordinates of the first two cameras only.
   (Also note that, by property (iii) in Definition \ref{def:reduced}, almost any point-line subarrangement can be transformed into reduced form, and so it is not too restrictive to only consider arrangements in $\mathcal{X'}$ in \eqref{eq:stabCameraImpact}.)
   
   In particular, we get as a consequence that we can reconstruct the subproblem with the full camera model if and only if we can reconstruct it when using the reduced cameras. Hence, if the original PLP were minimal, the dimension of the domain of the reduced joint camera map must be greater or equal than the dimension of its codomain. Formally:

\begin{prop}\label{Schlüssellemma}
Assume that the joint camera map has full-dimensional image:     $\dim \im (\Fm) = \dim \Ys^{m}$.
Then, for any sub-PLP $(p'\!,l',\!\mathcal{I}',\!m)$ with associated reduced problem $(\CC',\XX')$,  the reduced joint camera map 
    has full-dimensional  \mbox{image: 
   $\dim \im(\Phi^{\mathrm{red}}_{p',l',\II',m}) = \dim \YY_{p',l',\II'}^m$.}

    In particular, the following inequality holds 
    \vspace*{-2mm}
\begin{gather}
\label{prop:Ineq}
        m\cdot \dim\left(\CC'\right)+\dim(\XX')\geq m \cdot \dim\left(\YY_{p',l',\II'}\right).
    \end{gather}
\end{prop}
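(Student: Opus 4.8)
The plan is to compare the reduced joint camera map with the original joint camera map $\Fm$ by means of the natural \emph{forgetting projection} $\pi\colon \Ys \to \YY_{p',l',\II'}$ that discards the image data of the points and lines removed when passing to the sub-arrangement, together with its $m$-fold power $\pi^m\colon \Ys^m \to \YY_{p',l',\II'}^m$. The corresponding forgetting map $\XX_{p,l,\II}\to\XX_{p',l',\II'}$ on the world side is dominant, since a generic sub-arrangement extends to a full arrangement by adjoining generic points and lines compatible with the prescribed incidences; the same reasoning shows $\pi$, and hence $\pi^m$, is dominant. Abbreviating $\Phi^{\mathrm{red}}:=\Phi^{\mathrm{red}}_{p',l',\II',m}$, my goal is to prove that $\pi^m\circ\Fm$ factors generically through $\Phi^{\mathrm{red}}$, so that the full-dimensionality of $\im(\Fm)$ is inherited by the reduced image.

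The factorization is the heart of the argument and is exactly where the reduced-problem axioms of Definition~\ref{def:reduced} enter. Fix a generic point $z=[(P_1,\dots,P_m,A)]$ in the domain of $\Fm$ and let $A'\in\XX_{p',l',\II'}$ be its sub-arrangement. Since forgetting is dominant, $A'$ is generic, so property (iii) yields $A'=H\cdot B$ with $B\in\XX'$ and $H\in\PGL{4}$. For each camera we then have $P_k\cdot A'=(P_kH)\cdot B$, and because $P_k$ is generic while $H$ is invertible, $P_kH$ is a generic camera; by property (iv) (and the discussion around \eqref{eq:stabCameraImpact}) it lies in the dense orbit $\CC'\cdot\Stab(\XX')$, so $P_kH=P_k''H_k'$ with $P_k''\in\CC'$ and $H_k'\in\Stab(\XX')$. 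As $H_k'$ fixes every element of $\XX'$ by property (ii), equation~\eqref{eq:stabCameraImpact} gives $P_k\cdot A'=P_k''\cdot(H_k'\cdot B)=P_k''\cdot B$ for every $k$. Since the single $B$ is shared across all cameras, we conclude $\pi^m(\Fm(z))=\Phi^{\mathrm{red}}(P_1'',\dots,P_m'',B)\in\im(\Phi^{\mathrm{red}})$.

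To finish the first assertion, observe that $\Fm$ is dominant onto $\Ys^m$ by hypothesis and $\pi^m$ is dominant, so the composition $\pi^m\circ\Fm$ is dominant onto $\YY_{p',l',\II'}^m$ and its image is dense. Because the factorization above holds on a dense open subset of the domain, the closure of $\im(\pi^m\circ\Fm)$ is contained in the closure of $\im(\Phi^{\mathrm{red}})$, whence $\dim\im(\Phi^{\mathrm{red}})\geq\dim\YY_{p',l',\II'}^m$; the reverse inequality is automatic since $\im(\Phi^{\mathrm{red}})\subseteq\YY_{p',l',\II'}^m$, giving equality. Inequality~\eqref{prop:Ineq} is then immediate: the domain $(\CC')^m\times\XX'$ has dimension $m\cdot\dim(\CC')+\dim(\XX')$, and the image dimension of any algebraic map is bounded by the domain dimension, so $m\cdot\dim(\CC')+\dim(\XX')\geq\dim\im(\Phi^{\mathrm{red}})=\dim\YY_{p',l',\II'}^m=m\cdot\dim(\YY_{p',l',\II'})$.

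The main obstacle is the genericity bookkeeping in the factorization: one must guarantee that a single dense open subset of the domain of $\Fm$ simultaneously makes $A'$ generic (dominance of forgetting), keeps each $P_kH$ generic for the fixed $H=H(A')$ so that property (iv) applies to all cameras at once, and places every resulting factorization in the constructible dense orbits furnished by properties (iii)--(iv). The only further technical point, that $\Phi^{\mathrm{red}}$ is merely a rational map, is harmless, as the whole argument is conducted on generic points and phrased through image closures.
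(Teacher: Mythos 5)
Your proof is correct and follows essentially the same route as the paper's: both compose with the forgetting projection onto the sub-arrangement data, use property (iii) of Definition~\ref{def:reduced} to normalize a generic sub-arrangement by a single $H\in\PGL{4}$, use properties (iv) and (ii) to trade each camera $P_kH$ for a reduced camera in $\CC'$ without changing the image, and conclude by comparing the image dimension with the domain dimension. The paper packages these steps as two commutative diagrams and restrictions of morphisms to dense subsets, whereas you carry out the same factorization pointwise at a generic point (with the attendant genericity bookkeeping you correctly flag as routine); the difference is organizational, not mathematical.
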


We give a more formal proof (than the argument above) of this proposition in the SM. 
We now illustrate at the previously discussed example how the proposition can be used as a criterium for non-minimality.

\begin{example}
    \label{ex:running2}
    We consider the balanced PLP from Example \ref{ex:running1}.
    If it were minimal, the image of its joint camera map were full-dimensional by Lemma \ref{lem:balancedVsMinimal}.
    But then Proposition \ref{Schlüssellemma} would tell us that the reduced subproblem $(\CC',\XX')$ described in Example \ref{ex:running1} would need to satisfy the inequality in \eqref{prop:Ineq}. 
    However, this is absurd, since $m=8$,
     $\dim(\mathcal{C}')=7$, $ \dim(\mathcal{X}')=4$, and $\dim(\mathcal{Y}_{p',\ell',\mathcal{I}'})= 8$.
     Thus, the balanced PLP we started from is not minimal.
     \hfill $\diamondsuit$
\end{example}

To summarize, we have obtained the following strategy:

\begin{crit}
    In a balanced PLP, identify a reduced subproblem violating the inequality \eqref{prop:Ineq}.
\end{crit}

In other words, in a square system of polynomial equations coming from a non-minimal problem, this criterion gives one geometric way to identify an overconstrained subsystem.
This criterion gives a formal proof for the non-minimality of 130 out of the 149 balanced PLPs that were computationally found to be non-minimal at the end of Section \ref{subsec:mainProof}. 
For the remaining 19 problems, we study their equations explicitely after eliminating variables. We explain this in details in the SM Sections \ref{appendix:nonMinimal} and \ref{appendix:nonMinimal19}.

\begin{remark}  \label{rem:positiveEffect}
The ideas described above also have  positive effects on 1) underconstrained problems and 2) minimal problems. 
First, applying the strategy above to underconstrained problems can systematically find minimal subproblems that allow us to reconstruct some of the unknown parameters.
Similarly, in minimal problems, our method can systematically find easier (often linear) subproblems that can be solved as a first step, potentially leading to drastic computational improvements as in SM Section \ref{sm:factorisations}. To do so, we consider a minimal PLP with a reduced subproblem that is  balanced, i.e., where the inequality \eqref{prop:Ineq} is an equality.
 Proposition \ref{Schlüssellemma} implies that also the subproblem is minimal. 

After having solved the subproblem, one needs to bring the stabilizer subgroups back into use, as it is done in the above example in \eqref{ReintroStab}.
Note that now the entries of the matrices in $\CC'$ (in the example above these are $c_1,...,c_7$) are no longer unknowns but explicit numbers that are the solutions of the reduced reconstruction problem. 
Also, in general, we do not need to bring the stabilizer subgroups in all cameras back into use since we did not mod out all of the $\PGL{4}$ action but only the part needed to normalize the subarrangement to the form required by $\XX'$ (see Example \ref{ex:positiveEffect}). That way, we re-introduce the missing number of unknowns in the cameras to solve the original reconstruction problem. \hfill $\diamondsuit$
\end{remark}

\begin{example} \label{ex:positiveEffect}
    We consider the minimal PLP where 3 cameras observe 4 free lines and 7 lines intersecting at a single point (see SM Section~\ref{sm:minimal}). It has 51 unknowns (after modding out $\PGL{4}$). We study its subarrangements obtained by removing the 4 free lines, i.e. $(p',l',\II',m)=(1,7,\lbrace 1\rbrace\times \lbrace 1,\dots,7\rbrace,3)$. We use the same normalization as in Example \ref{ex:running1}, that is, \linebreak[4] $\XX' := \{ (e_1, \overline{e_1 e_2}, \overline{e_1 e_3}, \overline{e_1 e_4}, \overline{e_1 e}, \overline{e_1 Q_1}, \overline{e_1 Q_2}, \overline{e_1 Q_3}) \mid Q_i \neq e_1\rbrace$, with the only difference being an additional third non-normalized line. The stabilizer subgroup $\Stab(\XX')$ and the reduced cameras $\CC'$ are also the same as in Example \ref{ex:running1}. 
    The reduced subproblem has only 27 unknowns. 
    
    The methods from the next chapter yield a degree of $6$ for the original minimal PLP. Performing the same computation for the reduced subproblem (code\footref{code} \texttt{example\_4-12.jl}), 
    we obtain a degree of $2$. Thus, we factorize our degree-6 PLP into two minimal problems of degrees 2 and 3.
    
    We now describe the degree-3 problem explicitly. From the reduced subproblem, we obtain 2 tuples of matrices and adjacent lines, where each matrix is an element of $\CC'$ in Example \ref{ex:running1}. Fixing for now one of these 2 solutions, we bring the stabilizers back into use by computing (for $i=1,2,3$) 
    \begin{gather}\label{eq:reintroVars}
         \begin{bsmallmatrix}
        1 & 1 & 1 &1\\
        c_1^i & c_2^i & c_3^i &c_4^i\\
        c_5^i & c_6^i & c_7^i &1
    \end{bsmallmatrix}
    \begin{bsmallmatrix}
        \lambda_1^i & \lambda_2^i &\lambda_3^i & \lambda_4^i\\
    0 &1 &0&0\\
    0&0&1&0\\
    0&0&0&1
    \end{bsmallmatrix},
    \end{gather}
    where the $c_j^i$ are numbers and no longer variables but the $\lambda_j^i$ are unknowns instead. To mod out the rest of the $\PGL{4}$-action, we  set the first stabilizer matrix to the identity, i.e., $\lambda_1^1=1$ and $\lambda_j^1=0$ for $j=2,3,4$. Now the problem with these cameras and the 4 free lines we omitted at first has in total 24 unknowns.
    As anticipated before, it has 3 solutions, 
    for both of the solutions to the minimal reduced subproblem. Hence, we have factorized  the original degree-6 PLP into
    2 problems with less unknowns and smaller degrees.
    \hfill $\diamondsuit$
\end{example}

\section{Degree computations} \label{sec:degree}
We computed the degrees of all minimal problems described in Theorem \ref{thm:main} using the  monodromy technique based on numerical homotopy continuation. Our implementation is similar to previous works \cite{plmp,duff2024pl}. 
It uses \texttt{Oscar.jl} \cite{OSCAR} for the computer algebra system and computations of Gröbner bases, and \texttt{HomotopyContinuation.jl} \cite{HomotopyContinuation.jl} for monodromy computations; see code\footref{code} lines 2102-2195 and 1610-1659.

Monodromy roughly works as follows: 
A minimal problem is expressed as a parametrized system of polynomial equations that, for generic parameters, has finitely many solutions. 
The camera entries and coordinates of the point-line arrangements in 3-space are the unknowns of the system, while the resulting images are its parameters. 
To perform monodromy, 
one first picks random cameras and a random point-line arrangement. 
Then, one computes the resulting images.
Now, one moves in a loop through the parameter space, starting and ending at the images computed in the previous step.
While doing that, the original solution (i.e., the chosen cameras and point-line arrangement) is tracked using homotopy continuation. 
When the loop closes, one is expected to find a new solution. 
Continuing this process, one finds -- in theory -- all solutions \cite{monodromy}.

For  problems of low degree (say, $<1000$), monodromy is also in practice expected to find all solutions. Thus, we are confident about the degrees reported in SM Section \ref{sm:minimal}.
However, a priori, monodromy only provides a lower bound on the degree of a minimal problem. 
Therefore, we verified our monodromy results with symbolic Gröbner basis methods for all minimal problems with up to 4 views.
For that, we reduced the number of variables in our equation systems by directly encoding the constraints that corresponding image lines impose on cameras. That way, the only unknowns are world points and cameras. This approach has been  explained in \cite[Sec. 6]{plmp} (which eliminates both world points and lines, while we only \mbox{eliminate the latter}).

\section{Conclusion}
We described all minimal problems and computed their algebraic degrees for reconstructing point-line arrangements and uncalibrated cameras from images in the setting of complete multi-view visibility. We discovered many new minimal problems of a small degree with a small number of image features, calling for the implementation of efficient and practical solvers.
By studying stabilizer subgroups of subarrangements in non-minimal problems, we describe a first systematic strategy to formally disprove minimality. 
The same strategy can be applied to, on the one hand, minimal problems in order to factorize them into smaller subproblems or, on the other hand, underconstrained problems in order to identify reconstructable parameters.

{
    \small
    \bibliographystyle{ieeenat_fullname}
    \bibliography{biblio.bib}
}

\newpage 
\clearpage
\setcounter{page}{1}
\maketitlesupplementary 
\appendix

\section{Proofs} \label{sec:proofs}
Here, we provide formal proofs for the lemmas and propositions used in the main paper.
\begin{lemma} \label{lem:minimalPointArrangement}
    For any minimal PLP with at least two views, the underlying point arrangement is among those depicted in
    Tables~\ref{tab:7Pts} or~\ref{tab:atMost6Pts}.
\end{lemma}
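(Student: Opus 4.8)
The plan is to reduce the statement to a finite combinatorial enumeration of point configurations and then prune it with the homography criterion of Lemma~\ref{lem:homography}. By Lemma~\ref{lem:balancedVsMinimal} every minimal PLP is balanced, so its point arrangement occurs in one of the balanced PLPs of Theorem~\ref{thm:balanced}. Reading off the possible values of $(p^f,p^d)$ there bounds the number of points: the only balanced configuration with $p\geq 8$ is the $8$ collinear points of Lemma~\ref{sevpoint}, which Lemma~\ref{lem:homography} excludes outright, so a minimal PLP has $p=p^f+p^d\leq 7$. Since a point configuration in $\PP^3$ is determined up to projective equivalence by its combinatorial collinearity data, it then suffices to enumerate, for each admissible pair $(p^f,p^d)$, the possible dependency structures and retain those compatible with Lemma~\ref{lem:homography}.

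The two conditions of Lemma~\ref{lem:homography} translate into clean structural restrictions. Since three free points are never collinear (otherwise one of them would lie on the line through the other two and hence be dependent), three free points always span a plane. The first condition (at most three collinear points) then forces each line spanned by two points to carry at most one dependent point; in particular $p^f=2$, which puts all points on a single line, allows only $p^d\leq 1$. The second condition (no three free points together with three dependent points in their plane) shows that when $p^f=3$ all points lie in the plane of the free points, so $p^d\leq 2$; and for $p^f=4$, whose free points generically span $\PP^3$, it forbids placing three dependent points in any plane through three of the free points. Applied to the $p=7$ partitions, this immediately discards $(2,5)$ and $(3,4)$ (the infinite families of Theorem~\ref{thm:balanced}~\ref{item:thm:balanced:settled_cases}), leaving only $(4,3)$ among the $p=7$ families, exactly the regime of Theorem~\ref{thm:main}~\ref{item:thm:main:b}.

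With these restrictions I would go through each surviving $(p^f,p^d)$ and list the dependency types. For $p^f=4$ and $p^d=3$ the three dependent points sit on three of the six lines through pairs of free points; up to the symmetry of the four free points these give the triangle, the path, and the star, of which the triangle is excluded by the coplanarity condition, leaving the path and the star -- the two right-most entries of Table~\ref{tab:7Pts}. The smaller values $p^d\in\{0,1,2\}$ with $p^f=7,6,5$ produce the first four entries of Table~\ref{tab:7Pts}, and the analogous bookkeeping for $p\leq 6$ yields the arrangements of Table~\ref{tab:atMost6Pts}. Matching each surviving type against the tables finishes the argument.

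The main obstacle is the careful handling of dependency structures that are not simply ``a dependent point on a line through two free points''. A dependent point may instead lie on a line spanned by another dependent point and a free point, or be the intersection of two spanned lines, and each such pattern must be tested for realizability in $\PP^3$ and for the extra incidences it forces (so that the completed incidence structure is accounted for), and then compared up to projective equivalence with the patterns already listed. For the few types that survive the homography criterion but are not displayed in the tables, one must additionally confirm that they never underlie a minimal PLP -- which is where the non-minimality tools of Section~\ref{subsec:nonMinimal} (or a direct Jacobian computation) enter. The enumeration is finite and small given the bounds above, but this realizability-and-equivalence bookkeeping is the delicate part; the homography criterion of Lemma~\ref{lem:homography} carries out the decisive pruning.
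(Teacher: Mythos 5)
Your skeleton is the same as the paper's: minimal $\Rightarrow$ balanced, so only the tuples $(p^f,p^d)$ of Theorem~\ref{thm:balanced} occur; the single $p=8$ case (eight collinear points) dies by Lemma~\ref{lem:homography}; and Lemma~\ref{lem:homography} prunes the seven-point tuples. The genuine gap is in the enumeration step, and you flag it yourself without resolving it. Your $(p^f,p^d)=(4,3)$ analysis (three edges of $K_4$, giving triangle, path, star) only covers dependent points that lie on lines spanned by two \emph{free} points. A dependent point may instead lie on a line spanned by a free point and another dependent point, e.g.\ $d_1\in\overline{p_1p_2}$, $d_2\in\overline{d_1p_3}$, $d_3\in\overline{d_2p_4}$: this structure is neither produced by your $K_4$ enumeration nor excluded by your structural restrictions, so your main argument does not establish the lemma. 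Your proposed fallback for such ``surviving types'' --- invoking the non-minimality tools of Section~\ref{subsec:nonMinimal} or Jacobian computations --- is not carried out and is also the wrong mechanism: in the paper's logical order this lemma is what cuts the balanced PLPs down to the list that later receives the Jacobian/non-minimality analysis, and in fact no such computation is needed to prove it.

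The missing idea is to make the enumeration basis-free, which is what the paper does: work with \emph{supporting lines} (lines containing at least three points of the arrangement). By Lemma~\ref{lem:homography} every supporting line contains exactly three points; every dependent point lies on one by definition, and in the $(4,3)$ case every free point does too (a free point on no supporting line would force the three dependent points into the plane of the other three free points, which Lemma~\ref{lem:homography} forbids). Counting the nine point--line incidences of three $3$-point lines distributed over seven points leaves exactly two combinatorial types: one point common to all three lines (the star) or exactly two points lying on two lines each (the path). Under this bookkeeping your ``exotic'' structure above is not a new arrangement at all: it realizes the same chain of three $3$-point lines as the path configuration, i.e.\ the same matroid with a different choice of basis --- which is exactly the equivalence your proposal needs and does not supply. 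Two smaller points: seven points in $\PP^3$ are \emph{not} determined up to projective equivalence by their collinearity data (there are moduli), though this is harmless since Tables~\ref{tab:7Pts} and~\ref{tab:atMost6Pts} depict combinatorial types; and the $(5,2)$ case needs its own short argument (two supporting lines, meeting in a point or not), which produces the two distinct middle entries of Table~\ref{tab:7Pts} that your one-sentence treatment glosses over.
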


\begin{proof}
    Since minimal PLPs are balanced, it suffices to restrict to tuples $(p^f,p^d)$ appearing in at least one balanced PLP. For $p^f+p^d\geq 9$, there are no balanced problems. Thus, we can assume $p^f+p^d\leq 8$. Starting with  $p^f+p^d= 8$, we only have to consider one tuple, namely  $(2,6)$ which is not possible by Lemma~\ref{lem:homography}.

     We consider the case $p^f+p^d=7$. The tuples $(7,0)$ and $(6,1)$ give  rise to the unique generic configurations represented by the first two entries of Table~\ref{tab:7Pts}. In case of three or more collinear points, we will refer to a line containing these points as a supporting line. Lemma~\ref{lem:homography} implies that on each supporting line we have precisely three points. For the tuple $(5,2)$ we thus need two distinct supporting lines. These lines can either have precisely one point in common or no point in common. In both cases we obtain a unique configuration.

    Next, we have to consider $p^f=4$ and $p^d=3$, in which case we need $3$ supporting lines by Lemma~\ref{lem:homography}. If a point is on no line, we have three dependent points in the span of three other points which is excluded by Lemma~\ref{lem:homography}. Thus, all points lie on one of the 3 supporting lines.
    First, we  investigate the case of one point lying on all 3 lines. Besides this point each line must contain two further points and no other point is allowed on two of the lines as two points define a line uniquely, so the only possibility is the last entry of Table~\ref{tab:7Pts}.  Second, we  consider the case where each point lies on 1 or 2 lines. As there are in total 9 points on the lines (counted with multiplicity), there are precisely 2 points, say $p_1$ and $p_2$, lying on two of the lines and the other 5 lying only on one of the lines. Exaclty one of the supporting lines contains both $p_1$ and $p_2$. As the 5 other points lie on exactly one line each and we have still 5 positions to fill, there is a unique configuration: see the penultimate entry of Table~\ref{tab:7Pts}.

The tuples $(3,4)$ and $(2,5)$ are not possible by Lemma~\ref{lem:homography} and the tuples $(1,6)$ and $(0,7)$ cannot exist as we need at least two free points to define the first dependent point. Any configuration appearing for $p^f+p^d\leq 6$ is a subconfiguration of a seven-point one. Hence, the proofs are the same and we listed all such problems in Table~\ref{tab:atMost6Pts}.
\end{proof}

\begin{table}[!htb]
    \centering
    \input{tab_lt7pt_v2.tex}%
    \caption{Arrangements of at most 6 points in minimal PLPs.}
    \label{tab:atMost6Pts}
\end{table}

\begin{proof}[\textbf{Proof of Lemma~\ref{lem:homography}}]
As an alternative to standard homography arguments, we illustrate the stabilizer technique carried out in Examples \ref{ex:running1} and \ref{ex:running2} to prove Lemma \ref{lem:homography}.

First, consider $\XX$ to be the variety of two free points and $p^d \geq 1$ dependent points. By applying homographies we know that the orbit of 
     $\XX' := \{(e_1,e_2,e_1+e_2 )\} \times \{ \lambda e_1+\mu e_2 \mid (\lambda:\mu) \in \PP^1 \}^{p^d-1}$ is dense in $\XX$.
     The stabilizer of the first three points is given by \begin{gather*}
      \begin{bmatrix}
            1 & 0 & h_1 &h_2\\
            0 & 1 & h_3& h_4\\
            0 & 0& h_5& h_6\\
            0 & 0& h_7& h_8
        \end{bmatrix}
    \end{gather*}
    and it indeed stabilizes all further dependent points. The camera matrices can generically be normalized to \begin{gather*}
       \CC' = \begin{bmatrix}
            c_1 & 1 & 1 & 0\\
             c_2 & c_3 & 1 & 0\\
              c_4 & c_5 & 1 & 0
        \end{bmatrix}.
    \end{gather*}
    Now, the inequality in \eqref{prop:Ineq} specializes to \begin{gather*}
        5m+p^d-1\geq m(5+p^d-1),
    \end{gather*}
    i.e., $0\geq (m-1)(p^d-1)$.
    Since $m \geq 2$, we get $p^d\leq 1$.
    
   For the second statement, suppose for the sake of contradiction that we have a minimal problem with $p^d>2$ points in a plane spanned by 3 free points.
   We consider the subproblem consisting of the 3 free points and the first 3 dependent points.
   We show now that each of the 3 dependent points has to lie on exactly one of the 3 lines spanned by the 3 free points.
   To prove this formally, we begin with investigating how arrangements with 3 free and only 2 dependent points can look like:
   
   By the first part of this lemma, each of the free points has to lie on at least one supporting line as otherwise both dependent points have to lie on the same supporting line.
   Since we only have 2 supporting lines, 
   containing 3 points each (so in total 6 when counted with multiplicities),
   there is precisely one point $p_1$ on both supporting lines. 
   
   Adding the third dependent point, we see that its supporting line cannot contain $p_1$ as otherwise there would be four points on a line which we proved above to be impossible. Hence, it contains two points $p_2$ and $p_3$ such that $p_1$, $p_2$ and $p_3$ are not collinear.
    Thus, we can view $p_1,p_2,p_3$ as the free points and we can write the dependent points as $p_4=\lambda_1 p_1+\mu_1 p_2$, $p_5=\lambda_2 p_1+\mu_2 p_3$ and $p_6=\lambda_3 p_2+\mu_3 p_3$, where $(\lambda_i:\mu_i)\in \PP^1$. 
    
    We can choose a homography taking $p_1$ to $e_1$, $p_2$ to $e_2$, $p_3$ to $e_3$, $p_4$ to $e_1+e_2$ and $p_5$ to $e_1+e_3$. Hence
     $\XX' := \{(e_1,e_2,e_3,e_1+e_2,e_1+e_3 )\} \times \{ \lambda e_2+\mu e_3 \mid (\lambda:\mu) \in \PP^1 \}$ is dense in the subproblem with precisely 3 dependent points.
     The stabilizer of $\XX'$is given by \begin{gather*}
      \mathrm{Stab}(\XX')=  \begin{bmatrix}
            1 & 0 & 0 &h_1\\
            0 & 1 & 0& h_2\\
            0 & 0& 1& h_3\\
            0 & 0& 0& h_4
        \end{bmatrix}.
    \end{gather*}
    The camera matrices can then generically be normalized to \begin{gather*}
       \CC' = \begin{bmatrix}
            c_1 & c_2 & 1 & 0\\
             c_3 & c_4 & c_5 & 0\\
              c_6 & c_7 & c_8 & 0
        \end{bmatrix}.
    \end{gather*}
    Now, the inequality in \eqref{prop:Ineq} specializes to \begin{gather*}
        8m+1\geq m(3\cdot 2+3),
    \end{gather*}
    i.e., $1\geq m$, contradicting our assumption that $m \geq 2$.
\end{proof}

\begin{proof}[\textbf{Proof of Proposition \ref{Schlüssellemma}}]
First we reduce to the case $p'=p$ and $l'=l$.
    Consider the diagram\[\begin{tikzcd}
        (\Cs \times \Xs) / \PGL{4} && \Ys^{m} \\
	\\
    (\Cs \times \XX_{p',l',\II'}) / \PGL{4} && \YY_{p',l',\II'}^{m}
	\arrow["{\Fm}", dashed, from=1-1, to=1-3]
	\arrow[from=1-1, to=3-1]
	\arrow[from=1-3, to=3-3, two heads]
	\arrow["{\Phi_{p',l',\II',m}}", dashed, from=3-1, to=3-3]
\end{tikzcd}\]
where the unlabled vertical maps are given by projecting onto the points and lines present in the subarrangement. The diagram commutes, and the upper and right morphisms have full dimensional images, so has their composition. Thus, the same holds for $\Phi_{p',l',\II',m}$. Hence, we can assume without loss of generality that $\XX_{p',l',\II'}=\Xs$.  

Now we consider $\Phi: \Cs \times \XX' \to \Ys^{m}$ given by $(P_1,...,P_m,A)\mapsto (P_1A,...,P_mA)$. We claim that this map has also full dimensional image. This follows from the diagram \[\begin{tikzcd}
	\Cs\times \XX' && (\Cs \times \Xs) / \PGL{4} \\
	\\
  &\Ys^{m}&
	\arrow["{\Fm}", dashed, from=1-3, to=3-2]
	\arrow["{\iota}", dashed, from=1-1, to=1-3]
	\arrow["{\Phi}"', dashed, from=1-1, to=3-2]
\end{tikzcd}\]
where $\iota$ takes a tuple of cameras and points and lines to their equivalence class on the right. Since $\PGL{4} \cdot \XX'$ has the same dimension as $\Xs$, the composition has full dimensional image and thus $\Phi$ has by commutativity.

Finally, since $(\CC' \cdot \Stab(\XX'))^m$ has the same dimension as $\Cs$, we can restrict $\Phi$ to this subset without reducing the dimension of its image. By definition of the stabilizer, the image of $\Phi$ on this subset agrees with the image of $\Phi^{\mathrm{red}}_{p',l',\II',m}$ finishing the proof. The inequality \eqref{prop:Ineq} follows directly from the Fiber-Dimension Theorem.
\end{proof}

\section{Non-minimality via subproblem stabilizers} \label{appendix:nonMinimal}

In Section \ref{subsec:nonMinimal}, we explained a non-minimality criterion to formally disprove the minimality of a balanced PLP, focusing at one concrete example problem. 
Here, we explain how that strategy shows the non-minimality for in total 130 balanced PLPs (out of the $149 = 434-285$ balanced problems that are claimed to be non-minimal in part c) of the proof of our Main Theorem \ref{thm:main}). 
For that, we systematically consider sub-arrangements that appear in balanced PLPs, to identify relevant reduced sub-problems $(\mathcal{C}', \mathcal{X}')$ as in Definition \ref{def:reduced}.
All reduced subproblems that we need to consider in this section are listed in Table \ref{tab:reducedSubproblems}. 
There, we use $\ell^a_*$ to denote the number of lines adjacent to a single (fixed) point. 
The diagram in the first column shows at which point these lines are attached (which is relevant for the second to last row where not all points are indistinguishable due to some being collinear and some not). The diagram itself does not depict all adjacent lines but only those that are normalized; this is indicated by the inequality in the description of $\XX'$.
For example, we can consider Example \ref{ex:running1} and Example \ref{ex:positiveEffect} both as instances of the first row of Table \ref{tab:reducedSubproblems}. 

Similarly to Example \ref{ex:running2} and the proof of Lemma \ref{lem:homography} in SM Section \ref{sec:proofs}, we can apply  inequality \eqref{prop:Ineq} to the reduced subproblems in Table \ref{tab:reducedSubproblems} to extract neccessary conditions that minimal problems have to satisfy. 
These conditions are listed in  Table \ref{tab:NonMinCriteria}. 
The table shall be read as follows: Given a minimal PLP with one of the depicted arrangements as a subproblem, then the last three columns specify how many lines can at most be attached to the point specified by the diagram depending on the number of cameras. For example, the first row of Table \ref{tab:NonMinCriteria} says that for three cameras, at most 7 lines can be attached to the point, while for four cameras, there are at most 6 attached lines, and in case of five or more cameras, at most 5 lines can be attached to a single point.

\begin{table*}[!htb]
\centering
\begin{tabular}{|>{\centering\arraybackslash}m{9.5em}|c|c|}
\hline
  $\mathcal{X}'$ & $\Stab(\mathcal{X}')$ & $\mathcal{C}'$
\\
\hline
  $p^f=1, \ell_*^a\geq4$ 
  \begin{tikzpicture}
    \node[fill=black, circle, inner sep=2pt, label=left:$e_1$] (e1) at (0,0) {};

    \node[fill=black, circle, inner sep=1pt, label=right:$e_2$] (e2) at (1,1) {};
    \node[fill=black, circle, inner sep=1pt, label=right:$e_3$] (e3) at (1,-1) {};
    \node[fill=black, circle, inner sep=1pt, label=right:$e_4$] (e4) at (2,0.5) {};
    \node[fill=black, circle, inner sep=1pt, label=right:$e$] (e) at (2,-0.5) {};

    \draw (e1) -- (e2);
    \draw (e1) -- (e3);
    \draw (e1) -- (e4);
    \draw (e1) -- (e);
  \end{tikzpicture}
&
  $\begin{bmatrix}
    \lambda_1 & \lambda_2 &\lambda_3 & \lambda_4\\
    0 &1 &0&0\\
    0&0&1&0\\
    0&0&0&1
  \end{bmatrix}$
&
  $\begin{bmatrix}
    1 & 1 & 1 &1\\
    c_1 & c_2 & c_3 &c_4\\
    c_5 & c_6 & c_7 &1
  \end{bmatrix}$
\\
\hline
  $p^f=2,\ell_*^a\geq3$
  \begin{tikzpicture}
    \node[fill=black, circle, inner sep=2pt, label=left:$e_1$] (e1) at (0,0) {};

    \node[fill=black, circle, inner sep=2pt, label=left:$e_2$] (e2) at (1,1) {};
    \node[fill=black, circle, inner sep=1pt, label=right:$e_3$] (e3) at (1,-1) {};
    \node[fill=black, circle, inner sep=1pt, label=right:$e_4$] (e4) at (2,0.5) {};
    \node[fill=black, circle, inner sep=1pt, label=right:$e$] (e) at (2,-0.5) {};

    \draw (e1) -- (e3);
    \draw (e1) -- (e4);
    \draw (e1) -- (e);
  \end{tikzpicture}
&
  $\begin{bmatrix}
    \lambda_1 & 0 &\lambda_2 & \lambda_3\\
    0 &1 &0&0\\
    0&0&1&0\\
    0&0&0&1
  \end{bmatrix}$
&
  $\begin{bmatrix}
    1 & 1 & 0 &0\\
    c_1 & c_2 & c_3 &c_4\\
    c_5 & c_6 & c_7 &c_8
  \end{bmatrix}$
\\
\hline
  $p^f=3, \ell_*^a\geq2$
  \begin{tikzpicture}
    \node[fill=black, circle, inner sep=2pt, label=left:$e_1$] (e1) at (0,0) {};

    \node[fill=black, circle, inner sep=2pt, label=left:$e_2$] (e2) at (1,1) {};
    \node[fill=black, circle, inner sep=2pt, label=left:$e_3$] (e3) at (1,-1) {};
    \node[fill=black, circle, inner sep=1pt, label=right:$e_4$] (e4) at (2,0.5) {};
    \node[fill=black, circle, inner sep=1pt, label=right:$e$] (e) at (2,-0.5) {};

    \draw (e1) -- (e4);
    \draw (e1) -- (e);
  \end{tikzpicture}
&
  $\begin{bmatrix}
    \lambda_1 & 0 &0 & \lambda_2\\
    0 &1 &0&0\\
    0&0&1&0\\
    0&0&0&1
  \end{bmatrix}$
&
  $\begin{bmatrix}
    1 & 1 & c_1 &0\\
    c_2 & c_3 & c_4 &c_5\\
    c_6 & c_7 & c_8 &c_9
  \end{bmatrix}$
\\
\hline
  $p^f=4, \ell_*^a\geq1$
  \begin{tikzpicture}
    \node[fill=black, circle, inner sep=2pt, label=left:$e_1$] (e1) at (0,0) {};

    \node[fill=black, circle, inner sep=2pt, label=left:$e_2$] (e2) at (1,1) {};
    \node[fill=black, circle, inner sep=2pt, label=left:$e_3$] (e3) at (1,-1) {};
    \node[fill=black, circle, inner sep=2pt, label=right:$e_4$] (e4) at (2,0.5) {};
    \node[fill=black, circle, inner sep=1pt, label=right:$e$] (e) at (2,-0.5) {};

    \draw (e1) -- (e);
  \end{tikzpicture}
&
  $\begin{bmatrix}
    \lambda & 0 &0 & 0\\
    0 &1 &0&0\\
    0&0&1&0\\
    0&0&0&1
  \end{bmatrix}$
&
  $\begin{bmatrix}
    1 & 1 & c_1 &c_2\\
    c_3 & c_4 & c_5 &c_6\\
    c_7 & c_8 & c_9 &c_{10}
  \end{bmatrix}$
\\
    
    
   

\hline
  $p^f=3, p^d=1, \ell_*^a\geq2$
  \begin{tikzpicture}
    \node[fill=black, circle, inner sep=2pt, label=left:$e_1$] (e1) at (0,0) {};

    \node[fill=black, circle, inner sep=2pt, label=left:$e_2$] (e2) at (0.5,0.5) {};
    \node[fill=black, circle, inner sep=2pt, label=right:$e_1+e_2$] (e1e2) at (1.1,1) {};
    \node[fill=black, circle, inner sep=2pt, label=left:$e_3$] (e3) at (1,-1) {};
    \node[fill=black, circle, inner sep=1pt, label=right:$e_4$] (e4) at (2,0.5) {};
    \node[fill=black, circle, inner sep=1pt, label=right:$e$] (e) at (2,-0.5) {};

    \draw (e1) -- (e4);
    \draw (e1) -- (e);
  \end{tikzpicture}
&
  $\begin{bmatrix}
    1 & 0 &0 & \lambda\\
    0 &1 &0&0\\
    0&0&1&0\\
    0&0&0&1
  \end{bmatrix}$
&
  $\begin{bmatrix}
    c_1 & c_2 & 1 &0\\
    c_3 & c_4 & c_5 &c_6\\
    c_7 & c_8 & c_9 &c_{10}
  \end{bmatrix}$
\\
\hline
  $p^f=3, \ell^f=1$
  \begin{tikzpicture}
    \node[fill=black, circle, inner sep=1pt, label=left:$e_1$] (e1) at (0,0) {};

    \node[fill=black, circle, inner sep=2pt, label=left:$e_2$] (e2) at (0.5,0.5) {};

    \node[fill=black, circle, inner sep=2pt, label=left:$e_3$] (e3) at (1,-1) {};
    \node[fill=black, circle, inner sep=2pt, label=right:$e_4$] (e4) at (2,0.5) {};
    \node[fill=black, circle, inner sep=1pt, label=right:$e$] (e) at (2,-0.5) {};

    \draw (e1) -- (e);
  \end{tikzpicture}
&
  $\begin{bmatrix}
    \lambda & 0 & 0 & 0\\
    \mu & 1 & 0 & 0\\
    \mu & 0 & 1 & 0\\
    \mu & 0 & 0 & 1
  \end{bmatrix}$
&
  $\begin{bmatrix}
    1 & c_1 & c_2 & c_3 \\
    1&c_4 & c_5 & c_6 \\
    1&c_8 & c_8 & c_9 
  \end{bmatrix}$
\\
\hline
\end{tabular}
\caption{Reduced sub-PLPs. $e_i$ denote standard basis vectors of $\RR^4$, $e := e_2+e_3+e_4$, and $\ell^a_*$ denotes the number of lines adjacent to $e_1$.}
\label{tab:reducedSubproblems}
\end{table*}

\begin{prop} \label{prop:minimalCriteria}
    Any minimal PLP has to satisfy all criteria listed in Table \ref{tab:NonMinCriteria}. In other words, if a given PLP has a subproblem as shown in the first entry of a row, it has to satisfy the given bound on the number of lines attached to a single point, as specified in the last three columns depending on the number of cameras.
\end{prop}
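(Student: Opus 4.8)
The plan is to apply the non-minimality criterion of Section~\ref{subsec:nonMinimal} uniformly to each row of Table~\ref{tab:reducedSubproblems}. By Lemma~\ref{lem:balancedVsMinimal}, a minimal PLP has full-dimensional image, $\dim \im(\Fm) = \dim \Ys^{m}$, so Proposition~\ref{Schlüssellemma} applies to every sub-PLP of it and yields the inequality~\eqref{prop:Ineq}. It therefore suffices, for each reduced subproblem $(\CC',\XX')$ in the table, to first confirm that it really is a reduced problem in the sense of Definition~\ref{def:reduced}, and then substitute the dimensions $\dim\CC'$, $\dim\XX'$, and $\dim\YY_{p',l',\II'}$ into~\eqref{prop:Ineq} and solve for the admissible number of adjacent lines $\ell^a_*$.

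To confirm each reduced problem, I would argue exactly as in Example~\ref{ex:running1}. One writes a general $H \in \PGL{4}$ fixing the normalized points and lines; the linear conditions this imposes produce precisely the matrix forms displayed in the middle column of Table~\ref{tab:reducedSubproblems}, giving $\Stab(\XX')$. The essential check is condition~(ii) of Definition~\ref{def:reduced}, namely that $\Stab(\XX')$ does not depend on the non-normalized adjacent lines $\overline{e_1 Q_i}$: as at the end of Example~\ref{ex:running1}, any $H$ fixing the reference frame automatically fixes every line through $e_1$, so the stabilizer is constant over $\XX'$. Condition~(iii), density of $\PGL{4}\cdot\XX'$, follows since a generic subarrangement can be carried into normal form by a projective transformation, and condition~(iv), $\dim(\CC'\cdot\Stab(\XX'))=11$, reduces to $\dim\CC' + \dim\Stab(\XX') = 11$ because the joint action is generically free---this is exactly the count performed explicitly in~\eqref{ReintroStab} for the first row.

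For the dimension count, the three inputs are read off directly: $\dim\CC'$ is the number of free entries $c_i$ in the right column; $\dim\YY_{p',l',\II'}$ is given by the image formula in~\eqref{eq:dimXY}; and $\dim\XX'$ equals $2$ times the number of non-normalized lines through the fixed point $e_1$, since each such line is a point of $\PP^3/\langle e_1\rangle \cong \PP^2$. In the point-only rows, $N = 5 - p^f$ lines are normalized, so $\dim\XX' = 2(\ell^a_* - N)$; the rows with a dependent point or a free line are handled identically once their normalization is fixed. Substituting into~\eqref{prop:Ineq} and cancelling a common factor yields a bound of the shape $\ell^a_*(m-2) \le am + b$, hence $\ell^a_* \le (am+b)/(m-2)$, whose integer floor at $m=3$, $m=4$, and $m\ge 5$ reproduces the last three columns of Table~\ref{tab:NonMinCriteria}. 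For the first row, for example, this gives $\ell^a_* \le (5m-8)/(m-2)$, i.e.\ at most $7$, $6$, and $5$ attached lines respectively.

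The main obstacle will be the stabilizer bookkeeping for the two least symmetric configurations---the row with three free points together with one dependent point, and the row containing a free line. There the points no longer play interchangeable roles, so one must track carefully which lines (or which point of the dependent pencil) are normalized and verify that the residual stabilizer listed in the table still fixes all remaining data and nothing more. In particular one must confirm that introducing the free line, respectively the dependent point, neither enlarges nor shrinks $\Stab(\XX')$ beyond the stated matrix group. Once these stabilizers are pinned down, the dimension counts and the resulting line bounds follow routinely.
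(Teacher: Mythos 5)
For criteria 1--4 your plan coincides with the paper's proof and works: substituting rows 1--4 of Table \ref{tab:reducedSubproblems} into \eqref{prop:Ineq} yields $(m-2)\ell^a_*\leq 5m-8,\ 4m-6,\ 3m-4,\ 2m-2$ respectively, which is exactly the content of the last three columns of Table \ref{tab:NonMinCriteria}. The gap is in criteria 5--7. You implicitly assume each criterion arises by applying \eqref{prop:Ineq} to one row of Table \ref{tab:reducedSubproblems} ``once their normalization is fixed,'' but the tables are not in bijection: there are six reduced subproblems and seven criteria, and no row exists for $(p^f,p^d)=(4,1)$ or $(3,2)$, so your procedure cannot produce criteria 6 and 7 at all. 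For criterion 5 it produces the wrong bound: applying your dimension count to row 5, where the dependent point is normalized to $e_1+e_2$ so that $\dim\CC'=10$ and $\dim\XX'=2(\ell^a_*-2)$, the inequality \eqref{prop:Ineq} reads $10m+2(\ell^a_*-2)\geq m(7+\ell^a_*)$, i.e., $(m-2)\ell^a_*\leq 3m-4$, giving the bounds $5,4,3$ --- the same as criterion 3 and strictly weaker than the asserted $3,2,2$. So the proposition would not follow from your computation.

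The missing idea is the paper's treatment of dependent points: append them \emph{unnormalized} to a smaller reduced subproblem. Since the stabilizer of row 3 has $e_2$ and $e_3$ as exact columns, it fixes the line $\overline{e_2e_3}$ pointwise, so adding a dependent point $\lambda e_2+\mu e_3$ changes neither $\Stab(\XX')$ nor $\CC'$, and raises $\dim\XX'$ by just $1$. Normalizing that point instead (as row 5 does) kills one stabilizer dimension, which resurfaces as one extra unknown in \emph{each} of the $m$ cameras: the left-hand side of \eqref{prop:Ineq} grows by $m$ rather than by $1$, and since $m\geq 3$ this is precisely what ruins the bound. Concretely, criterion 5 is row 3 plus one unnormalized dependent point, giving $9m+1+2(\ell^a_*-2)\geq m(7+\ell^a_*)$, i.e., $(m-2)\ell^a_*\leq 2(m-2)+1$ and hence $3,2,2$; criterion 6 is row 4 plus one unnormalized dependent point, giving $(m-2)\ell^a_*\leq m-1$ and hence $2,1,1$; criterion 7 is row 5 plus a \emph{second}, unnormalized dependent point. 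Finally, the free-line row 6 that you flag as an obstacle plays no role in this proposition; it is only needed for Lemma \ref{lem:Criterion8}.
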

\begin{proof}
    The criteria $1$ to $4$ follow immediately from the first four rows of Table \ref{tab:reducedSubproblems}. For criterion $5$,  we observe that any matrix in the stabilizer of the subproblem without the dependent point already fixes the dependent point and thus the stabilizer does not change when adding it.
    Thus, in the third row of Table \ref{tab:reducedSubproblems}, we can add an unnormalized dependent point on the line spanned by $e_2$ and $e_3$ to turn $\XX'$ into another reduced subproblem $(\XX'', \CC')$, with the same camera variety $\CC'$ as in row 3 of Table \ref{tab:reducedSubproblems}.
    Thus, we have $\dim(\mathcal{C}')=9$ and $\dim(\mathcal{X}'')=1+ \dim(\mathcal{X}')
    = 1+
    2\cdot (\ell_*^a-2)$, where the summand of $1$ comes from the unnormalized dependent point. Lastly, $\dim(\mathcal{Y}_{p',l',\mathcal{I}'})=2\cdot 3+1+\ell_*^a$. Hence, in this case, inequality \eqref{prop:Ineq} becomes  \begin{gather}
        2\cdot (m-2)+1\geq (m-2)\cdot l_*^a,
    \end{gather}
    which yields the claimed bounds for $m=3$ and $m>3$. Criterion $6$ is obtained similarly using the fourth row of Table \ref{tab:reducedSubproblems}. For criterion $7$, we use the same argument together with the fifth row of Table \ref{tab:reducedSubproblems}.
\end{proof}

\begin{table*}[!ht]
\centering
\begin{tabular}{|>{\centering\arraybackslash}m{11em}|c|c|c|c|}
\hline
  Arrangement & Number & $m = 3$ & $m = 4$ & $m\in \lbrace 5,6,7,8\rbrace$
\\
\hline
  $p^f=1,\ell_*^a\geq 4$
  \begin{tikzpicture}
    \node[fill=black, circle, inner sep=2pt, label=left:$e_1$] (e1) at (0,0) {};

    \node[fill=black, circle, inner sep=1pt, label=right:$e_2$] (e2) at (1,1) {};
    \node[fill=black, circle, inner sep=1pt, label=right:$e_3$] (e3) at (1,-1) {};
    \node[fill=black, circle, inner sep=1pt, label=right:$e_4$] (e4) at (2,0.5) {};
    \node[fill=black, circle, inner sep=1pt, label=right:$e$] (e) at (2,-0.5) {};

    \draw (e1) -- (e2);
    \draw (e1) -- (e3);
    \draw (e1) -- (e4);
    \draw (e1) -- (e);
  \end{tikzpicture}
& 1 & $\ell_*^a \leq 7$ &$\ell_*^a \leq 6$ &$\ell_*^a \leq 5$
\\
\hline
  $p^f=2,\ell_*^a\geq 3$
  \begin{tikzpicture}
    \node[fill=black, circle, inner sep=2pt, label=left:$e_1$] (e1) at (0,0) {};

    \node[fill=black, circle, inner sep=2pt, label=left:$e_2$] (e2) at (1,1) {};
    \node[fill=black, circle, inner sep=1pt, label=right:$e_3$] (e3) at (1,-1) {};
    \node[fill=black, circle, inner sep=1pt, label=right:$e_4$] (e4) at (2,0.5) {};
    \node[fill=black, circle, inner sep=1pt, label=right:$e$] (e) at (2,-0.5) {};

    \draw (e1) -- (e3);
    \draw (e1) -- (e4);
    \draw (e1) -- (e);
  \end{tikzpicture}
& 2 & $\ell_*^a \leq 6$ &$\ell_*^a \leq 5$ &$\ell_*^a \leq 4$
\\
\hline
  $p^f=3, \ell_*^a\geq2$
  \begin{tikzpicture}
    \node[fill=black, circle, inner sep=2pt, label=left:$e_1$] (e1) at (0,0) {};

    \node[fill=black, circle, inner sep=2pt, label=left:$e_2$] (e2) at (1,1) {};
    \node[fill=black, circle, inner sep=2pt, label=left:$e_3$] (e3) at (1,-1) {};
    \node[fill=black, circle, inner sep=1pt, label=right:$e_4$] (e4) at (2,0.5) {};
    \node[fill=black, circle, inner sep=1pt, label=right:$e$] (e) at (2,-0.5) {};

    \draw (e1) -- (e4);
    \draw (e1) -- (e);
  \end{tikzpicture}
& 3 & $\ell_*^a \leq 5$ & $\ell_*^a \leq 4$ &$\ell_*^a \leq 3$
\\
\hline
  $p^f=4, \ell_*^a\geq1$
  \begin{tikzpicture}
    \node[fill=black, circle, inner sep=2pt, label=left:$e_1$] (e1) at (0,0) {};

    \node[fill=black, circle, inner sep=2pt, label=left:$e_2$] (e2) at (1,1) {};
    \node[fill=black, circle, inner sep=2pt, label=left:$e_3$] (e3) at (1,-1) {};
    \node[fill=black, circle, inner sep=2pt, label=right:$e_4$] (e4) at (2,0.5) {};
    \node[fill=black, circle, inner sep=1pt, label=right:$e$] (e) at (2,-0.5) {};

    \draw (e1) -- (e);
  \end{tikzpicture}
& 4 & $\ell_*^a \leq 4$ &$\ell_*^a \leq 3$ & $\ell_*^a \leq 2$
\\
\hline
  $p^f=3, p^d=1, \ell_*^a\geq2$
  \begin{tikzpicture}
    \node[fill=black, circle, inner sep=2pt, label=left:$e_3$] (e1) at (0,0) {};

    \node[fill=black, circle, inner sep=2pt, label=left:$e_2$] (e2) at (0.5,0.5) {};
    \node[fill=black, circle, inner sep=2pt, label=right:$e_1+e_2$] (e1e2) at (1,1) {};
    \node[fill=black, circle, inner sep=2pt, label=above:$e_1$] (e3) at (1,-1) {};
    \node[fill=black, circle, inner sep=1pt, label=right:$e_4$] (e4) at (2,0.5) {};
    \node[fill=black, circle, inner sep=1pt, label=right:$e$] (e) at (2,-0.5) {};

    \draw (e3) -- (e4);
    \draw (e3) -- (e);
  \end{tikzpicture}
&5  & $\ell_*^a \leq 3$ & $\ell_*^a \leq 2$ & $\ell_*^a \leq 2$\\
\hline
  $p^f=4, p^d=1, \ell_*^a\geq1$
  \begin{tikzpicture}
    \node[fill=black, circle, inner sep=2pt, label=left:$e_3$] (e1) at (0,0) {};

    \node[fill=black, circle, inner sep=2pt, label=left:$e_2$] (e2) at (0.5,0.5) {};
    \node[fill=black, circle, inner sep=2pt, label=right:$e_1+e_2$] (e1e2) at (1,1) {};
    \node[fill=black, circle, inner sep=2pt, label=left:$e_1$] (e3) at (1,-1) {};
    \node[fill=black, circle, inner sep=2pt, label=right:$e_4$] (e4) at (2,0.5) {};
    \node[fill=black, circle, inner sep=1pt, label=right:$e$] (e) at (2,-0.5) {};

    \draw (e3) -- (e);
  \end{tikzpicture}
& 6 & $\ell_*^a \leq 2$ & $\ell_*^a \leq 1$ & $\ell_*^a \leq 1$
\\
\hline
  $p^f=3, p^d=2, \ell_*^a\geq2$
  \begin{tikzpicture}
    \node[fill=black, circle, inner sep=2pt, label=left:$e_1$] (e1) at (0,0) {};

    \node[fill=black, circle, inner sep=2pt, label=left:$e_2$] (e2) at (0.5,0.5) {};
    \node[fill=black, circle, inner sep=2pt, label=right:$e_1+e_2$] (e1e2) at (1,1) {};
    \node[fill=black, circle, inner sep=2pt, label=left:$e_3$] (e3) at (1,-0.5) {};
    \node[fill=black, circle, inner sep=2pt, label=left:$e_1+e_3$] (e1e3) at (2,-1) {};
    \node[fill=black, circle, inner sep=1pt, label=right:$e_4$] (e4) at (1.5,0.5) {};
    \node[fill=black, circle, inner sep=1pt, label=right:$e$] (e) at (2,0) {};

    \draw (e1) -- (e4);
    \draw (e1) -- (e);
  \end{tikzpicture}
& 7 & $\ell_*^a \leq 3$ & $\ell_*^a \leq 2$ & $\ell_*^a \leq 2$
\\
\hline
\end{tabular}
\caption{Necessary conditions for minimality. Same notation as in Table \ref{tab:reducedSubproblems}.}
\label{tab:NonMinCriteria}
\end{table*}

To deal with free lines in the case of five points in one plane, we can formulate an $8$th criterion which is independent of the number of cameras as long as this number is greater than $2$.

\begin{lemma}\label{lem:Criterion8}
    Let $(p,\ell,\mathcal{I},m)$ be a minimal PLP with $m\geq 3$ and 5 points contained in a plane, 2 out of which are dependent on the other 3. Then, any line has to be adjacent to one of these 5 points.
\end{lemma}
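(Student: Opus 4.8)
The plan is to argue by contradiction and to reduce the statement to a single dimension count for the subarrangement consisting only of the five coplanar points (three free, two dependent) together with one offending line. So suppose the PLP is minimal and, for a generic realization, contains a line $\ell$ adjacent to none of the five points. By Lemma~\ref{lem:balancedVsMinimal} its joint camera map $\Fm$ has full-dimensional image, and by the first reduction step in the proof of Proposition~\ref{Schlüssellemma} — the commutative square obtained by forgetting every feature except the five points and $\ell$, which uses only dominance and not the reduced-problem structure — the joint camera map $\Phi_{p',l',\II',m}$ of the sub-PLP $(p',l',\II',m)=(5,1,\II',m)$, with $\II'$ recording the two dependencies and $\ell$ free, also has full-dimensional image. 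Since $\dim\YY_{p',l',\II'}=2\cdot 3+2+2=10$, this means $\dim\im(\Phi_{p',l',\II',m})=10m$. I will contradict this by bounding the image dimension by $9m+2$.

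The geometric heart of the argument is that the five coplanar points pin down, in each view, the entire homography of their common plane, so that the off-plane camera freedom cannot move the image of $\ell\cap H_0$. Concretely, the three free points span a plane $H_0$ and the two dependent points lie on two of the lines spanned by pairs of free points, so for a generic arrangement $\ell$ meets $H_0$ in a single point $R$. For any reconstruction the restriction of the $k$-th camera to $H_0$ is a homography $\Pi_k$, and the five incident image points determine $\Pi_k$ uniquely: the images of $e_1,e_2,e_3$ fix $\Pi_k$ up to a diagonal $2$-torus, and the images of the two dependent points (lying on $\overline{e_1e_2}$ and $\overline{e_1e_3}$) kill that torus. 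Hence $P_kR=\Pi_k(R)$ is determined by the five point-images together with $R$ alone, independently of the remaining camera parameters, and since $z_k:=P_k\ell$ necessarily passes through $\Pi_k(R)$, each image line is confined to the pencil through a determined point.

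I would then bound the image directly via a containment. Writing $\YY_0$ for the factor of $\YY_{p',l',\II'}$ recording the five point-images (so $\dim\YY_0=8$), let $W\subseteq \YY_0^m\times\GG_{1,2}^m$ be the set of tuples $\bigl(y,(z_k)_k\bigr)$ for which there exists $R\in H_0$ with $\Pi_k^{y}(R)\in z_k$ for all $k$, where $\Pi_k^{y}$ is the plane homography determined by the $k$-th point-image $y_k$. The previous paragraph shows $\im(\Phi_{p',l',\II',m})\subseteq W$. Parametrising $W$ by the point-images $y$ (at most $8m$ parameters, as $\dim\YY_0^m=8m$), by $R\in H_0\cong\PP^2$ ($2$ parameters), and by each $z_k$ ranging over the pencil through the now-fixed point $\Pi_k^{y}(R)$ ($1$ parameter each, $m$ in total) gives $\dim W\le 8m+2+m=9m+2$. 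Combining with the first paragraph forces $10m\le 9m+2$, i.e.\ $m\le 2$, contradicting $m\ge 3$; crucially the line contributes only $2+m$ rather than $2m$, which is exactly why the criterion holds for all $m>2$ simultaneously and independently of the number of cameras.

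The main obstacle is conceptual rather than computational: a genuine reduced problem in the sense of Definition~\ref{def:reduced} does \emph{not} exist here. The stabiliser of the plane configuration fixes $H_0$ pointwise but moves a generic line, so the common stabiliser of the five points together with $\ell$ depends on $\ell$ and condition~(ii) fails; there is no constant subgroup one can mod out of the cameras. The work therefore lies in replacing the reduced-problem inequality \eqref{prop:Ineq} by the explicit containment $\im\subseteq W$ and in verifying its two pillars — that the five image points truly determine $\Pi_k$ (so $\Pi_k(R)$ is an invariant of the image data plus $R$), and that the pencil constraint really caps the per-view line contribution at $1$. Reassuringly, the resulting bound $9m+2$ coincides with the value $m\dim(\CC')+\dim(\XX')$ one would have obtained from the (nonexistent) reduced problem with $\dim\CC'=9$ and $\dim\XX'=2$, so the direct estimate recovers precisely the count the stabiliser heuristic predicts.
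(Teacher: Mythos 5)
Your overall strategy is sound and, as you note, reproduces exactly the paper's count: reduce by dominance to the sub-PLP consisting of the five coplanar points and the offending line, then trap the image of its joint camera map inside a set of dimension $9m+2$ and compare with $10m$. However, the geometric pillar of your bound has a genuine gap. You normalize only the three free points to $e_1,e_2,e_3$ and leave the two dependent points at \emph{unknown} positions on $\overline{e_1e_2}$ and $\overline{e_1e_3}$; under that normalization the claim that ``the images of the two dependent points kill that torus'' is false. For any element $t$ of the two-dimensional torus stabilizing $e_1,e_2,e_3$, the homography $\Pi_k t$ is compatible with exactly the same five image points once the unknown dependent points $d_j$ are replaced by $t^{-1}d_j$: since $\Pi_k t$ still maps $\overline{e_1e_2}$ onto the image line spanned by the images of $e_1,e_2$, an incident image point whose world preimage is itself unknown imposes no condition whatsoever on $\Pi_k$. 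Hence $\Pi_k^{y}$ is not a (generically finite-valued) function of $y_k$ and your $W$ is not well defined; if one instead allows all compatible homographies, the parametrization acquires $2m$ extra parameters and the bound degrades to $11m+2$, which proves nothing. The fix is to normalize \emph{all five} coplanar points, e.g.\ to $e_1,e_2,e_3,e_1+e_2,e_1+e_3$ (possible because $e_2,e_3,e_1+e_2,e_1+e_3$ are in general position in the plane -- this is precisely the normalization in the paper's proof of Lemma~\ref{lem:homography}); then each $y_k$ does determine $\Pi_k$ uniquely, the containment $\im\subseteq W$ survives (image data is unchanged under renormalizing the world by $\PGL{4}$), and your count $8m+2+m=9m+2$ goes through. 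Alternatively, keep your normalization, parametrize by $(d_1,d_2,(\Pi_k),R,(z_k))$ (dimension $9m+4$) and quotient by the free residual torus action on the fibers; either way, the missing step must be supplied.

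Your ``conceptual obstacle'' paragraph is also incorrect, and it matters because it is your stated reason for abandoning the paper's machinery. A reduced problem in the sense of Definition~\ref{def:reduced} \emph{does} exist here: the paper (last row of Table~\ref{tab:reducedSubproblems}) normalizes the three free points together with the \emph{line} -- taking the line to be $\overline{e_1e}$ with $e=e_2+e_3+e_4$ -- and leaves the two dependent points unnormalized. The stabilizer of that partial normalization is the constant two-dimensional group acting as the identity on the plane spanned by the free points, so it automatically fixes every dependent point regardless of position, and condition (ii) of Definition~\ref{def:reduced} holds. With $\dim\CC'=9$ and $\dim\XX''=2$, inequality \eqref{prop:Ineq} reads $9m+2\geq 10m$, i.e.\ $m\leq 2$ -- the same numbers you obtain. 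In short, your direct-containment route is a legitimate alternative once the homography-determination step is repaired, but the obstruction you invoke to motivate it is an artifact of normalizing the wrong features: the stabilizer technique applies verbatim if one normalizes the line instead of the dependent points.
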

\begin{proof}
    Suppose there is a line not adjacent to one of the 5 points. We consider the reduced subproblem containing the 3 free points in the plane and the line. Note that the line is free in the subproblem, even if it was not free in the original PLP. A description of the stabilizers and reduced varieties is shown in the last row of Table \ref{tab:reducedSubproblems}. We observe that the stabilizer acts as the identity on the plane spanned by the free points and thus the two dependent points are also fixed. Hence, as in the proof of Proposition \ref{prop:minimalCriteria}, by adding the unnormalized dependent points to $\mathcal{X}'$, we obtain $\mathcal{X}''$ with the same stabilizer. 
    Next, we compute  $\dim(\CC ')=9$, $\dim(\mathcal{X}'')=2$ and $\dim(\mathcal{Y}_{p',l',\mathcal{I}'})=2\cdot 3+2+2$. Therefore, inequality \eqref{prop:Ineq} becomes $  9m+2\geq 10m$, i.e., $ m\leq 2$.
\end{proof}

\begin{remark}
    There is also a geometric proof of this lemma. After normalizing the world points, each camera gives a unique isomorphism between the plane $\Pi$ spanned by the points and the image plane. The intersection of the preimage plane of the line under the camera map with the plane $\Pi$ is given by applying the inverse of the isomorphism to the line in the image. Hence, for each image, we get a unique line in the plane $\Pi$. These three lines have to intersect in one point (namely the intersection of the true world line with the plane $\Pi$), but three generic lines do not intersect and hence the reconstruction is not possible in general. \hfill $\diamondsuit$
\end{remark}

The 130 non-minimal balanced PLPs mentioned above are listed in our code\footref{code}(lines 1396-1544), together with the number of one neccessary minimality-condition they violate.

\section{Non-minimality via elimination}
\label{appendix:nonMinimal19}
In the previous section, we described a method to identify overconstrained subproblems in non-minimal problems. This method covered the majority of the non-minimal balanced problems described in part c) of the proof of our main theorem. However, there are 19 cases left for which we need a more detailed look at the equations. But also here we will make use of the stabilizers. In order to identify contradictory constraints, we start of by finding a minimal subproblem to eliminate some of the variables. 
This factorizes the original non-minimal problem into the minimal subproblem and the `remainder problem', analogously to the factorization in Example \ref{ex:positiveEffect}.
We then find contradictory equations in the `remainder problem'.

\begin{example}\label{ex:(34112)}
    One of the remaining balanced PLPs is the scenario where 3 cameras observe 4 free points, 1 dependent point, 1 free line and 2 lines attached to a point that is not among the collinear points; see first entry of Table \ref{tab:elim}. We consider the subproblem arising from omitting the free line. More explicitely, we  choose the normalized point-line variety to be $\XX'=\lbrace (e_1,e_2,e_3,e_4,\overline{e_1e})\rbrace\times  \{ \lambda e_3+\mu e_4 \mid (\lambda:\mu) \in \PP^1 \}\times \{ \overline{e_1Q} \mid Q=\lambda e_2+\mu e_3+\nu e_4, (\lambda:\mu:\nu) \in \PP^2  \}$. We can compute its stabilizer to be \begin{gather}
      \left\lbrace   \begin{bmatrix}
        \lambda & 0 & 0 & 0\\
        0& 1 & 0& 0\\
        0 &0&1&0\\
        0&0&0&1
    \end{bmatrix} \mathrel{\Bigg\vert} \lambda\neq 0\right\rbrace.
    \end{gather}
    Setting the reduced camera variety to \begin{gather}
        \CC':= \left\lbrace \begin{bmatrix}
    1 & 1& c_{1}& c_2\\
    c_3 & c_4 & c_5 & c_6\\
    c_7 & c_8 & c_9& c_{10}
\end{bmatrix} \mathrel{\Bigg\vert} c_i\in \mathbb{R}\right\rbrace, \label{eq:cameraParamsSectionC}
    \end{gather}
    we can verify that this indeed satisfies the definition of a reduced subproblem. 
    Moreover, it is a balanced subproblem, i.e., it satisfies the inequality in \eqref{prop:Ineq}
with equality.
To see this, we just compute the dimensions: $\dim(\CC')=10$, $\dim(\XX')=1+2$ and $\dim(\mathcal{Y}_{p',\ell',\mathcal{I}'})= 4\cdot 2+ 1+2\cdot 1$, and indeed $30+3=3\cdot 11$.
Hence, if we assume for the sake of contradiction that the original PLP was minimal, then Proposition \ref{Schlüssellemma} (together with Lemma \ref{lem:balancedVsMinimal}) would tell us that the subproblem is minimal as well.
    

Therefore, we could solve the subproblem first and then assume 
that the camera parameters $c_j$ in \eqref{eq:cameraParamsSectionC} are  just numbers. To distinguish between the different cameras, we  decorate them with a superscript $i$. Re-introducing the  parameters that we could not reconstruct from the subproblem, we obtain cameras of the form 
\begin{gather}
    \begin{bmatrix}
        \lambda^i & 1& c_{1}^i& c_2^i\\
    c_3^i\lambda^i & c_4^i & c_5^i & c_6^i\\
    c_7^i\lambda^i & c_8^i & c_9^i& c_{10}^i
    \end{bmatrix}.
\end{gather}
Here, the remaining $\PGL{4}$-action allows us to set $\lambda^1=1$.

 Now, we consider the free line from the original PLP. Generically, it intersects the plane orthogonal to $e_1$ at a unique point of the form $\begin{bmatrix}
    0 &
    1&
    x_1&
    x_2
\end{bmatrix}^\top$.
After applying $C_i$ to it, the resulting image point has to be orthogonal to the coefficient vector representing the free line in the $i$-th image:  $\begin{bmatrix}
    y_1^i&
    y_2^i&
    1
\end{bmatrix}^\top$. Since our point on the line has a $0$ in its first entry, the resulting equation
$$
\begin{bmatrix}
    y_1^i&
    y_2^i&
    1
\end{bmatrix} C_i  \begin{bmatrix}
    0 &
    1&
    x_1&
    x_2
\end{bmatrix}^\top = 0
$$
does not  contain  $\lambda^i$. Hence, for $i=1,2,3$, we obtain three independent equations in two variables, which has no solution generically.  \hfill $\diamondsuit$
\end{example}
The strategy used in this example of eliminating variables and finding an overconstrained subsystem can directly be used for 15 further cases. 
These are listed in Table \ref{tab:elim} together with the respective subarrangements one has to consider and the parts of the 3D arrangement causing the contradictory constraints.


\begin{table*}[htb]
    \centering
    \begin{tabular}{|>{\centering\arraybackslash}m{5em}|>{\centering\arraybackslash}m{5em}|m{10em}||>{\centering\arraybackslash}m{5em}|>{\centering\arraybackslash}m{5em}|m{10em}|}
\hline
  Problem & Subproblem & \multicolumn{1}{c||}{Constraints} & Problem & Subproblem & \multicolumn{1}{c|}{Constraints}
\\
\hline
\begin{tikzpicture}[scale=1.4]
\coordinate (p_1) at (0.275434, 0.814813);
\coordinate (p_2) at (0.886218, 0.58854);
\coordinate (p_3) at (0.608196, 0.162511);
\coordinate (p_4) at (0.114101, 0.340228);
\coordinate (d_1_2) at (${0.5}*(p_1) + {0.5}*(p_2)$);
\coordinate (lf_1_1) at (0.562998875650816, 1.0);
\coordinate (lf_1_2) at (0.0, 0.3226269307829783);
\coordinate (_la_1) at (-0.7491417217476636, 0.66240975289993);
\coordinate (la_1_1) at ($(p_3) + {-0.24533304240849615}*(_la_1)$);
\coordinate (la_1_2) at ($(p_3) + {0.8118570656846437}*(_la_1)$);
\coordinate (_la_2) at (-0.786249542106276, -0.6179091013552651);
\coordinate (la_2_1) at ($(p_3) + {-0.4983201630240703}*(_la_2)$);
\coordinate (la_2_2) at ($(p_3) + {0.26300146679109154}*(_la_2)$);
\draw[dotted, gray] (p_1) -- (p_2);
\draw[yellow] (la_1_1) -- (la_1_2);
\draw[yellow] (la_2_1) -- (la_2_2);
\draw[violet] (lf_1_1) -- (lf_1_2);
\filldraw[fill=cyan] (d_1_2) circle[radius=1pt];
\filldraw[fill=red] (p_1) circle[radius=1pt];
\filldraw[fill=red] (p_2) circle[radius=1pt];
\filldraw[fill=red] (p_3) circle[radius=1pt];
\filldraw[fill=red] (p_4) circle[radius=1pt];
\draw (0, 0) -- (1, 0) -- (1, 1) -- (0, 1) -- cycle;
\node[below] at (.5, 0) {\scriptsize$(4{,}1{,}1{,}2)$};
\node[above] at (.5, 1) {$m = 3$};
\end{tikzpicture}\vspace{-4pt}
&
\begin{tikzpicture}[scale=1.4]
\coordinate (p_1) at (0.275434, 0.814813);
\coordinate (p_2) at (0.886218, 0.58854);
\coordinate (p_3) at (0.608196, 0.162511);
\coordinate (p_4) at (0.114101, 0.340228);
\coordinate (d_1_2) at (${0.5}*(p_1) + {0.5}*(p_2)$);

\coordinate (_la_1) at (-0.7491417217476636, 0.66240975289993);
\coordinate (la_1_1) at ($(p_3) + {-0.24533304240849615}*(_la_1)$);
\coordinate (la_1_2) at ($(p_3) + {0.8118570656846437}*(_la_1)$);
\coordinate (_la_2) at (-0.786249542106276, -0.6179091013552651);
\coordinate (la_2_1) at ($(p_3) + {-0.4983201630240703}*(_la_2)$);
\coordinate (la_2_2) at ($(p_3) + {0.26300146679109154}*(_la_2)$);
\draw[dotted, gray] (p_1) -- (p_2);
\draw[yellow] (la_1_1) -- (la_1_2);
\draw[yellow] (la_2_1) -- (la_2_2);
\filldraw[fill=cyan] (d_1_2) circle[radius=1pt];
\filldraw[fill=red] (p_1) circle[radius=1pt];
\filldraw[fill=red] (p_2) circle[radius=1pt];
\filldraw[fill=red] (p_3) circle[radius=1pt];
\filldraw[fill=red] (p_4) circle[radius=1pt];
\draw (0, 0) -- (1, 0) -- (1, 1) -- (0, 1) -- cycle;
\node[below] at (.5, 0) {\scriptsize$(4{,}1{,}0{,}2)$};
\node[above] at (.5, 1) {\vphantom{$m = 3$}};
\end{tikzpicture}\vspace{-4pt}
&
1 point on free line with $0$ in one coordinate
&
\begin{tikzpicture}[scale=1.4]
\coordinate (p_1) at (0.19, 0.19);
\coordinate (p_2) at (0.82, 0.16);
\coordinate (p_3) at (0.45, 0.87);
\coordinate (d_1_2) at (${0.5}*(p_1) + {0.5}*(p_2)$);
\coordinate (lf_1_1) at (0.1434024680608862, 0.0);
\coordinate (lf_1_2) at (1.0, 0.6844032375328413);
\coordinate (lf_2_1) at (0.6781903743095253, 1.0);
\coordinate (lf_2_2) at (0.3763339794244607, 0.0);
\coordinate (_la_1) at (-0.09597778921088826, -0.9953834758414418);
\coordinate (la_1_1) at ($(p_3) + {-0.1306029315888585}*(_la_1)$);
\coordinate (la_1_2) at ($(p_3) + {0.8740350037100529}*(_la_1)$);
\draw[dotted, gray] (p_1) -- (p_2);
\draw[yellow] (la_1_1) -- (la_1_2);
\draw[violet] (lf_1_1) -- (lf_1_2);
\draw[violet] (lf_2_1) -- (lf_2_2);
\filldraw[fill=cyan] (d_1_2) circle[radius=1pt];
\filldraw[fill=red] (p_1) circle[radius=1pt];
\filldraw[fill=red] (p_2) circle[radius=1pt];
\filldraw[fill=red] (p_3) circle[radius=1pt];
\draw (0, 0) -- (1, 0) -- (1, 1) -- (0, 1) -- cycle;
\node[below] at (.5, 0) {\scriptsize$(3{,}1{,}2{,}1)$};
\node[above] at (.5, 1) {$m = 5$};
\end{tikzpicture}\vspace{-4pt}
&
\begin{tikzpicture}[scale=1.4]
\coordinate (p_1) at (0.19, 0.19);
\coordinate (p_2) at (0.82, 0.16);
\coordinate (p_3) at (0.45, 0.87);
\coordinate (d_1_2) at (${0.5}*(p_1) + {0.5}*(p_2)$);
\filldraw[fill=cyan] (d_1_2) circle[radius=1pt];
\filldraw[fill=red] (p_1) circle[radius=1pt];
\filldraw[fill=red] (p_2) circle[radius=1pt];
\filldraw[fill=red] (p_3) circle[radius=1pt];
\draw (0, 0) -- (1, 0) -- (1, 1) -- (0, 1) -- cycle;
\node[below] at (.5, 0) {\scriptsize$(3{,}1{,}0{,}0)$};
\node[above] at (.5, 1) {\vphantom{$m = 5$}};
\end{tikzpicture}\vspace{-4pt}
&
1 point on each free line with $0$ in one coordinate
\\
\hline
\begin{tikzpicture}[scale=1.4]
\coordinate (p_1) at (0.275434, 0.814813);
\coordinate (p_2) at (0.886218, 0.58854);
\coordinate (p_3) at (0.608196, 0.162511);
\coordinate (p_4) at (0.114101, 0.340228);
\coordinate (lf_1_1) at (0.6095855057649198, 0.0);
\coordinate (lf_1_2) at (0.0, 0.7160518693088268);
\coordinate (_la_1) at (0.6156411091665215, 0.7880266649703012);
\coordinate (la_1_1) at ($(p_1) + {-0.44739377520272017}*(_la_1)$);
\coordinate (la_1_2) at ($(p_1) + {0.235000931100446}*(_la_1)$);
\coordinate (_la_2) at (-0.20904233124028718, 0.9779065925484018);
\coordinate (la_2_1) at ($(p_1) + {-0.8332217066628176}*(_la_2)$);
\coordinate (la_2_2) at ($(p_1) + {0.18937084728860143}*(_la_2)$);
\coordinate (_la_3) at (-0.8473498790649482, 0.5310350105676815);
\coordinate (la_3_1) at ($(p_1) + {-0.8550965992932691}*(_la_3)$);
\coordinate (la_3_2) at ($(p_1) + {0.3250534481741377}*(_la_3)$);
\coordinate (_la_4) at (-0.9108389669614937, -0.4127618880959325);
\coordinate (la_4_1) at ($(p_1) + {-0.44865334068090984}*(_la_4)$);
\coordinate (la_4_2) at ($(p_1) + {0.30239593384858354}*(_la_4)$);
\draw[yellow] (la_1_1) -- (la_1_2);
\draw[yellow] (la_2_1) -- (la_2_2);
\draw[yellow] (la_3_1) -- (la_3_2);
\draw[yellow] (la_4_1) -- (la_4_2);
\draw[violet] (lf_1_1) -- (lf_1_2);
\filldraw[fill=red] (p_1) circle[radius=1pt];
\filldraw[fill=red] (p_2) circle[radius=1pt];
\filldraw[fill=red] (p_3) circle[radius=1pt];
\filldraw[fill=red] (p_4) circle[radius=1pt];
\draw (0, 0) -- (1, 0) -- (1, 1) -- (0, 1) -- cycle;
\node[below] at (.5, 0) {\scriptsize$(4{,}0{,}1{,}4)$};
\node[above] at (.5, 1) {$m = 3$};
\end{tikzpicture}\vspace{-4pt}
 & \begin{tikzpicture}[scale=1.4]
\coordinate (p_1) at (0.275434, 0.814813);
\coordinate (p_2) at (0.886218, 0.58854);
\coordinate (p_3) at (0.608196, 0.162511);
\coordinate (p_4) at (0.114101, 0.340228);
\coordinate (lf_1_1) at (0.6095855057649198, 0.0);
\coordinate (lf_1_2) at (0.0, 0.7160518693088268);
\coordinate (_la_1) at (0.6156411091665215, 0.7880266649703012);
\coordinate (la_1_1) at ($(p_1) + {-0.44739377520272017}*(_la_1)$);
\coordinate (la_1_2) at ($(p_1) + {0.235000931100446}*(_la_1)$);
\coordinate (_la_2) at (-0.20904233124028718, 0.9779065925484018);
\coordinate (la_2_1) at ($(p_1) + {-0.8332217066628176}*(_la_2)$);
\coordinate (la_2_2) at ($(p_1) + {0.18937084728860143}*(_la_2)$);
\coordinate (_la_3) at (-0.8473498790649482, 0.5310350105676815);
\coordinate (la_3_1) at ($(p_1) + {-0.8550965992932691}*(_la_3)$);
\coordinate (la_3_2) at ($(p_1) + {0.3250534481741377}*(_la_3)$);
\coordinate (_la_4) at (-0.9108389669614937, -0.4127618880959325);
\coordinate (la_4_1) at ($(p_1) + {-0.44865334068090984}*(_la_4)$);
\coordinate (la_4_2) at ($(p_1) + {0.30239593384858354}*(_la_4)$);
\draw[yellow] (la_1_1) -- (la_1_2);
\draw[yellow] (la_2_1) -- (la_2_2);
\draw[yellow] (la_3_1) -- (la_3_2);
\draw[yellow] (la_4_1) -- (la_4_2);

\filldraw[fill=red] (p_1) circle[radius=1pt];
\filldraw[fill=red] (p_2) circle[radius=1pt];
\filldraw[fill=red] (p_3) circle[radius=1pt];
\filldraw[fill=red] (p_4) circle[radius=1pt];
\draw (0, 0) -- (1, 0) -- (1, 1) -- (0, 1) -- cycle;
\node[below] at (.5, 0) {\scriptsize$(4{,}0{,}0{,}4)$};
\node[above] at (.5, 1) {\vphantom{$m = 3$}};
\end{tikzpicture}\vspace{-4pt} & 1 point on free line with $0$ in one coordinate
&
\begin{tikzpicture}[scale=1.4]
\coordinate (p_1) at (0.19, 0.19);
\coordinate (p_2) at (0.82, 0.16);
\coordinate (p_3) at (0.45, 0.87);
\coordinate (d_1_2) at (${0.5}*(p_1) + {0.5}*(p_2)$);
\coordinate (lf_1_1) at (1.0, 0.27345849240536807);
\coordinate (lf_1_2) at (0.6001768997016045, 1.0);
\coordinate (lf_2_1) at (0.0, 0.599659354786837);
\coordinate (lf_2_2) at (0.5697869007322278, 0.0);
\coordinate (_la_1) at (-0.5515977714128465, -0.8341102436563054);
\coordinate (la_1_1) at ($(p_1) + {-0.9710946558447495}*(_la_1)$);
\coordinate (la_1_2) at ($(p_1) + {0.2277876353216079}*(_la_1)$);
\draw[dotted, gray] (p_1) -- (p_2);
\draw[yellow] (la_1_1) -- (la_1_2);
\draw[violet] (lf_1_1) -- (lf_1_2);
\draw[violet] (lf_2_1) -- (lf_2_2);
\filldraw[fill=cyan] (d_1_2) circle[radius=1pt];
\filldraw[fill=red] (p_1) circle[radius=1pt];
\filldraw[fill=red] (p_2) circle[radius=1pt];
\filldraw[fill=red] (p_3) circle[radius=1pt];
\draw (0, 0) -- (1, 0) -- (1, 1) -- (0, 1) -- cycle;
\node[below] at (.5, 0) {\scriptsize$(3{,}1{,}2{,}1)$};
\node[above] at (.5, 1) {$m = 5$};
\end{tikzpicture}\vspace{-4pt} 
& \begin{tikzpicture}[scale=1.4]
\coordinate (p_1) at (0.19, 0.19);
\coordinate (p_2) at (0.82, 0.16);
\coordinate (p_3) at (0.45, 0.87);
\coordinate (d_1_2) at (${0.5}*(p_1) + {0.5}*(p_2)$);
\draw[dotted, gray] (p_1) -- (p_2);
\filldraw[fill=cyan] (d_1_2) circle[radius=1pt];
\filldraw[fill=red] (p_1) circle[radius=1pt];
\filldraw[fill=red] (p_2) circle[radius=1pt];
\filldraw[fill=red] (p_3) circle[radius=1pt];
\draw (0, 0) -- (1, 0) -- (1, 1) -- (0, 1) -- cycle;
\node[below] at (.5, 0) {\scriptsize$(3{,}1{,}0{,}0)$};
\node[above] at (.5, 1) {\vphantom{$m = 5$}};
\end{tikzpicture}\vspace{-4pt}  & 1 point on each free line with $0$ in one coordinate\\

\hline
 \begin{tikzpicture}[scale=1.4]
\coordinate (p_1) at (0.19, 0.19);
\coordinate (p_2) at (0.82, 0.16);
\coordinate (p_3) at (0.45, 0.87);
\coordinate (d_1_2) at (${0.5}*(p_1) + {0.5}*(p_2)$);
\coordinate (lf_1_1) at (0.0, 0.8406220952684439);
\coordinate (lf_1_2) at (1.0, 0.532741217482898);
\coordinate (lf_2_1) at (0.0, 0.3859072208661184);
\coordinate (lf_2_2) at (0.8298076883000585, 1.0);
\coordinate (lf_3_1) at (0.0, 0.35686298646938555);
\coordinate (lf_3_2) at (0.6848536202250521, 1.0);
\coordinate (_la_1) at (0.36776792554791676, -0.9299176054566243);
\coordinate (la_1_1) at ($(p_3) + {-0.13979733176055437}*(_la_1)$);
\coordinate (la_1_2) at ($(p_3) + {0.9355667587052484}*(_la_1)$);
\draw[dotted, gray] (p_1) -- (p_2);
\draw[yellow] (la_1_1) -- (la_1_2);
\draw[violet] (lf_1_1) -- (lf_1_2);
\draw[violet] (lf_2_1) -- (lf_2_2);
\draw[violet] (lf_3_1) -- (lf_3_2);
\filldraw[fill=cyan] (d_1_2) circle[radius=1pt];
\filldraw[fill=red] (p_1) circle[radius=1pt];
\filldraw[fill=red] (p_2) circle[radius=1pt];
\filldraw[fill=red] (p_3) circle[radius=1pt];
\draw (0, 0) -- (1, 0) -- (1, 1) -- (0, 1) -- cycle;
\node[below] at (.5, 0) {\scriptsize$(3{,}1{,}3{,}1)$};
\node[above] at (.5, 1) {$m = 3$};
\end{tikzpicture}\vspace{-4pt}
 & 
 \begin{tikzpicture}[scale=1.4]
\coordinate (p_1) at (0.19, 0.19);
\coordinate (p_2) at (0.82, 0.16);
\coordinate (p_3) at (0.45, 0.87);
\coordinate (d_1_2) at (${0.5}*(p_1) + {0.5}*(p_2)$);

\draw[dotted, gray] (p_1) -- (p_2);

\filldraw[fill=cyan] (d_1_2) circle[radius=1pt];
\filldraw[fill=red] (p_1) circle[radius=1pt];
\filldraw[fill=red] (p_2) circle[radius=1pt];
\filldraw[fill=red] (p_3) circle[radius=1pt];
\draw (0, 0) -- (1, 0) -- (1, 1) -- (0, 1) -- cycle;
\node[below] at (.5, 0) {\scriptsize$(3{,}1{,}0{,}0)$};
\node[above] at (.5, 1) {\vphantom{$m = 3$}};
\end{tikzpicture}\vspace{-4pt}
&
1 point on each free line with $0$ in one coordinate
&
\begin{tikzpicture}[scale=1.4]
\coordinate (p_1) at (0.19, 0.19);
\coordinate (p_2) at (0.82, 0.16);
\coordinate (p_3) at (0.45, 0.87);
\coordinate (d_1_2) at (${0.5}*(p_1) + {0.5}*(p_2)$);
\coordinate (lf_1_1) at (0.7335831431657862, 0.0);
\coordinate (lf_1_2) at (0.6647115407204371, 1.0);
\coordinate (_la_1) at (-0.812297910461451, -0.5832427493419533);
\coordinate (la_1_1) at ($(p_1) + {-0.9971710988888972}*(_la_1)$);
\coordinate (la_1_2) at ($(p_1) + {0.23390433183813636}*(_la_1)$);
\coordinate (_la_2) at (-0.7506968045226046, 0.6606468857714767);
\coordinate (la_2_1) at ($(p_3) + {-0.7326526457639114}*(_la_2)$);
\coordinate (la_2_2) at ($(p_3) + {0.19677683010371155}*(_la_2)$);
\coordinate (_la_3) at (-0.7710459831373743, -0.6367794688019707);
\coordinate (la_3_1) at ($(p_3) + {-0.20415231075929702}*(_la_3)$);
\coordinate (la_3_2) at ($(p_3) + {0.5836227797581629}*(_la_3)$);
\draw[dotted, gray] (p_1) -- (p_2);
\draw[yellow] (la_1_1) -- (la_1_2);
\draw[yellow] (la_2_1) -- (la_2_2);
\draw[yellow] (la_3_1) -- (la_3_2);
\draw[violet] (lf_1_1) -- (lf_1_2);
\filldraw[fill=cyan] (d_1_2) circle[radius=1pt];
\filldraw[fill=red] (p_1) circle[radius=1pt];
\filldraw[fill=red] (p_2) circle[radius=1pt];
\filldraw[fill=red] (p_3) circle[radius=1pt];
\draw (0, 0) -- (1, 0) -- (1, 1) -- (0, 1) -- cycle;
\node[below] at (.5, 0) {\scriptsize$(3{,}1{,}1{,}3)$};
\node[above] at (.5, 1) {$m = 5$};
\end{tikzpicture}\vspace{-4pt}
 & \begin{tikzpicture}[scale=1.4]
\coordinate (p_1) at (0.19, 0.19);
\coordinate (p_2) at (0.82, 0.16);
\coordinate (p_3) at (0.45, 0.87);
\coordinate (d_1_2) at (${0.5}*(p_1) + {0.5}*(p_2)$);
\coordinate (_la_1) at (-0.812297910461451, -0.5832427493419533);
\coordinate (la_1_1) at ($(p_1) + {-0.9971710988888972}*(_la_1)$);
\coordinate (la_1_2) at ($(p_1) + {0.23390433183813636}*(_la_1)$);

\draw[dotted, gray] (p_1) -- (p_2);
\draw[yellow] (la_1_1) -- (la_1_2);

\filldraw[fill=cyan] (d_1_2) circle[radius=1pt];
\filldraw[fill=red] (p_1) circle[radius=1pt];
\filldraw[fill=red] (p_2) circle[radius=1pt];
\filldraw[fill=red] (p_3) circle[radius=1pt];
\draw (0, 0) -- (1, 0) -- (1, 1) -- (0, 1) -- cycle;
\node[below] at (.5, 0) {\scriptsize$(3{,}1{,}0{,}1)$};
\node[above] at (.5, 1) {\vphantom{$m = 5$}};
\end{tikzpicture}\vspace{-4pt} & 1 point on each unused (free and adjacent) line with $0$ in one coordinate\\
\hline
 \begin{tikzpicture}[scale=1.4]
\coordinate (p_1) at (0.19, 0.19);
\coordinate (p_2) at (0.82, 0.16);
\coordinate (p_3) at (0.45, 0.87);
\coordinate (d_1_2) at (${0.5}*(p_1) + {0.5}*(p_2)$);
\coordinate (lf_1_1) at (0.62505884415472677, 1.0);
\coordinate (lf_1_2) at (0.33684182036123494, 0.0);
\coordinate (lf_2_1) at (1.0, 0.840951067089406);
\coordinate (lf_2_2) at (0.19975607460139494, 1.0);
\coordinate (lf_3_1) at (0.4646259163179465, 0.0);
\coordinate (lf_3_2) at (0.0, 0.8689414789882421);
\coordinate (_la_1) at (-0.7081326871692482, -0.706079384603785);
\coordinate (la_1_1) at ($(p_1) + {-1.143853425602997}*(_la_1)$);
\coordinate (la_1_2) at ($(p_1) + {0.268311297363666}*(_la_1)$);
\draw[dotted, gray] (p_1) -- (p_2);
\draw[yellow] (la_1_1) -- (la_1_2);
\draw[violet] (lf_1_1) -- (lf_1_2);
\draw[violet] (lf_2_1) -- (lf_2_2);
\draw[violet] (lf_3_1) -- (lf_3_2);
\filldraw[fill=cyan] (d_1_2) circle[radius=1pt];
\filldraw[fill=red] (p_1) circle[radius=1pt];
\filldraw[fill=red] (p_2) circle[radius=1pt];
\filldraw[fill=red] (p_3) circle[radius=1pt];
\draw (0, 0) -- (1, 0) -- (1, 1) -- (0, 1) -- cycle;
\node[below] at (.5, 0) {\scriptsize$(3{,}1{,}3{,}1)$};
\node[above] at (.5, 1) {$m = 3$};
\end{tikzpicture}\vspace{-4pt}
& 
 \begin{tikzpicture}[scale=1.4]
\coordinate (p_1) at (0.19, 0.19);
\coordinate (p_2) at (0.82, 0.16);
\coordinate (p_3) at (0.45, 0.87);
\coordinate (d_1_2) at (${0.5}*(p_1) + {0.5}*(p_2)$);

\draw[dotted, gray] (p_1) -- (p_2);

\filldraw[fill=cyan] (d_1_2) circle[radius=1pt];
\filldraw[fill=red] (p_1) circle[radius=1pt];
\filldraw[fill=red] (p_2) circle[radius=1pt];
\filldraw[fill=red] (p_3) circle[radius=1pt];
\draw (0, 0) -- (1, 0) -- (1, 1) -- (0, 1) -- cycle;
\node[below] at (.5, 0) {\scriptsize$(3{,}1{,}0{,}0)$};
\node[above] at (.5, 1) {\vphantom{$m = 3$}};
\end{tikzpicture}\vspace{-4pt}& 1 point on each free line with $0$ in one coordinate & 
\begin{tikzpicture}[scale=1.4]
\coordinate (p_1) at (0.19, 0.19);
\coordinate (p_2) at (0.82, 0.16);
\coordinate (p_3) at (0.45, 0.87);
\coordinate (lf_1_1) at (1.0, 0.331643020594088);
\coordinate (lf_1_2) at (0.0, 0.4358379123323587);
\coordinate (lf_2_1) at (1.0, 0.8264838821226361);
\coordinate (lf_2_2) at (0.0, 0.5375592811767641);
\coordinate (lf_3_1) at (1.0, 0.6869798522231816);
\coordinate (lf_3_2) at (0.0, 0.2288459080301494);
\draw[violet] (lf_1_1) -- (lf_1_2);
\draw[violet] (lf_2_1) -- (lf_2_2);
\draw[violet] (lf_3_1) -- (lf_3_2);
\filldraw[fill=red] (p_1) circle[radius=1pt];
\filldraw[fill=red] (p_2) circle[radius=1pt];
\filldraw[fill=red] (p_3) circle[radius=1pt];
\draw (0, 0) -- (1, 0) -- (1, 1) -- (0, 1) -- cycle;
\node[below] at (.5, 0) {\scriptsize$(3{,}0{,}3{,}0)$};
\node[above] at (.5, 1) {$m = 6$};
\end{tikzpicture}\vspace{-4pt}
& \begin{tikzpicture}[scale=1.4]
\coordinate (p_1) at (0.19, 0.19);
\coordinate (p_2) at (0.82, 0.16);
\coordinate (p_3) at (0.45, 0.87);

\filldraw[fill=red] (p_1) circle[radius=1pt];
\filldraw[fill=red] (p_2) circle[radius=1pt];
\filldraw[fill=red] (p_3) circle[radius=1pt];
\draw (0, 0) -- (1, 0) -- (1, 1) -- (0, 1) -- cycle;
\node[below] at (.5, 0) {\scriptsize$(3{,}0{,}0{,}0)$};
\node[above] at (.5, 1) {\vphantom{$m = 6$}};
\end{tikzpicture}\vspace{-4pt}& 1 point on each free line with $0$ in one coordinate\\

\hline
\begin{tikzpicture}[scale=1.4]
\coordinate (p_1) at (0.19, 0.19);
\coordinate (p_2) at (0.82, 0.16);
\coordinate (p_3) at (0.45, 0.87);
\coordinate (d_1_2) at (${0.5}*(p_1) + {0.5}*(p_2)$);
\coordinate (lf_1_1) at (0.0, 0.6991599945836099);
\coordinate (lf_1_2) at (0.41990149856001313, 0.0);
\coordinate (_la_1) at (0.6882638792439697, 0.72546042795458);
\coordinate (la_1_1) at ($(p_1) + {-0.2619026382123983}*(_la_1)$);
\coordinate (la_1_2) at ($(p_1) + {1.1165322997475928}*(_la_1)$);
\coordinate (_la_2) at (0.06556145877108042, 0.997848533157116);
\coordinate (la_2_1) at ($(p_1) + {-0.19040966007020588}*(_la_2)$);
\coordinate (la_2_2) at ($(p_1) + {0.8117464455624567}*(_la_2)$);
\coordinate (_la_3) at (-0.5469974313113428, 0.8371342844124787);
\coordinate (la_3_1) at ($(p_1) + {-0.22696478156231129}*(_la_3)$);
\coordinate (la_3_2) at ($(p_1) + {0.34735080847546945}*(_la_3)$);
\coordinate (_la_4) at (-0.967669955622772, 0.2522198584271716);
\coordinate (la_4_1) at ($(p_1) + {-0.7533110246942052}*(_la_4)$);
\coordinate (la_4_2) at ($(p_1) + {0.19634793753384644}*(_la_4)$);
\coordinate (_la_5) at (-0.9863238439299868, -0.16481891546535204);
\coordinate (la_5_1) at ($(p_1) + {-0.8212312872540645}*(_la_5)$);
\coordinate (la_5_2) at ($(p_1) + {0.19263449947934844}*(_la_5)$);
\draw[dotted, gray] (p_1) -- (p_2);
\draw[yellow] (la_1_1) -- (la_1_2);
\draw[yellow] (la_2_1) -- (la_2_2);
\draw[yellow] (la_3_1) -- (la_3_2);
\draw[yellow] (la_4_1) -- (la_4_2);
\draw[yellow] (la_5_1) -- (la_5_2);
\draw[violet] (lf_1_1) -- (lf_1_2);
\filldraw[fill=cyan] (d_1_2) circle[radius=1pt];
\filldraw[fill=red] (p_1) circle[radius=1pt];
\filldraw[fill=red] (p_2) circle[radius=1pt];
\filldraw[fill=red] (p_3) circle[radius=1pt];
\draw (0, 0) -- (1, 0) -- (1, 1) -- (0, 1) -- cycle;
\node[below] at (.5, 0) {\scriptsize$(3{,}1{,}1{,}5)$};
\node[above] at (.5, 1) {$m = 3$};
\end{tikzpicture}\vspace{-4pt}
  & \begin{tikzpicture}[scale=1.4]
\coordinate (p_1) at (0.19, 0.19);
\coordinate (p_2) at (0.82, 0.16);
\coordinate (p_3) at (0.45, 0.87);
\coordinate (d_1_2) at (${0.5}*(p_1) + {0.5}*(p_2)$);
\coordinate (_la_1) at (0.6882638792439697, 0.72546042795458);
\coordinate (la_1_1) at ($(p_1) + {-0.2619026382123983}*(_la_1)$);
\coordinate (la_1_2) at ($(p_1) + {1.1165322997475928}*(_la_1)$);
\coordinate (_la_2) at (0.06556145877108042, 0.997848533157116);
\coordinate (la_2_1) at ($(p_1) + {-0.19040966007020588}*(_la_2)$);
\coordinate (la_2_2) at ($(p_1) + {0.8117464455624567}*(_la_2)$);
\coordinate (_la_3) at (-0.5469974313113428, 0.8371342844124787);
\coordinate (la_3_1) at ($(p_1) + {-0.22696478156231129}*(_la_3)$);
\coordinate (la_3_2) at ($(p_1) + {0.34735080847546945}*(_la_3)$);
\coordinate (_la_4) at (-0.967669955622772, 0.2522198584271716);
\coordinate (la_4_1) at ($(p_1) + {-0.7533110246942052}*(_la_4)$);
\coordinate (la_4_2) at ($(p_1) + {0.19634793753384644}*(_la_4)$);
\coordinate (_la_5) at (-0.9863238439299868, -0.16481891546535204);
\coordinate (la_5_1) at ($(p_1) + {-0.8212312872540645}*(_la_5)$);
\coordinate (la_5_2) at ($(p_1) + {0.19263449947934844}*(_la_5)$);
\draw[dotted, gray] (p_1) -- (p_2);
\draw[yellow] (la_1_1) -- (la_1_2);
\draw[yellow] (la_2_1) -- (la_2_2);
\draw[yellow] (la_3_1) -- (la_3_2);
\draw[yellow] (la_4_1) -- (la_4_2);
\draw[yellow] (la_5_1) -- (la_5_2);
\filldraw[fill=cyan] (d_1_2) circle[radius=1pt];
\filldraw[fill=red] (p_1) circle[radius=1pt];
\filldraw[fill=red] (p_2) circle[radius=1pt];
\filldraw[fill=red] (p_3) circle[radius=1pt];
\draw (0, 0) -- (1, 0) -- (1, 1) -- (0, 1) -- cycle;
\node[below] at (.5, 0) {\scriptsize$(3{,}1{,}0{,}5)$};
\node[above] at (.5, 1) {\vphantom{$m = 3$}};
\end{tikzpicture}\vspace{-4pt} & 1 point on free line with $0$ in one coordinate &
\begin{tikzpicture}[scale=1.4]
\coordinate (p_1) at (0.19, 0.19);
\coordinate (p_2) at (0.82, 0.16);
\coordinate (p_3) at (0.5, 0.87);
\coordinate (lf_1_1) at (0.2709510049703183, 0.0);
\coordinate (lf_1_2) at (1.0, 0.3609671134204344);
\coordinate (_la_1) at (0.28855878430575954, 0.9574621809763465);
\coordinate (la_1_1) at ($(p_1) + {-0.19844125833383056}*(_la_1)$);
\coordinate (la_1_2) at ($(p_1) + {0.845986417107383}*(_la_1)$);k\coordinate (_la_2) at (-0.7924278488582376, 0.6099656583398004);
\coordinate (la_2_1) at ($(p_1) + {-0.31149294620477563}*(_la_2)$);
\coordinate (la_2_2) at ($(p_1) + {0.23976946326881338}*(_la_2)$);
\coordinate (_la_3) at (-0.8633934166464874, -0.5045312756326461);
\coordinate (la_3_1) at ($(p_1) + {-0.9381586474751301}*(_la_3)$);
\coordinate (la_3_2) at ($(p_1) + {0.2200619049633021}*(_la_3)$);
\coordinate (_la_4) at (0.05817677888625933, -0.9983062968840872);
\coordinate (la_4_1) at ($(p_2) + {-0.8414251243549272}*(_la_4)$);
\coordinate (la_4_2) at ($(p_2) + {0.16027145225808137}*(_la_4)$);
\draw[yellow] (la_1_1) -- (la_1_2);
\draw[yellow] (la_2_1) -- (la_2_2);
\draw[yellow] (la_3_1) -- (la_3_2);
\draw[yellow] (la_4_1) -- (la_4_2);
\draw[violet] (lf_1_1) -- (lf_1_2);
\filldraw[fill=red] (p_1) circle[radius=1pt];
\filldraw[fill=red] (p_2) circle[radius=1pt];
\filldraw[fill=red] (p_3) circle[radius=1pt];
\draw (0, 0) -- (1, 0) -- (1, 1) -- (0, 1) -- cycle;
\node[below] at (.5, 0) {\scriptsize$(3{,}0{,}1{,}4)$};
\node[above] at (.5, 1) {$m = 6$};
\end{tikzpicture}\vspace{-4pt}
 & \begin{tikzpicture}[scale=1.4]
\coordinate (p_1) at (0.19, 0.19);
\coordinate (p_2) at (0.82, 0.16);
\coordinate (p_3) at (0.5, 0.87);
\coordinate (_la_4) at (0.05817677888625933, -0.9983062968840872);
\coordinate (la_4_1) at ($(p_2) + {-0.8414251243549272}*(_la_4)$);
\coordinate (la_4_2) at ($(p_2) + {0.16027145225808137}*(_la_4)$);
\draw[yellow] (la_4_1) -- (la_4_2);

\filldraw[fill=red] (p_1) circle[radius=1pt];
\filldraw[fill=red] (p_2) circle[radius=1pt];
\filldraw[fill=red] (p_3) circle[radius=1pt];
\draw (0, 0) -- (1, 0) -- (1, 1) -- (0, 1) -- cycle;
\node[below] at (.5, 0) {\scriptsize$(3{,}0{,}0{,}1)$};
\node[above] at (.5, 1) {\vphantom{$m = 6$}};
\end{tikzpicture}\vspace{-4pt} & 1 point on each unused (free and adjacent) line with $0$ in one coordinate\\

\hline
\begin{tikzpicture}[scale=1.4]
\coordinate (p_1) at (0.275434, 0.814813);
\coordinate (p_2) at (0.886218, 0.58854);
\coordinate (p_3) at (0.608196, 0.162511);
\coordinate (p_4) at (0.114101, 0.340228);
\coordinate (d_1_2) at (${0.5}*(p_1) + {0.5}*(p_2)$);
\coordinate (lf_1_1) at (0.621051087803388, 1.0);
\coordinate (lf_1_2) at (0.0, 0.3978521351287555);
\coordinate (_la_1) at (-0.7994199556831826, -0.6007726146018129);
\coordinate (la_1_1) at ($(p_3) + {-0.4901103571591044}*(_la_1)$);
\coordinate (la_1_2) at ($(p_3) + {0.2705033419469543}*(_la_1)$);
\draw[dotted, gray] (p_1) -- (p_2);
\draw[yellow] (la_1_1) -- (la_1_2);
\draw[violet] (lf_1_1) -- (lf_1_2);
\filldraw[fill=cyan] (d_1_2) circle[radius=1pt];
\filldraw[fill=red] (p_1) circle[radius=1pt];
\filldraw[fill=red] (p_2) circle[radius=1pt];
\filldraw[fill=red] (p_3) circle[radius=1pt];
\filldraw[fill=red] (p_4) circle[radius=1pt];
\draw (0, 0) -- (1, 0) -- (1, 1) -- (0, 1) -- cycle;
\node[below] at (.5, 0) {\scriptsize$(4{,}1{,}1{,}1)$};
\node[above] at (.5, 1) {$m = 4$};
\end{tikzpicture}\vspace{-4pt}
 & \begin{tikzpicture}[scale=1.4]
\coordinate (p_1) at (0.275434, 0.814813);
\coordinate (p_2) at (0.886218, 0.58854);
\coordinate (p_3) at (0.608196, 0.162511);
\coordinate (p_4) at (0.114101, 0.340228);
\coordinate (d_1_2) at (${0.5}*(p_1) + {0.5}*(p_2)$);

\draw[dotted, gray] (p_1) -- (p_2);
\filldraw[fill=cyan] (d_1_2) circle[radius=1pt];
\filldraw[fill=red] (p_1) circle[radius=1pt];
\filldraw[fill=red] (p_2) circle[radius=1pt];
\filldraw[fill=red] (p_3) circle[radius=1pt];
\filldraw[fill=red] (p_4) circle[radius=1pt];
\draw (0, 0) -- (1, 0) -- (1, 1) -- (0, 1) -- cycle;
\node[below] at (.5, 0) {\scriptsize$(4{,}1{,}0{,}0)$};
\node[above] at (.5, 1) {\vphantom{$m = 4$}};
\end{tikzpicture}\vspace{-4pt} & 1 point on each unused (free and adjacent) line with $0$ in one coordinate & 
\begin{tikzpicture}[scale=1.4]
\coordinate (p_1) at (0.2, 0.2);
\coordinate (p_2) at (0.8, 0.8);
\coordinate (d_1_2) at (${0.5}*(p_1) + {0.5}*(p_2)$);
\coordinate (lf_1_1) at (0.6845372775530156, 1.0);
\coordinate (lf_1_2) at (0.0, 0.35533961687210924);
\coordinate (lf_2_1) at (1.0, 0.1388291109261966);
\coordinate (lf_2_2) at (0.5080434215673772, 1.0);
\coordinate (lf_3_1) at (0.11724578779266981, 1.0);
\coordinate (lf_3_2) at (1.0, 0.6384987560272661);
\coordinate (_la_1) at (-0.3627173627365738, -0.9318991977512507);
\coordinate (la_1_1) at ($(p_1) + {-0.8584619473119687}*(_la_1)$);
\coordinate (la_1_2) at ($(p_1) + {0.21461548682799217}*(_la_1)$);
\draw[dotted, gray] (p_1) -- (p_2);
\draw[yellow] (la_1_1) -- (la_1_2);
\draw[violet] (lf_1_1) -- (lf_1_2);
\draw[violet] (lf_2_1) -- (lf_2_2);
\draw[violet] (lf_3_1) -- (lf_3_2);
\filldraw[fill=cyan] (d_1_2) circle[radius=1pt];
\filldraw[fill=red] (p_1) circle[radius=1pt];
\filldraw[fill=red] (p_2) circle[radius=1pt];
\draw (0, 0) -- (1, 0) -- (1, 1) -- (0, 1) -- cycle;
\node[below] at (.5, 0) {\scriptsize$(2{,}1{,}3{,}1)$};
\node[above] at (.5, 1) {$m = 6$};
\end{tikzpicture}\vspace{-4pt} & \begin{tikzpicture}[scale=1.4]
\coordinate (p_1) at (0.2, 0.2);
\coordinate (p_2) at (0.8, 0.8);
\coordinate (d_1_2) at (${0.5}*(p_1) + {0.5}*(p_2)$);
\coordinate (_la_1) at (-0.3627173627365738, -0.9318991977512507);
\coordinate (la_1_1) at ($(p_1) + {-0.8584619473119687}*(_la_1)$);
\coordinate (la_1_2) at ($(p_1) + {0.21461548682799217}*(_la_1)$);
\draw[dotted, gray] (p_1) -- (p_2);
\draw[yellow] (la_1_1) -- (la_1_2);
\filldraw[fill=cyan] (d_1_2) circle[radius=1pt];
\filldraw[fill=red] (p_1) circle[radius=1pt];
\filldraw[fill=red] (p_2) circle[radius=1pt];
\draw (0, 0) -- (1, 0) -- (1, 1) -- (0, 1) -- cycle;
\node[below] at (.5, 0) {\scriptsize$(2{,}1{,}0{,}1)$};
\node[above] at (.5, 1) {\vphantom{$m = 6$}};
\end{tikzpicture}\vspace{-4pt} & 1 point on each free line with $0$ in one coordinate\\

\hline

\begin{tikzpicture}[scale=1.4]
\coordinate (p_1) at (0.19, 0.19);
\coordinate (p_2) at (0.82, 0.16);
\coordinate (p_3) at (0.45, 0.87);
\coordinate (lf_1_1) at (0.8206719235797814, 0.0);
\coordinate (lf_1_2) at (0.0, 0.7746536215971568);
\coordinate (_la_1) at (0.4916937870594068, 0.8707681779711399);
\coordinate (la_1_1) at ($(p_1) + {-0.2181981436697578}*(_la_1)$);
\coordinate (la_1_2) at ($(p_1) + {0.9302131388026517}*(_la_1)$);
\coordinate (_la_2) at (-0.17704243589152957, 0.9842032187986349);
\coordinate (la_2_1) at ($(p_1) + {-0.1930495616869888}*(_la_2)$);
\coordinate (la_2_2) at ($(p_1) + {0.8230007629813734}*(_la_2)$);
\coordinate (_la_3) at (-0.8854405183204138, 0.46475271759988335);
\coordinate (la_3_1) at ($(p_1) + {-0.4088195567337715}*(_la_3)$);
\coordinate (la_3_2) at ($(p_1) + {0.21458245479934634}*(_la_3)$);
\coordinate (_la_4) at (-0.9467113443733641, -0.32208326630667655);
\coordinate (la_4_1) at ($(p_1) + {-0.8555934232901219}*(_la_4)$);
\coordinate (la_4_2) at ($(p_1) + {0.20069475361126313}*(_la_4)$);
\coordinate (_la_5) at (-0.9252448372514024, -0.3793705195974325);
\coordinate (la_5_1) at ($(p_2) + {-0.19454310119116228}*(_la_5)$);
\coordinate (la_5_2) at ($(p_2) + {0.42175127411002666}*(_la_5)$);
\draw[yellow] (la_1_1) -- (la_1_2);
\draw[yellow] (la_2_1) -- (la_2_2);
\draw[yellow] (la_3_1) -- (la_3_2);
\draw[yellow] (la_4_1) -- (la_4_2);
\draw[yellow] (la_5_1) -- (la_5_2);
\draw[violet] (lf_1_1) -- (lf_1_2);
\filldraw[fill=red] (p_1) circle[radius=1pt];
\filldraw[fill=red] (p_2) circle[radius=1pt];
\filldraw[fill=red] (p_3) circle[radius=1pt];
\draw (0, 0) -- (1, 0) -- (1, 1) -- (0, 1) -- cycle;
\node[below] at (.5, 0) {\scriptsize$(3{,}0{,}1{,}5)$};
\node[above] at (.5, 1) {$m = 4$};
\end{tikzpicture}\vspace{-4pt}
 & \begin{tikzpicture}[scale=1.4]
\coordinate (p_1) at (0.19, 0.19);
\coordinate (p_2) at (0.82, 0.16);
\coordinate (p_3) at (0.45, 0.87);
\coordinate (_la_5) at (-0.9252448372514024, -0.3793705195974325);
\coordinate (la_5_1) at ($(p_2) + {-0.19454310119116228}*(_la_5)$);
\coordinate (la_5_2) at ($(p_2) + {0.42175127411002666}*(_la_5)$);

\draw[yellow] (la_5_1) -- (la_5_2);

\filldraw[fill=red] (p_1) circle[radius=1pt];
\filldraw[fill=red] (p_2) circle[radius=1pt];
\filldraw[fill=red] (p_3) circle[radius=1pt];
\draw (0, 0) -- (1, 0) -- (1, 1) -- (0, 1) -- cycle;
\node[below] at (.5, 0) {\scriptsize$(3{,}0{,}0{,}1)$};
\node[above] at (.5, 1) {\vphantom{$m = 4$}};
\end{tikzpicture}\vspace{-4pt} & 1 point on each unused (free and adjacent) line with $0$ in one coordinate &
\begin{tikzpicture}[scale=1.4]
\coordinate (p_1) at (0.2, 0.2);
\coordinate (p_2) at (0.8, 0.8);
\coordinate (lf_1_1) at (0.8388600344637012, 0.0);
\coordinate (lf_1_2) at (0.878830250518326, 1.0);
\coordinate (lf_2_1) at (0.3911523950934258, 1.0);
\coordinate (lf_2_2) at (0.0, 0.25376119112246076);
\coordinate (_la_1) at (0.48759256411538765, 0.873071298016022);
\coordinate (la_1_1) at ($(p_1) + {-0.22907636576128718}*(_la_1)$);
\coordinate (la_1_2) at ($(p_1) + {0.9163054630451487}*(_la_1)$);
\coordinate (_la_2) at (-0.3626570288226295, 0.9319226788985997);
\coordinate (la_2_1) at ($(p_1) + {-0.2146100792786496}*(_la_2)$);
\coordinate (la_2_2) at ($(p_1) + {0.55148524392124}*(_la_2)$);
\coordinate (_la_3) at (-0.8311220477817329, 0.5560900481856322);
\coordinate (la_3_1) at ($(p_1) + {-0.35965398167535023}*(_la_3)$);
\coordinate (la_3_2) at ($(p_1) + {0.24063854464431617}*(_la_3)$);
\coordinate (_la_4) at (-0.989144412278352, -0.14694669665737178);
\coordinate (la_4_1) at ($(p_1) + {-0.8087797798476312}*(_la_4)$);
\coordinate (la_4_2) at ($(p_1) + {0.2021949449619078}*(_la_4)$);
\draw[yellow] (la_1_1) -- (la_1_2);
\draw[yellow] (la_2_1) -- (la_2_2);
\draw[yellow] (la_3_1) -- (la_3_2);
\draw[yellow] (la_4_1) -- (la_4_2);
\draw[violet] (lf_1_1) -- (lf_1_2);
\draw[violet] (lf_2_1) -- (lf_2_2);
\filldraw[fill=red] (p_1) circle[radius=1pt];
\filldraw[fill=red] (p_2) circle[radius=1pt];
\draw (0, 0) -- (1, 0) -- (1, 1) -- (0, 1) -- cycle;
\node[below] at (.5, 0) {\scriptsize$(2{,}0{,}2{,}4)$};
\node[above] at (.5, 1) {$m = 7$};
\end{tikzpicture}\vspace{-4pt}
 &  
\begin{tikzpicture}[scale=1.4]
\coordinate (p_1) at (0.2, 0.2);
\coordinate (p_2) at (0.8, 0.8);
\coordinate (lf_2_1) at (0.3911523950934258, 1.0);
\coordinate (lf_2_2) at (0.0, 0.25376119112246076);

\draw[violet] (lf_2_1) -- (lf_2_2);
\filldraw[fill=red] (p_1) circle[radius=1pt];
\filldraw[fill=red] (p_2) circle[radius=1pt];
\draw (0, 0) -- (1, 0) -- (1, 1) -- (0, 1) -- cycle;
\node[below] at (.5, 0) {\scriptsize$(2{,}0{,}1{,}0)$};
\node[above] at (.5, 1) {\vphantom{$m = 7$}};
\end{tikzpicture}\vspace{-4pt} & 1 point on each unused (free and adjacent) line with $0$ in one coordinate\\

\hline

\begin{tikzpicture}[scale=1.4]
\coordinate (p_1) at (0.275434, 0.814813);
\coordinate (p_2) at (0.886218, 0.58854);
\coordinate (p_3) at (0.608196, 0.162511);
\coordinate (p_4) at (0.114101, 0.340228);
\coordinate (lf_1_1) at (0.0, 0.337163547097473);
\coordinate (lf_1_2) at (0.8075788055878934, 1.0);
\coordinate (_la_1) at (-0.3773845533250738, 0.9260566391488344);
\coordinate (la_1_1) at ($(p_1) + {-0.8798738279646895}*(_la_1)$);
\coordinate (la_1_2) at ($(p_1) + {0.1999737296524441}*(_la_1)$);
\coordinate (_la_2) at (-0.6526129669181993, -0.7576914381265143);
\coordinate (la_2_1) at ($(p_1) + {-0.2444095190753346}*(_la_2)$);
\coordinate (la_2_2) at ($(p_1) + {0.42204800388914715}*(_la_2)$);
\draw[yellow] (la_1_1) -- (la_1_2);
\draw[yellow] (la_2_1) -- (la_2_2);
\draw[violet] (lf_1_1) -- (lf_1_2);
\filldraw[fill=red] (p_1) circle[radius=1pt];
\filldraw[fill=red] (p_2) circle[radius=1pt];
\filldraw[fill=red] (p_3) circle[radius=1pt];
\filldraw[fill=red] (p_4) circle[radius=1pt];
\draw (0, 0) -- (1, 0) -- (1, 1) -- (0, 1) -- cycle;
\node[below] at (.5, 0) {\scriptsize$(4{,}0{,}1{,}2)$};
\node[above] at (.5, 1) {$m = 5$};
\end{tikzpicture}\vspace{-4pt}
& 
\begin{tikzpicture}[scale=1.4]
\coordinate (p_1) at (0.275434, 0.814813);
\coordinate (p_2) at (0.886218, 0.58854);
\coordinate (p_3) at (0.608196, 0.162511);
\coordinate (p_4) at (0.114101, 0.340228);

\filldraw[fill=red] (p_1) circle[radius=1pt];
\filldraw[fill=red] (p_2) circle[radius=1pt];
\filldraw[fill=red] (p_3) circle[radius=1pt];
\filldraw[fill=red] (p_4) circle[radius=1pt];
\draw (0, 0) -- (1, 0) -- (1, 1) -- (0, 1) -- cycle;
\node[below] at (.5, 0) {\scriptsize$(4{,}0{,}0{,}0)$};
\node[above] at (.5, 1) {\vphantom{$m = 5$}};
\end{tikzpicture}\vspace{-4pt}
 & 1 point on each unused (free and adjacent) line with $0$ in one coordinate & 
\begin{tikzpicture}[scale=1.4]
\coordinate (p_1) at (0.2, 0.2);
\coordinate (p_2) at (0.8, 0.8);
\coordinate (lf_1_1) at (1.0, 0.6150945143273124);
\coordinate (lf_1_2) at (0.21441782900093448, 1.0);
\coordinate (_la_1) at (0.4052935736167188, 0.9141865888236326);
\coordinate (la_1_1) at ($(p_1) + {-0.21877371911281077}*(_la_1)$);
\coordinate (la_1_2) at ($(p_1) + {0.8750948764512431}*(_la_1)$);
\coordinate (_la_2) at (-0.2373759962398847, 0.9714178485127408);
\coordinate (la_2_1) at ($(p_1) + {-0.2058846255565551}*(_la_2)$);
\coordinate (la_2_2) at ($(p_1) + {0.8235385022262204}*(_la_2)$);
\coordinate (_la_3) at (-0.8396614318161137, 0.5431101913244804);
\coordinate (la_3_1) at ($(p_1) + {-0.36824939615340474}*(_la_3)$);
\coordinate (la_3_2) at ($(p_1) + {0.2381912428291694}*(_la_3)$);
\coordinate (_la_4) at (-0.9691949434273819, -0.24629486725223107);
\coordinate (la_4_1) at ($(p_1) + {-0.8254273357751386}*(_la_4)$);
\coordinate (la_4_2) at ($(p_1) + {0.20635683394378465}*(_la_4)$);
\coordinate (_la_5) at (-0.4959981875401072, 0.8683235560301982);
\coordinate (la_5_1) at ($(p_2) + {-0.40322727990579127}*(_la_5)$);
\coordinate (la_5_2) at ($(p_2) + {0.2303288890541678}*(_la_5)$);
\coordinate (_la_6) at (-0.9174641413228231, -0.3978184879901574);
\coordinate (la_6_1) at ($(p_2) + {-0.21799217102004106}*(_la_6)$);
\coordinate (la_6_2) at ($(p_2) + {0.8719686840801645}*(_la_6)$);
\draw[yellow] (la_1_1) -- (la_1_2);
\draw[yellow] (la_2_1) -- (la_2_2);
\draw[yellow] (la_3_1) -- (la_3_2);
\draw[yellow] (la_4_1) -- (la_4_2);
\draw[yellow] (la_5_1) -- (la_5_2);
\draw[yellow] (la_6_1) -- (la_6_2);
\draw[violet] (lf_1_1) -- (lf_1_2);
\filldraw[fill=red] (p_1) circle[radius=1pt];
\filldraw[fill=red] (p_2) circle[radius=1pt];
\draw (0, 0) -- (1, 0) -- (1, 1) -- (0, 1) -- cycle;
\node[below] at (.5, 0) {\scriptsize$(2{,}0{,}1{,}6)$};
\node[above] at (.5, 1) {$m = 7$};
\end{tikzpicture}\vspace{-4pt}
 & \begin{tikzpicture}[scale=1.4]
\coordinate (p_1) at (0.2, 0.2);
\coordinate (p_2) at (0.8, 0.8);
\coordinate (_la_5) at (-0.4959981875401072, 0.8683235560301982);
\coordinate (la_5_1) at ($(p_2) + {-0.40322727990579127}*(_la_5)$);
\coordinate (la_5_2) at ($(p_2) + {0.2303288890541678}*(_la_5)$);
\coordinate (_la_6) at (-0.9174641413228231, -0.3978184879901574);
\coordinate (la_6_1) at ($(p_2) + {-0.21799217102004106}*(_la_6)$);
\coordinate (la_6_2) at ($(p_2) + {0.8719686840801645}*(_la_6)$);
\draw[yellow] (la_5_1) -- (la_5_2);
\draw[yellow] (la_6_1) -- (la_6_2);

\filldraw[fill=red] (p_1) circle[radius=1pt];
\filldraw[fill=red] (p_2) circle[radius=1pt];
\draw (0, 0) -- (1, 0) -- (1, 1) -- (0, 1) -- cycle;
\node[below] at (.5, 0) {\scriptsize$(2{,}0{,}0{,}2)$};
\node[above] at (.5, 1) {\vphantom{$m = 7$}};
\end{tikzpicture}\vspace{-4pt}  & 1 point on each unused (free and adjacent) line with $0$ in one coordinate\\
\hline
\end{tabular}
\centering
\captionsetup{justification=centering}
\caption{List of non-minimal subproblems with overconstrained subsystems after elimination of the given subproblem.\\
More details can be found in Example \ref{ex:(34112)} which is the first entry of this table.}

\label{tab:elim}
\end{table*}

Among the three remaining non-minimal  balanced PLPs, there is one further case which can be solved using this strategy only requiring a minor adaptation of the argument. Instead of considering a minimal subproblem, we consider a subproblem with a two-dimensional solution set. This, however, does not affect the possibility of splitting the system of equations into one overconstrained subproblem and an underconstrained subproblem. 

\begin{example}\label{ex:53113}
    We consider the balanced PLP where 5 cameras observe 3 free points, 1 dependent point, 1 free line and 3 lines attached to a collinear point. We study the sub-PLP arising from omitting the free line, as shown on the right:
\begin{gather*}
\begin{tikzpicture}[scale=1.4]
\coordinate (p_1) at (0.19, 0.19);
\coordinate (p_2) at (0.82, 0.16);
\coordinate (p_3) at (0.45, 0.87);
\coordinate (d_1_2) at (${0.5}*(p_1) + {0.5}*(p_2)$);
\coordinate (lf_1_1) at (0.0, 0.5515451401310275);
\coordinate (lf_1_2) at (0.8678485811005808, 1.0);
\coordinate (_la_1) at (0.20548094590677488, 0.9786611164592456);
\coordinate (la_1_1) at ($(p_1) + {-0.19414279039450544}*(_la_1)$);
\coordinate (la_1_2) at ($(p_1) + {0.8276613695765759}*(_la_1)$);
\coordinate (_la_2) at (-0.9067983119161803, 0.42156472991222327);
\coordinate (la_2_1) at ($(p_1) + {-0.45070184130337737}*(_la_2)$);
\coordinate (la_2_2) at ($(p_1) + {0.20952840064126918}*(_la_2)$);
\coordinate (_la_3) at (-0.890494226934138, -0.4549945404034776);
\coordinate (la_3_1) at ($(p_1) + {-0.9096072444947008}*(_la_3)$);
\coordinate (la_3_2) at ($(p_1) + {0.2133646622888804}*(_la_3)$);
\draw[dotted, gray] (p_1) -- (p_2);
\draw[yellow] (la_1_1) -- (la_1_2);
\draw[yellow] (la_2_1) -- (la_2_2);
\draw[yellow] (la_3_1) -- (la_3_2);
\draw[violet] (lf_1_1) -- (lf_1_2);
\filldraw[fill=cyan] (d_1_2) circle[radius=1pt];
\filldraw[fill=red] (p_1) circle[radius=1pt];
\filldraw[fill=red] (p_2) circle[radius=1pt];
\filldraw[fill=red] (p_3) circle[radius=1pt];
\draw (0, 0) -- (1, 0) -- (1, 1) -- (0, 1) -- cycle;
\end{tikzpicture}
\quad
\begin{tikzpicture}[scale=1.4]
\coordinate (p_1) at (0.19, 0.19);
\coordinate (p_2) at (0.82, 0.16);
\coordinate (p_3) at (0.45, 0.87);
\coordinate (d_1_2) at (${0.5}*(p_1) + {0.5}*(p_2)$);
\coordinate (_la_1) at (0.20548094590677488, 0.9786611164592456);
\coordinate (la_1_1) at ($(p_1) + {-0.19414279039450544}*(_la_1)$);
\coordinate (la_1_2) at ($(p_1) + {0.8276613695765759}*(_la_1)$);
\coordinate (_la_2) at (-0.9067983119161803, 0.42156472991222327);
\coordinate (la_2_1) at ($(p_1) + {-0.45070184130337737}*(_la_2)$);
\coordinate (la_2_2) at ($(p_1) + {0.20952840064126918}*(_la_2)$);
\coordinate (_la_3) at (-0.890494226934138, -0.4549945404034776);
\coordinate (la_3_1) at ($(p_1) + {-0.9096072444947008}*(_la_3)$);
\coordinate (la_3_2) at ($(p_1) + {0.2133646622888804}*(_la_3)$);
\draw[dotted, gray] (p_1) -- (p_2);
\draw[yellow] (la_1_1) -- (la_1_2);
\draw[yellow] (la_2_1) -- (la_2_2);
\draw[yellow] (la_3_1) -- (la_3_2);
\filldraw[fill=cyan] (d_1_2) circle[radius=1pt];
\filldraw[fill=red] (p_1) circle[radius=1pt];
\filldraw[fill=red] (p_2) circle[radius=1pt];
\filldraw[fill=red] (p_3) circle[radius=1pt];
\draw (0, 0) -- (1, 0) -- (1, 1) -- (0, 1) -- cycle;
\end{tikzpicture}
\end{gather*}

We choose the following normalized point-line variety $\XX':=\lbrace (e_1,e_2,e_3,e_1+e_2,\overline{e_1e_4},\overline{e_1e})\rbrace\times \lbrace \overline{e_1Q}\mid Q=\lambda e_2+\mu e_3+\nu e_4, (\lambda:\mu:\nu) \in \PP^2\rbrace$, with stabilizer and reduced camera variety
\begin{gather}
  \Stab(\XX')
=
  \biggl\{
    \begin{bsmallmatrix}
      1 & 0 &0 &\lambda\\
      0 & 1 &0 & 0\\
      0& 0& 1& 0\\
      0& 0& 0& 1
    \end{bsmallmatrix}
  \biggr\}
\tx{,}\;\;
  \CC'
:=
  \Bigl\{
    \begin{bsmallmatrix}
      c_1 & c_2 &1 &0\\
      c_3 & c_4 &c_5 & c_6\\
      c_7& c_8& c_9& c_{10}
    \end{bsmallmatrix}
  \Bigr\}
\tx{.}
\end{gather}
We assume for contradiction that the original PLP is minimal.
Then, by Proposition \ref{Schlüssellemma},
the reduced joint camera map has a full-dimensional image. Thus, the Fiber-Dimension theorem tells us that its generic fiber has dimension $m \cdot \dim \mathcal{C}' + \dim \mathcal{X}' - m \cdot \dim \mathcal{Y} =  5\cdot 10+2-5\cdot 10 =2$. To consider the  free line from the original PLP, we  reintroduce the stabilizer, which results in cameras of the form \begin{gather}
    \begin{bmatrix}
        c_1^i & c_2^i &1 &c_1^i\lambda^i\\
        c_3^i & c_4^i &c_5^i & c_3^i\lambda^i+c_6^i\\
        c_7^i& c_8^i& c_9^i& c_7^i\lambda^i+c_{10}^i
    \end{bmatrix}, 
\end{gather}
where $\lambda^1=1$ to fix the $\PGL{4}$-action. The only difference to Example \ref{ex:(34112)} is that we do not have a finite set of these $c_j^i$ but that they form a 2-dimensional set. Now we consider the points $\begin{bmatrix}
    x_1^1 &
    x_2^1&
    1&
    0
\end{bmatrix}^\top$ and $\begin{bmatrix}
    x_1^2 &
    x_2^2&
    0&
    1
\end{bmatrix}^\top$ on the generic free line with coefficient vector $\begin{bmatrix}
    y_1^i&
    y_2^i&
    1
\end{bmatrix}^\top$ on the $i$-th image. They give rise to the remaining 10 equations: $$
\begin{bmatrix}
    y_1^i&
    y_2^i&
    1
\end{bmatrix} C_i  \begin{bmatrix}
    x_1^j &
    x_2^j&
    0&
    1
\end{bmatrix}^\top = 0
$$ 
for $i$ from $1$ to $5$ and $j=1,2$. The variables $\lambda^i$ for $i$ from $2$ to $5$ and $x_1^2,x_2^2$ do not appear within the first five constraints, and thus these variables are underconstrained.
Hence, if the original PLP has a solution, it must have infinitely many, which contradicts the minimality assumption.
\hfill $\diamondsuit$
\end{example}

For the last two balanced problems, we need to take a more detailed look in order to understand the structure of their equation systems and identify the overconstrained and underconstrained subsystems.
 Fortunately, the equations in both cases are very similar and  we can take care of them simultaneously. 

\begin{example}
The first scenario consists of 4 cameras observing 2 free points, 1 dependent point, 1 free line, and 6 adjacent lines, 5 of which are attached to a single point. In the other case, we have 6 cameras observing 2 free points, 1 dependent point, 1 free line and 5 adjacent lines, 4 of which are attached to a single point. In either case, we consider the subproblem that arises from omitting the free line and the single attached line.
\begin{gather*}
\begin{tikzpicture}[scale=1.4]
\coordinate (p_1) at (0.2, 0.2);
\coordinate (p_2) at (0.8, 0.8);
\coordinate (d_1_2) at (${0.5}*(p_1) + {0.5}*(p_2)$);
\coordinate (lf_1_1) at (0.20585596352080798, 0.0);
\coordinate (lf_1_2) at (1.0, 0.11087046242297226);
\coordinate (_la_1) at (0.6158472936696543, 0.7878655411171138);
\coordinate (la_1_1) at ($(p_1) + {-0.2538504218834348}*(_la_1)$);
\coordinate (la_1_2) at ($(p_1) + {1.0154016875337393}*(_la_1)$);
\coordinate (_la_2) at (0.10571287279793859, 0.9943966957531621);
\coordinate (la_2_1) at ($(p_1) + {-0.201126975636739}*(_la_2)$);
\coordinate (la_2_2) at ($(p_1) + {0.804507902546956}*(_la_2)$);
\coordinate (_la_3) at (-0.4000361921353165, 0.9164993425976236);
\coordinate (la_3_1) at ($(p_1) + {-0.2182216513468982}*(_la_3)$);
\coordinate (la_3_2) at ($(p_1) + {0.49995476392382987}*(_la_3)$);
\coordinate (_la_4) at (-0.9456119088599442, 0.32529696866440755);
\coordinate (la_4_1) at ($(p_1) + {-0.6148228211936703}*(_la_4)$);
\coordinate (la_4_2) at ($(p_1) + {0.2115032584997005}*(_la_4)$);
\coordinate (_la_5) at (-0.9944945656066786, -0.10478816239816208);
\coordinate (la_5_1) at ($(p_1) + {-0.8044287295948875}*(_la_5)$);
\coordinate (la_5_2) at ($(p_1) + {0.20110718239872188}*(_la_5)$);
\coordinate (_la_6) at (-0.6627433643687954, -0.7488466017717715);
\coordinate (la_6_1) at ($(0.8, 0.0)$);
\coordinate (la_6_2) at ($(0.8, 1.0)$);
\draw[dotted, gray] (p_1) -- (p_2);
\draw[yellow] (la_1_1) -- (la_1_2);
\draw[yellow] (la_2_1) -- (la_2_2);
\draw[yellow] (la_3_1) -- (la_3_2);
\draw[yellow] (la_4_1) -- (la_4_2);
\draw[yellow] (la_5_1) -- (la_5_2);
\draw[yellow] (la_6_1) -- (la_6_2);
\draw[violet] (lf_1_1) -- (lf_1_2);
\filldraw[fill=cyan] (d_1_2) circle[radius=1pt];
\filldraw[fill=red] (p_1) circle[radius=1pt];
\filldraw[fill=red] (p_2) circle[radius=1pt];
\draw (0, 0) -- (1, 0) -- (1, 1) -- (0, 1) -- cycle;
\node[below] at (.5, 0) {\scriptsize$(2{,}1{,}1{,}6)$};
\end{tikzpicture}
\quad
\begin{tikzpicture}[scale=1.4]
\coordinate (p_1) at (0.2, 0.2);
\coordinate (p_2) at (0.8, 0.8);
\coordinate (d_1_2) at (${0.5}*(p_1) + {0.5}*(p_2)$);
\coordinate (_la_1) at (0.6158472936696543, 0.7878655411171138);
\coordinate (la_1_1) at ($(p_1) + {-0.2538504218834348}*(_la_1)$);
\coordinate (la_1_2) at ($(p_1) + {1.0154016875337393}*(_la_1)$);
\coordinate (_la_2) at (0.10571287279793859, 0.9943966957531621);
\coordinate (la_2_1) at ($(p_1) + {-0.201126975636739}*(_la_2)$);
\coordinate (la_2_2) at ($(p_1) + {0.804507902546956}*(_la_2)$);
\coordinate (_la_3) at (-0.4000361921353165, 0.9164993425976236);
\coordinate (la_3_1) at ($(p_1) + {-0.2182216513468982}*(_la_3)$);
\coordinate (la_3_2) at ($(p_1) + {0.49995476392382987}*(_la_3)$);
\coordinate (_la_4) at (-0.9456119088599442, 0.32529696866440755);
\coordinate (la_4_1) at ($(p_1) + {-0.6148228211936703}*(_la_4)$);
\coordinate (la_4_2) at ($(p_1) + {0.2115032584997005}*(_la_4)$);
\coordinate (_la_5) at (-0.9944945656066786, -0.10478816239816208);
\coordinate (la_5_1) at ($(p_1) + {-0.8044287295948875}*(_la_5)$);
\coordinate (la_5_2) at ($(p_1) + {0.20110718239872188}*(_la_5)$);
\draw[dotted, gray] (p_1) -- (p_2);
\draw[yellow] (la_1_1) -- (la_1_2);
\draw[yellow] (la_2_1) -- (la_2_2);
\draw[yellow] (la_3_1) -- (la_3_2);
\draw[yellow] (la_4_1) -- (la_4_2);
\draw[yellow] (la_5_1) -- (la_5_2);
\filldraw[fill=cyan] (d_1_2) circle[radius=1pt];
\filldraw[fill=red] (p_1) circle[radius=1pt];
\filldraw[fill=red] (p_2) circle[radius=1pt];
\draw (0, 0) -- (1, 0) -- (1, 1) -- (0, 1) -- cycle;
\node[below] at (.5, 0) {\vphantom{\scriptsize$(2{,}1{,}1{,}6)$}};
\end{tikzpicture}
\qquad\qquad
\begin{tikzpicture}[scale=1.4]
\coordinate (p_1) at (0.2, 0.2);
\coordinate (p_2) at (0.8, 0.8);
\coordinate (d_1_2) at (${0.5}*(p_1) + {0.5}*(p_2)$);
\coordinate (lf_1_1) at (0.20585596352080798, 0.0);
\coordinate (lf_1_2) at (1.0, 0.11087046242297226);
\coordinate (_la_2) at (0.10571287279793859, 0.9943966957531621);
\coordinate (la_2_1) at ($(p_1) + {-0.201126975636739}*(_la_2)$);
\coordinate (la_2_2) at ($(p_1) + {0.804507902546956}*(_la_2)$);
\coordinate (_la_3) at (-0.4000361921353165, 0.9164993425976236);
\coordinate (la_3_1) at ($(p_1) + {-0.2182216513468982}*(_la_3)$);
\coordinate (la_3_2) at ($(p_1) + {0.49995476392382987}*(_la_3)$);
\coordinate (_la_4) at (-0.9456119088599442, 0.32529696866440755);
\coordinate (la_4_1) at ($(p_1) + {-0.6148228211936703}*(_la_4)$);
\coordinate (la_4_2) at ($(p_1) + {0.2115032584997005}*(_la_4)$);
\coordinate (_la_5) at (-0.9944945656066786, -0.10478816239816208);
\coordinate (la_5_1) at ($(p_1) + {-0.8044287295948875}*(_la_5)$);
\coordinate (la_5_2) at ($(p_1) + {0.20110718239872188}*(_la_5)$);
\coordinate (_la_6) at (-0.6627433643687954, -0.7488466017717715);
\coordinate (la_6_1) at ($(0.8, 0.0)$);
\coordinate (la_6_2) at ($(0.8, 1.0)$);
\draw[dotted, gray] (p_1) -- (p_2);
\draw[yellow] (la_2_1) -- (la_2_2);
\draw[yellow] (la_3_1) -- (la_3_2);
\draw[yellow] (la_4_1) -- (la_4_2);
\draw[yellow] (la_5_1) -- (la_5_2);
\draw[yellow] (la_6_1) -- (la_6_2);
\draw[violet] (lf_1_1) -- (lf_1_2);
\filldraw[fill=cyan] (d_1_2) circle[radius=1pt];
\filldraw[fill=red] (p_1) circle[radius=1pt];
\filldraw[fill=red] (p_2) circle[radius=1pt];
\draw (0, 0) -- (1, 0) -- (1, 1) -- (0, 1) -- cycle;
\node[below] at (.5, 0) {\scriptsize$(2{,}1{,}1{,}5)$};
\end{tikzpicture}  
\quad
\begin{tikzpicture}[scale=1.4]
\coordinate (p_1) at (0.2, 0.2);
\coordinate (p_2) at (0.8, 0.8);
\coordinate (d_1_2) at (${0.5}*(p_1) + {0.5}*(p_2)$);
\coordinate (_la_2) at (0.10571287279793859, 0.9943966957531621);
\coordinate (la_2_1) at ($(p_1) + {-0.201126975636739}*(_la_2)$);
\coordinate (la_2_2) at ($(p_1) + {0.804507902546956}*(_la_2)$);
\coordinate (_la_3) at (-0.4000361921353165, 0.9164993425976236);
\coordinate (la_3_1) at ($(p_1) + {-0.2182216513468982}*(_la_3)$);
\coordinate (la_3_2) at ($(p_1) + {0.49995476392382987}*(_la_3)$);
\coordinate (_la_4) at (-0.9456119088599442, 0.32529696866440755);
\coordinate (la_4_1) at ($(p_1) + {-0.6148228211936703}*(_la_4)$);
\coordinate (la_4_2) at ($(p_1) + {0.2115032584997005}*(_la_4)$);
\coordinate (_la_5) at (-0.9944945656066786, -0.10478816239816208);
\coordinate (la_5_1) at ($(p_1) + {-0.8044287295948875}*(_la_5)$);
\coordinate (la_5_2) at ($(p_1) + {0.20110718239872188}*(_la_5)$);
\draw[dotted, gray] (p_1) -- (p_2);
\draw[yellow] (la_2_1) -- (la_2_2);
\draw[yellow] (la_3_1) -- (la_3_2);
\draw[yellow] (la_4_1) -- (la_4_2);
\draw[yellow] (la_5_1) -- (la_5_2);
\filldraw[fill=cyan] (d_1_2) circle[radius=1pt];
\filldraw[fill=red] (p_1) circle[radius=1pt];
\filldraw[fill=red] (p_2) circle[radius=1pt];
\draw (0, 0) -- (1, 0) -- (1, 1) -- (0, 1) -- cycle;
\node[below] at (.5, 0) {\vphantom{\scriptsize$(2{,}1{,}1{,}5)$}};
\end{tikzpicture}
\end{gather*}

We will focus on the smaller example with 4 cameras, but all computations work  the same for the 6-camera case. We will point it out whenever a difference occurs. 

We choose the normalized point-line variety to be $\XX':=\lbrace e_1,e_2,e_1+e_2, \overline{e_1e_3},\overline{e_1e_4},\overline{e_1e}\rbrace\times  \lbrace (\overline{e_1Q_1},\overline{e_1Q_2})\mid Q_i=\lambda_i e_2+\mu_i e_3+\nu_i e_4, (\lambda_i:\mu_i:\nu_i) \in \PP^2\rbrace$ (in the 6-camera case,  omit $Q_2$), with stabilizer and reduced camera variety
\begin{gather}
  \Stab(\XX')
=
  \biggl\{
    \begin{bsmallmatrix}
      1 & 0 &\lambda &\mu\\
      0 & 1 &0 & 0\\
      0& 0& 1& 0\\
      0& 0& 0& 1
    \end{bsmallmatrix}
  \biggr\}
\tx{,}\;\;
  \CC'
:=
  \Bigl\{
    \begin{bsmallmatrix}
      c_1 & 1 &0 &0\\
      c_2 & c_3 &c_4 & c_5\\
      c_6& c_7& c_8& c_9
    \end{bsmallmatrix}
  \Bigr\}
\tx{.}
\end{gather}
We assume again for contradiction that the original PLP is minimal.
Then, for 4 cameras the reduced subproblem is balanced and minimal, whereas in case of 6 cameras the fiber of the reduced joint camera map is 2-dimensional. Reintroducing the parameters from the stabilizer, we obtain cameras of the form\begin{gather}
\begin{bmatrix}
       c_1^i & 1 &c_1^i\lambda^i &c_1^i\mu^i\\
        c_2^i & c_3^i &c_2^i\lambda^i+c_4^i &c_2^i\mu^i+ c_5^i\\
        c_6^i& c_7^i& c_6^i\lambda^i+c_8^i&c_6^i\mu^i +c_9^i
\end{bmatrix}
\end{gather}
and we can assume $\lambda^1=\mu^1=0$ to fix the $\PGL{4}$-action. Let the generic free line be spanned by the two points $\begin{bmatrix}
    x_1 &
    x_2&
    1&
    0
\end{bmatrix}^\top$ and $\begin{bmatrix}
    x_3 &
    x_4&
    0&
    1
\end{bmatrix}^\top$. The last adjacent line was attached to the second point which is now $e_2$. Thus, it can be represented by one further point of the form $\begin{bmatrix}
    x_5 &
    0&
    1&
    x_6
\end{bmatrix}^\top$. As there is no immediate over- or underconstrained subsystem, we study the equations directly. Let the lines on the $i$-th image be represented by the coefficient vectors $\begin{bmatrix}
    y_1^i &
    y_2^i&
    1
\end{bmatrix}^\top$ for the free line and $\begin{bmatrix}
    1 &
    z_1^i&
    z_2^i
\end{bmatrix}^\top$ for the adjacent line. Define \begin{itemize}
    \item $\alpha_i:=c_1^iy_1^i+c_2^iy_2^i+c_6^i$
    \item $\beta_i:=y_1^i+c_3^iy_2^i+c_7^i$
    \item $\gamma_i:=c_4^iy_2^i+c_8^i$
    \item $\delta_i:=c_5^iy_2^i+c_9^i$
    \item $\varphi_i:=c_1^i+c_2^iz_1^i+c_6^iz_2^i$
    \item $\psi_i:=z_1^ic_5^i+z_2^ic_9^i$
    \item $\varepsilon_i:=z_1^ic_4^i+z_2^ic_8^i$
\end{itemize}
Now we can write the constraints coming from the lines as\begin{gather}
0=\alpha_ix_1+\beta_ix_2+\alpha_i\lambda^i+\gamma_i \label{eq:lf1}\\
    0=\alpha_ix_3+\beta_ix_4+\alpha_i\mu^i+\delta_i\label{eq:lf2}\\
    0=\varphi_ix_5+\psi_ix_6+\varphi_i\lambda^i+\varphi_i\mu^ix_6+\varepsilon_i\label{eq:la}
\end{gather}
where $i$ reaches from $1$ to $4$ resp. $6$. The main idea is now to use the linearity of these constraints together with the plethora of common coefficients to find the desired subsystem. Using $\lambda^1=\mu^1=0$, we can eliminate the variables $x_2,x_4$ and $x_5$. Next, Equations \eqref{eq:lf1} and \eqref{eq:lf2} for $i>1$ allow to write $\lambda^i$ and $\mu^i$ linearly in terms of $x_1$ and $x_3$, respectively. Inserting this into the $i-1$ last Equations \eqref{eq:la}, we obtain equations of the form $0=a_i'x_6+c_i'x_1+c_i'x_3x_6+b_i'$. Rescaling each of these equations by the inverse of $c_i'$ yields $3$ resp. $5$ equations of the form \begin{gather}
    0=a_ix_6+x_1+x_3x_6+b_i. \label{eq:final}
\end{gather}
Taking differences of these yields $2$ resp. $4$ equations constraining $x_6$ (and the 2-dimensional fiber). But in either case, this corresponds to an overconstrained subsystem, and assuming a solution to this yields a one-dimensional set of possible solutions for $x_1$ and $x_3$ in Equation \eqref{eq:final}. 
As in Example \ref{ex:53113}, this contradicts the minimality of the original PLP.
\hfill $\diamondsuit$
\end{example}

Together with the criteria derived in Section \ref{appendix:nonMinimal}, the $19$ cases discussed in this section formally prove the completeness of the list of minimal problems given in Theorem \ref{thm:main}.

\section{Efficient factorizations}\label{sm:factorisations}
In Remark \ref{rem:positiveEffect}, we explained how to use the stabilizer technique to identify a subproblem and factorize a given minimal problem into two parts, both of which can be easier to solve than the original problem. In Example \ref{ex:positiveEffect}, we have seen that the degree of the problem can factorize by this method. This degree factorization is a good indicator for making a problem easier, but it is not the only way. In the following very similar example (in fact, it is the 4-camera analog to Example \ref{ex:positiveEffect}), we factorize a minimal problem without factorizing its degree and obtain a drastic computational speed-up since the obtained subproblems have much less unknowns.

\begin{example}\label{ex:effFact}
    We consider the scenario where 4 cameras observe 6 lines attached to a single point and additionally three free lines. This is the first example with four cameras reported in Section \ref{sm:minimal}. Computing the degree of this problem using Gröbner bases turned out to be particularly challenging, although it is has only degree 2. Both the naive implementation of the equations and the implementation eliminating the variables of world lines (which was the method of choice in \cite{plmp})  have not terminated within a reasonable amount of time. However, it turned out that factorizing the computation into solving at first the subproblem arising from omitting the free lines and then adding them afterwards  returns the degree immediately. We use basically the same normalization as in Example
    \ref{ex:running1} and Example \ref{ex:positiveEffect}:
    $\XX' := \{ (e_1, \overline{e_1 e_2}, \overline{e_1 e_3}, \overline{e_1 e_4}, \overline{e_1 e}, \overline{e_1 Q_1}, \overline{e_1 Q_2}) \mid Q_i \neq e_1\rbrace$ and also the stabilizers and reduced cameras stay the same.
    This reduced subproblem is minimal of degree 2. After solving it, one has to  add the remaining 3 free lines again and thus needs to reintroduce the stabilizer exactly as in Equation \eqref{ReintroStab}. The $\PGL{4}$-action can be fixed afterward by setting $\lambda_j^1=1$. 
    This remainder problem in the free lines and the missing camera parameters has degree one. An explicit implementation of this example can be found in the attached code\footref{code} (lines 1805-1953). \hfill $\diamondsuit$
\end{example}

Analogously, one can find for many more minimal problems minimal subproblems that allow for a tremendous speedup in computation time.

\section{Camera registration}
\label{sec:singleView}

The two PLPs from Theorem \ref{thm:main} \ref{item:thm:main:b} that are minimal for arbitrarily many cameras are in fact also minimal for a single view. 
Moreover, there are a few other minimal problems for a single projective camera:

\begin{prop}\label{prop:singleView}
    All minimal PLPs for a single projective camera are depicted in Table \ref{tab:m1}. The five right-most PLPs are infinite families as they admit an arbitrary number of (non-depicted) dependent points. All PLPs have degree $1$.
\end{prop}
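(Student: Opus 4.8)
The plan is to run the same balanced‑then‑minimal pipeline as for $m\geq 2$, but adapted to the stabilizer of a single camera. First I would note that $\PGL{4}$ acts transitively on generic rank‑$3$ cameras, so after modding out $\PGL{4}$ we may fix the one camera to be $[I_3\mid 0]$. The residual freedom is its stabilizer
\begin{gather*}
  G := \Stab([I_3\mid 0]) = \left\{ \begin{bsmallmatrix} \mu I_3 & 0 \\ v^\top & \rho \end{bsmallmatrix} \in \PGL{4} \right\},
\end{gather*}
which is $4$-dimensional and geometrically slides world features along the visual rays through the camera center $e_4$. Single-view reconstruction thus amounts to recovering the scene in $\Xs$ modulo $G$.

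Next I would derive balancedness. When $G$ acts on a generic scene $A\in\Xs$ with a full $4$-dimensional orbit, the domain of the reduced joint camera map has dimension $\dim\Xs-4$, so balancedness reads $\dim\Xs-4=\dim\Ys$. Plugging in \eqref{eq:dimXY} gives
\begin{gather*}
  p^f + 2l^f + l^a = 4,
\end{gather*}
which is exactly \eqref{Dimeq} specialized to $m=1$, confirming that \eqref{eq:dimDomain} extends to $m=1$ under freeness. Here $p^d$ drops out, so whenever $p^f\geq 2$ the number of dependent points is unconstrained. Enumerating the finitely many admissible tuples $(p^f,l^f,l^a)$, together with the inequivalent ways of attaching the $l^a$ adjacent lines, yields a short list of candidates; the ones with $p^f\geq 2$ (the two attachment types of $(2,0,2)$, and $(3,0,1)$, $(4,0,0)$, $(2,1,0)$) are the five infinite families, while the cases with $p^f\leq 1$ give the finitely many remaining candidates.

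Then I would prove minimality and degree $1$ by exhibiting a unique, linear reconstruction. After fixing $[I_3\mid 0]$ and using $G$ to normalize the depths of the free points and the residual degrees of freedom of the free and adjacent lines, every unknown is pinned down linearly: each free point is its back-projected ray at the normalized depth; each dependent point is the unique intersection of its visual ray with the already-reconstructed spanning line of free points; and each line is recovered from its preimage plane together with its incidences. Since all steps are linear with a unique solution, each surviving PLP is minimal of degree $1$. A feature specific to $m=1$ is that collinearity is \emph{not} an obstruction — the criteria of Lemma~\ref{lem:homography} require two views — precisely because a dependent point is recovered as a single ray–line intersection; this is what allows arbitrarily many dependent points to be appended to the infinite families.

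The main obstacle is the freeness hypothesis underlying the balance count: I must verify that $G$ genuinely acts with full $4$-dimensional orbits on the generic scene of \emph{each} listed tuple (otherwise the domain dimension, and hence the balance equation, changes), and conversely that no balanced tuple is secretly non-minimal because $G$ stabilizes part of the scene. Concretely this requires checking, for configurations concentrating features at one point such as $(1,0,3)$ and the same-point version of $(2,0,2)$, that the stabilizer of the distinguished point inside $G$ still acts freely on the attached lines, and likewise for the pointless case $(0,2,0)$. I would settle these by the orbit computation sketched above, with the Jacobian criterion of Lemma~\ref{lem:jacobianMinimal} applied to the single-view map as a computational backstop; the degree‑$1$ assertion, however, needs the explicit uniqueness argument rather than the rank check alone.
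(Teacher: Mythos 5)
Your overall route is sound and is essentially the dual of the paper's: you fix the camera at $[\,I_3\mid 0\,]$ and quotient by its $4$-dimensional stabilizer $G$ acting on scenes, while the paper instead normalizes the world arrangement (Table~\ref{tab:m1norm}) and solves linearly for the camera. Both yield the balance equation $p^f+2\ell^f+\ell^a=4$ of \eqref{eq:singleViewBalanced}, the same eight base configurations, the same ray--line intersection argument for dependent points (the paper's Lemma~\ref{lem:singleViewReduce}), and degree $1$ from linearity of the normalized system.

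However, there is a genuine gap in the completeness of your classification. You enumerate tuples $(p^f,\ell^f,\ell^a)$ ``together with the inequivalent ways of attaching the $\ell^a$ adjacent lines'', but you only ever attach lines to free points. Once $p^d\geq 1$ --- i.e., precisely inside your five infinite families --- a balanced PLP may have an adjacent line attached to a \emph{dependent} point, which is a different incidence structure $\II$ (the attached point then lies on two lines: the adjacent line and the supporting line of the collinear points), and such arrangements do not appear in your candidate list. To conclude that \emph{all} minimal single-view PLPs are those of Table~\ref{tab:m1}, you must prove that any such arrangement is isomorphic to a depicted one after re-choosing which points are designated free. This is exactly the content of the paper's Lemma~\ref{lem:2or3freePts} (in an arrangement with two, respectively three, free points, any given point can be completed to a spanning set of free points), whose proof is a non-trivial case analysis on supporting lines; nothing in your proposal plays this role. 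A secondary, fixable imprecision: you reconstruct each dependent point as the intersection of its visual ray with ``the already-reconstructed spanning line of free points'', but dependent points may also lie on lines spanned by previously added dependent points, so the argument must be run inductively, adding one dependent point at a time as in Lemma~\ref{lem:singleViewReduce}. Both repairs fit inside your framework, but as written the classification claim is not established.
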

To prove this, we first classify the balanced problems. For a single camera, the corresponding dimension equation becomes \begin{gather} \nonumber
    11+3p^f+p^d+4\ell^f+2\ell^a-15=2p^f+p^d+2\ell^f+\ell^a \\\iff p^f+2\ell^f+\ell^a=4. \label{eq:singleViewBalanced}
\end{gather}
Each of the appearing parameters above is upper bounded by 4, and so we can perform a simple case distinction to obtain the $8$ possible cases without dependent points shown in Table \ref{tab:m1}.
\begin{table*}[h!]
\setlength\tabcolsep{0pt}%
\centering
\renewcommand\arraystretch{0.0}%
\begin{tabular}{|@{\hspace{5pt}}c@{\hspace{5pt}}|*{8}{>{\centering\arraybackslash}p{5.10em}}|}
\hline
  $m$ & \multicolumn{8}{c|}{$(p^f,p^d,l^f,l^a)$, algebraic degree\vphantom{\raisebox{-4pt}{a}\rule{0pt}{10pt}}}
\\\hline
\raisebox{2.7em}{$1$}
& \input{fig_m1_pf0_pd0_lf2_la0}\rule{0pt}{5.6em}
&\input{fig_m1_pf1_pd0_lf1_la1-1}
& \input{fig_m1_pf1_pd0_lf0_la3-3}
&\input{fig_m1_pf2_pd0_lf1_la0}
&\input{fig_m1_pf2_pd0_lf0_la2-2-0}
&\input{fig_m1_pf2_pd0_lf0_la2-1-1}
&\input{fig_m1_pf3_pd0_lf0_la1-1-0-0}
&\input{fig_m1_pf4_pd0_lf0_la0}\\
\hline
\end{tabular}
\caption{Single-view minimal PLPs with their associated degree. }
 \label{tab:m1}
\end{table*}

Since dependent points do not appear in the equation, there is an infinite family of possible problems for each of these PLPs with at least two free points. However, we still have to see that these are indeed all problems, i.e., that we can assume without loss of generality that adjacent lines are attached to free points.
\begin{lemma}  \label{lem:2or3freePts}
\begin{enumerate}
\item In the case of two free points, any two distinct points allow to define the other points of the arrangement as dependent points.
    \item In the case of three free points, it holds that for any point of the arrangement we can find two further points such that these three points allow to define any other point of the arrangement as a dependent point.   
\end{enumerate}
\end{lemma}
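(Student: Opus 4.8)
The plan is to reduce both parts to the combinatorics of the collinearity relations among the points, realized generically inside the projective span of the free points.

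For part~1, the two free points $A,B$ span a line $\ell=\overline{AB}$, and every point of the arrangement lies in the span of the free points, hence on $\ell$; equivalently, running through the construction order, each dependent point sits on a line through two earlier points that already lie on $\ell$, so that line is $\ell$ itself. Once all points are known to lie on $\ell$, the claim is immediate: any two distinct points $P,Q$ satisfy $\ell=\overline{PQ}$, and every remaining point lies on $\overline{PQ}$ and is therefore dependent on $\{P,Q\}$. Thus \emph{any} two distinct points may serve as the free pair.

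For part~2 the three free points span a plane $\Pi\cong\PP^2$; since the arrangement has rank three, every point lies in $\Pi$, and by genericity the only collinear triples are those forced by the construction (there are no accidental collinearities). The single tool I would use is the \emph{reversibility} of a forced collinear triple $\{X,Y,Z\}$: from any two of its members the third lies on their joining line, hence is recoverable. Call a dependent point a \emph{leaf} if it is built last; generically it then lies on exactly one forced line, i.e.\ in exactly one triple. I would now induct on the number $p^d$ of dependent points. Given the prescribed point $P$: if $P$ is not the leaf $L$, apply the inductive hypothesis to the arrangement with $L$ removed to get an independent triple through $P$ generating every point but $L$; as $L$ lies on the line of two already-recovered points, it is recovered as well. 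If $P=L$, lying on the forced triple $\{X,Y,L\}$, I would produce a generating basis of the smaller arrangement passing through both $X$ and $Y$, and then trade $X$ for $L$: from $\{L,X\}$ reversibility recovers $Y$, after which $X,Y$ (together with the third basis point) regenerate the rest, while $L$ is already in hand. Independence of the new triple holds by genericity, since $L$ is a generic point of $\overline{XY}$.

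The main obstacle is exactly this $P=L$ case: swapping a point that \emph{drives} a construction step for the point it helped to define risks a circular dependency, so one must exhibit a genuinely acyclic reconstruction order for the re-rooted arrangement. I expect the clean remedy is to strengthen the inductive statement from ``every point extends to a generating triple'' to ``every independent pair of points extends to a generating triple'', which furnishes precisely the basis through $\{X,Y\}$ required above; the resulting case split (according to whether the triple partners of $P$ are themselves among, or adjacent to, the free points) is then routine once reversibility is available.
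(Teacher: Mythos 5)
Your part~1 is correct and is exactly the paper's argument (all points lie on the line spanned by the two free points, and any two distinct points of the arrangement recover that line). Part~2, however, has a genuine gap, and it sits precisely where you flag it. The case $P=L$ is never proved: you replace it by the strengthened statement that every independent pair extends to a generating triple, but this is only asserted (``I expect\dots'', ``is then routine''), not established, and its own inductive step hits the same wall. When the pair contains the leaf $L$ and the other point $Q$ is not on $L$'s supporting line, you must exhibit a third point whose closure together with $\{L,Q\}$ is the whole arrangement, and this is not automatic because a non-collinear triple need not generate anything: with free points $A,B,C$ and dependent points $D\in\overline{AB}$, $E\in\overline{AC}$, $F\in\overline{DE}$, the non-collinear triple $\{B,C,F\}$ recovers nothing beyond itself, since none of its three joining lines carries a further point of the arrangement. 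So the choice of the completing point is the actual content of the lemma, and your inductive hypothesis (which guarantees a generating triple through a given pair for \emph{some} unspecified third point) does not let you control it. A secondary flaw: your structural claim that the leaf lies on exactly one forced line fails in this setting, because intersections of supporting lines are themselves points of the arrangement (the paper's proof relies on exactly this completeness), and such a point lies on two forced lines no matter in which order the arrangement is built.

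By contrast, the paper proves part~2 directly, with no induction: if the prescribed point $X$ lies on a line through two free points, take any other point of that line together with the third free point; otherwise, intersect the supporting line of $X$ with a line $L$ through two free points (this intersection $Y$ is an arrangement point by completeness), take the triple $\{X,Y,F_1\}$ with $F_1$ a free point on $L$, and recover the last free point via a further supporting line that meets both already-reconstructed lines. If you want to salvage your induction, you would in effect have to prove your strengthened pair statement by an argument of this same geometric type, at which point the induction buys nothing.
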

\begin{proof}
    The first statement follows directly from the fact that all points lie on the line spanned by the two free points and any two distinct points on a line define it uniquely.
    
    For the other statement let $X$ be any point. Assume first that $X$ lies on a line spanned by two of the three free points. In that case, we can take any other point on this line and the third free point since we can then construct all three of the original free points as dependent points and thus the statement follows. 
    Otherwise, we know that the line used to define $X$ does not contain two of the free points but the arrangement contains at least one such line since the definition of the first dependent point requires a line through two free points. Since all these lines sit inside the span of the three free points (i.e., a plane), they all intersect. Let $Y$ be the intersection point of the supporting line used to define $X$ and a line $L$ containing two free points. Since we assume that all intersections of lines are present in the arrangement, so is $Y$. Now we pick as a third point one of the two originally free points on $L$. Since $X$ does not lie on a line spanned by two free points, it is impossible that all dependent points lie on one line, and thus there is a supporting line containing the last free point. This line intersects now both the line used to define $X$ and $L$, and thus we have at least two points on it and can thus construct the last originally free point as a dependent point on that line.
\end{proof}

Using the previous lemma we see that we can assume without loss of generality that all lines are attached to free points since we can simply re-choose the free points in the arrangement to be the points where lines are attached. Using this, the following result allows to reduce to the case of no dependent point.
\begin{lemma}\label{lem:singleViewReduce}
    Let $\mathcal{P}$ be a minimal PLP of degree $d$ for a single camera. Then also the PLP $\mathcal{P}'$ arising from $\mathcal{P}$ by adding a single dependent point is minimal of degree~$d$.
\end{lemma}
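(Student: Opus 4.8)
The plan is to exhibit an explicit fiberwise bijection between the reconstructions of $\mathcal{P}'$ and those of $\mathcal{P}$, obtained by back-projecting the single new image point onto the world line defining the new dependent point. Write $\Phi$ and $\Phi'$ for the joint camera maps of $\mathcal{P}$ and $\mathcal{P}'$, with picture spaces $\mathcal{Y}$ and $\widetilde{\mathcal{Y}}$ and scene spaces $\mathcal{X}$ and $\widetilde{\mathcal{X}}$. By hypothesis the new point $X_0$ lies on the line $\overline{AB}$ through two points $A,B$ already present in $\mathcal{P}$; adding it raises $\dim\mathcal{X}$ and $\dim\mathcal{Y}$ each by $1$ (a point moving on a line is one scene parameter, and its image is forced onto the image line $\overline{x_Ax_B}$, hence one picture parameter), so $\mathcal{P}'$ is again balanced by \eqref{eq:singleViewBalanced}, in which $p^d$ does not occur; I will nonetheless argue minimality and the degree directly. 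Forgetting $X_0$, resp.\ its image $x_0$, gives projections $\widetilde{\mathcal{X}}\to\mathcal{X}$ and $\pi\colon\widetilde{\mathcal{Y}}\to\mathcal{Y}$ that intertwine $\Phi'$ and $\Phi$; since $\pi$ is dominant, a generic $y'\in\widetilde{\mathcal{Y}}$ projects to a generic $y:=\pi(y')\in\mathcal{Y}$.

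The geometric heart is that a projective camera $P$ restricts to a projective isomorphism $\overline{AB}\xrightarrow{\sim}\overline{x_Ax_B}$ whenever the camera center avoids $\overline{AB}$ (equivalently $x_A\neq x_B$). First I would show that restriction induces a bijection $(\Phi')^{-1}(y')\to\Phi^{-1}(y)$. Given $z=[(P,\mathbf{A})]\in\Phi^{-1}(y)$, the images of $A,B$ match the corresponding entries of $y$, and since $y'\in\widetilde{\mathcal{Y}}$ the new image point $x_0$ lies on $\overline{x_Ax_B}$; by the isomorphism above there is a \emph{unique} $X_0\in\overline{AB}$ with $PX_0=x_0$, yielding an extension $[(P,\mathbf{A},X_0)]\in(\Phi')^{-1}(y')$. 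Conversely, every reconstruction of $\mathcal{P}'$ restricts to one of $\mathcal{P}$, and its point $X_0$ is forced by $PX_0=x_0$, so the extension is unique. The construction is $\PGL{4}$-equivariant — replacing $(P,\mathbf{A})$ by $(PH^{-1},H\mathbf{A})$ sends the unique $X_0$ to $HX_0$ — hence it descends to the quotients and the bijection is well defined on reconstruction classes.

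The main obstacle is the genericity bookkeeping. I must guarantee that for generic $y'$ each of the $d$ reconstructions in $\Phi^{-1}(y)$ has camera center off $\overline{AB}$, so that $x_A\neq x_B$ and the back-projection is bijective. This holds because the locus in $\mathcal{C}\times\widetilde{\mathcal{X}}$ where the center meets $\overline{AB}$ is a proper closed subvariety, so its image is not dense in $\widetilde{\mathcal{Y}}$ and a generic $y'$ avoids it; one must also confirm that generic $y'$ lies in the (irreducible) image of $\Phi'$, so that $y$ is generic and $\Phi^{-1}(y)$ consists of exactly $d$ points. Granting these open conditions, $(\Phi')^{-1}(y')$ is finite, nonempty, and in bijection with a $d$-element set, whence $\mathcal{P}'$ is minimal of degree $d$. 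The genuinely delicate step is precisely this transfer of genericity — that the finitely many solutions of the smaller problem remain in general position relative to the line defining the added point — while the injectivity of $P$ on a line missing its center is routine.
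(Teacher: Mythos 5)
Your proposal is correct and follows essentially the same route as the paper's proof: each of the $d$ solutions of $\mathcal{P}$ extends uniquely to a solution of $\mathcal{P}'$ by back-projecting the new image point through the isomorphism that the camera induces between the world line and the image line, which is valid because for generic data the camera center avoids that line, and conversely every solution of $\mathcal{P}'$ restricts to one of $\mathcal{P}$. Your additional bookkeeping (the intertwining projections, $\PGL{4}$-equivariance, and the dimension count showing the bad locus has non-dense image) only makes explicit the genericity the paper invokes implicitly.
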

\begin{proof}
    Consider the $d$ solutions  $(P_1,A_1),\ldots,(P_d,A_d)$ of the problem $\mathcal{P}$. We show that each of these admits a unique extension to $\mathcal{P}'$. 
    This proves the desired result since each solution to $\mathcal{P}'$ projects onto a solution of $\mathcal{P}$. 
    
    On the given image, the dependent point lies on a line between two points $x_1$ and $x_2$.
    Those correspond to world points $X_1$ and $X_2$ in the $i$-th solution $(P_i,A_i)$.
    For generic input images, the center of the camera $P_i$ does not lie on the line $\overline{X_1X_2}$. Thus, the camera $P_i$ yields an isomorphism between the lines $\overline{X_1X_2}$ and $\overline{x_1x_2}$.
    Hence, the dependent point on $\overline{x_1x_2}$ corresponds to a unique world point on $\overline{X_1X_2}$.
\end{proof}

Putting everything together, we can now show that Table~ \ref{tab:m1} indeed shows all single-view minimal PLPs.

\begin{proof}[\textbf{Proof of Proposition \ref{prop:singleView}}]
    Considering \eqref{eq:singleViewBalanced} together with Lemma~\ref{lem:2or3freePts} yields that Table~\ref{tab:m1} lists all balanced PLPs for a single projective view, up to arbitrarily many dependent points for the five right-most PLPs.
    Due to Lemma~\ref{lem:singleViewReduce}, it is sufficient to prove minimality (and compute the degree) for the balanced PLPs without any dependent points.
    This can be done analogously to the proof of Theorem \ref{thm:main} \ref{item:thm:main:b} in Section \ref{subsec:mainProof}.
    A list with the respective normalizations can be found in Table \ref{tab:m1norm}. From the chosen normalizations it follows directly that the reconstruction problem becomes a linear problem since all appearing equations are linear.
\end{proof}

\begin{table*}[htb]
    \centering
    \begin{tabular}{|>{\centering\arraybackslash}m{5em}|>{\centering\arraybackslash}m{7em}|m{8em}||>{\centering\arraybackslash}m{5em}|>{\centering\arraybackslash}m{7em}|m{8em}|}
\hline
  Problem & Arrangement& \multicolumn{1}{c||}{Camera} & Problem & Arrangement&  \multicolumn{1}{c|}{Camera}
\\
\hline

\input{fig_m1_pf0_pd0_lf2_la0}\rule{0pt}{5.6em}
&$\lbrace \overline{e_1e_2}\rbrace \times\lbrace \overline{e_3e_4}\rbrace $&
$\begin{bmatrix}
      c_1 & c_2 &c_3 &c_4\\
      1 & -1 &1 & -1\\
     1& 1& 1& 1
    \end{bmatrix}$
&

\input{fig_m1_pf1_pd0_lf0_la3-3}\rule{0pt}{5.6em}
&$\lbrace e_1\rbrace \times\lbrace \overline{e_1e_2}\rbrace \times\lbrace \overline{e_1e_3}\rbrace \times\lbrace \overline{e_1e_4}\rbrace $&
$\begin{bmatrix}
      c_1 & c_2 &c_3 &c_4\\
      c_5 & 1 &1 & -1\\
     1& 1& 1& 1
    \end{bmatrix}$
\\
\hline
\input{fig_m1_pf1_pd0_lf1_la1-1}\rule{0pt}{5.6em}& $\lbrace e_1\rbrace \times\lbrace\overline{e_1e_2}\rbrace \times\lbrace \overline{e_3e_4}\rbrace $&
$\begin{bmatrix}
      c_1 & c_2 &c_3 &c_4\\
      c_5 & 1 &1 & -1\\
     1& 1& 1& 1
    \end{bmatrix}$&\input{fig_m1_pf2_pd0_lf0_la2-1-1}\rule{0pt}{5.6em}
&$\lbrace e_1\rbrace \times\lbrace e_2\rbrace \times\lbrace \overline{e_1e_3}\rbrace \times\lbrace \overline{e_2e_4}\rbrace $ &$\begin{bmatrix}
      c_1 & c_2 &c_3 &c_4\\
      c_5 & c_6 &1 & -1\\
     1& 1& 1& 1
    \end{bmatrix}$
\\
\hline
\input{fig_m1_pf2_pd0_lf0_la2-2-0}\rule{0pt}{5.6em}&$\lbrace e_1\rbrace \times\lbrace e_2\rbrace \times\lbrace \overline{e_1e_3}\rbrace \times\lbrace \overline{e_1e_4}\rbrace $
& $\begin{bmatrix}
      c_1 & c_2 &c_3 &c_4\\
      c_5 & c_6 &1 & -1\\
     1& 1& 1& 1
    \end{bmatrix}$
    &
\input{fig_m1_pf2_pd0_lf1_la0}\rule{0pt}{5.6em}
&$\lbrace e_1\rbrace \times\lbrace e_2\rbrace \times\lbrace \overline{e_3e_4}\rbrace $
& $\begin{bmatrix}
      c_1 & c_2 &c_3 &c_4\\
      c_5 & c_6 &1 & -1\\
     1& 1& 1& 1
    \end{bmatrix}$\\
\hline
\input{fig_m1_pf3_pd0_lf0_la1-1-0-0}\rule{0pt}{5.6em}&
$\lbrace e_1\rbrace \times\lbrace e_2\rbrace \times\lbrace e_3\rbrace \times\lbrace \overline{e_1e_4}\rbrace $
&$\begin{bmatrix}
      c_1 & c_2 &c_3 &c_4\\
      c_5 & c_6 &c_7 & 1\\
     1& 1& 1& 1
    \end{bmatrix}$
    &
\input{fig_m1_pf4_pd0_lf0_la0}\rule{0pt}{5.6em}&
$\lbrace e_1\rbrace \times\lbrace e_2\rbrace \times\lbrace e_3\rbrace \times\lbrace e_4\rbrace $&
$\begin{bmatrix}
      c_1 & c_2 &c_3 &c_4\\
      c_5 & c_6 &c_7 & c_8\\
     1& 1& 1& 1
    \end{bmatrix}$
\\
\hline

\end{tabular}
\centering
\captionsetup{justification=centering}
\caption{List of normalizations for single-view PLPs }

\label{tab:m1norm}
\end{table*}

\begin{remark}
    The two minimal PLPs from Theorem \ref{thm:main} \ref{item:thm:main:b} that work for arbitrarily many views are special cases of the right-most PLP in Table~\ref{tab:m1}.
    Note that, differently from the normalizations in the proof above, the normalizations in the proof of Theorem \ref{thm:main} \ref{item:thm:main:b} did not affect the camera parameters (only the point arrangements).
    Therefore, the two PLPs from Theorem \ref{thm:main} \ref{item:thm:main:b} are minimal for an arbitrary number of cameras as each camera can be registered independently using the same world coordinate system.
    \hfill $\diamondsuit$
\end{remark}

\section{Minimal PLPs}\label{sm:minimal}
\ifdraft{\textcolor{red}{Tables will be generated when removing \texttt{draft} from class options.}}{\begin{table*}[h!]
\setlength\tabcolsep{0pt}%
\centering
\renewcommand\arraystretch{0.0}%
\begin{tabular}{|@{\hspace{5pt}}c@{\hspace{5pt}}|*{9}{>{\centering\arraybackslash}p{5.10em}}|}
\hline
  $m$ & \multicolumn{9}{c|}{$(p^f,p^d,l^f,l^a)$, algebraic degree\vphantom{\raisebox{-4pt}{a}\rule{0pt}{10pt}}}
\\\hline
& \input{figures/fig_m3_pf0_pd0_lf9_la0.tex}\rule{0pt}{5.6em}
& \input{figures/fig_m3_pf1_pd0_lf4_la7-7.tex}
& \input{figures/fig_m3_pf1_pd0_lf5_la5-5.tex}
& \input{figures/fig_m3_pf1_pd0_lf6_la3-3.tex}
& \input{figures/fig_m3_pf1_pd0_lf7_la1-1.tex}
& \input{figures/fig_m3_pf2_pd0_lf0_la12-6-6.tex}
& \input{figures/fig_m3_pf2_pd0_lf1_la10-6-4.tex}
& \input{figures/fig_m3_pf2_pd0_lf1_la10-5-5.tex}
& \input{figures/fig_m3_pf2_pd0_lf2_la8-6-2.tex}
\\
& \input{figures/fig_m3_pf2_pd0_lf2_la8-5-3.tex}
& \input{figures/fig_m3_pf2_pd0_lf2_la8-4-4.tex}
& \input{figures/fig_m3_pf2_pd0_lf3_la6-6-0.tex}
& \input{figures/fig_m3_pf2_pd0_lf3_la6-5-1.tex}
& \input{figures/fig_m3_pf2_pd0_lf3_la6-4-2.tex}
& \input{figures/fig_m3_pf2_pd0_lf3_la6-3-3.tex}
& \input{figures/fig_m3_pf2_pd0_lf4_la4-4-0.tex}
& \input{figures/fig_m3_pf2_pd0_lf4_la4-3-1.tex}
& \input{figures/fig_m3_pf2_pd0_lf4_la4-2-2.tex}
\\
& \input{figures/fig_m3_pf2_pd0_lf5_la2-2-0.tex}
& \input{figures/fig_m3_pf2_pd0_lf5_la2-1-1.tex}
& \input{figures/fig_m3_pf2_pd0_lf6_la0-0-0.tex}
& \input{figures/fig_m3_pf3_pd0_lf0_la9-5-4-0.tex}
& \input{figures/fig_m3_pf3_pd0_lf0_la9-5-3-1.tex}
& \input{figures/fig_m3_pf3_pd0_lf0_la9-5-2-2.tex}
& \input{figures/fig_m3_pf3_pd0_lf0_la9-4-4-1.tex}
& \input{figures/fig_m3_pf3_pd0_lf0_la9-4-3-2.tex}
& \input{figures/fig_m3_pf3_pd0_lf0_la9-3-3-3.tex}
\\
& \input{figures/fig_m3_pf3_pd0_lf1_la7-5-2-0.tex}
& \input{figures/fig_m3_pf3_pd0_lf1_la7-4-3-0.tex}
& \input{figures/fig_m3_pf3_pd0_lf1_la7-5-1-1.tex}
& \input{figures/fig_m3_pf3_pd0_lf1_la7-4-2-1.tex}
& \input{figures/fig_m3_pf3_pd0_lf1_la7-3-3-1.tex}
& \input{figures/fig_m3_pf3_pd0_lf1_la7-3-2-2.tex}
& \input{figures/fig_m3_pf3_pd0_lf2_la5-5-0-0.tex}
& \input{figures/fig_m3_pf3_pd0_lf2_la5-4-1-0.tex}
& \input{figures/fig_m3_pf3_pd0_lf2_la5-3-2-0.tex}
\\
& \input{figures/fig_m3_pf3_pd0_lf2_la5-3-1-1.tex}
& \input{figures/fig_m3_pf3_pd0_lf2_la5-2-2-1.tex}
& \input{figures/fig_m3_pf3_pd0_lf3_la3-3-0-0.tex}
& \input{figures/fig_m3_pf3_pd0_lf3_la3-2-1-0.tex}
& \input{figures/fig_m3_pf3_pd0_lf3_la3-1-1-1.tex}
& \input{figures/fig_m3_pf3_pd0_lf4_la1-1-0-0.tex}
& \input{figures/fig_m3_pf2_pd1-1.2_lf0_la10-6-4-0.tex}
& \input{figures/fig_m3_pf2_pd1-1.2_lf0_la10-5-5-0.tex}
& \input{figures/fig_m3_pf2_pd1-1.2_lf0_la10-6-3-1.tex}
\\
\raisebox{3em}{$3$}
& \input{figures/fig_m3_pf2_pd1-1.2_lf0_la10-6-2-2.tex}
& \input{figures/fig_m3_pf2_pd1-1.2_lf0_la10-5-4-1.tex}
& \input{figures/fig_m3_pf2_pd1-1.2_lf0_la10-5-3-2.tex}
& \input{figures/fig_m3_pf2_pd1-1.2_lf0_la10-4-4-2.tex}
& \input{figures/fig_m3_pf2_pd1-1.2_lf0_la10-4-3-3.tex}
& \input{figures/fig_m3_pf2_pd1-1.2_lf1_la8-6-2-0.tex}
& \input{figures/fig_m3_pf2_pd1-1.2_lf1_la8-5-3-0.tex}
& \input{figures/fig_m3_pf2_pd1-1.2_lf1_la8-4-4-0.tex}
& \input{figures/fig_m3_pf2_pd1-1.2_lf1_la8-6-1-1.tex}
\\
& \input{figures/fig_m3_pf2_pd1-1.2_lf1_la8-5-2-1.tex}
& \input{figures/fig_m3_pf2_pd1-1.2_lf1_la8-4-3-1.tex}
& \input{figures/fig_m3_pf2_pd1-1.2_lf1_la8-4-2-2.tex}
& \input{figures/fig_m3_pf2_pd1-1.2_lf1_la8-3-3-2.tex}
& \input{figures/fig_m3_pf2_pd1-1.2_lf2_la6-6-0-0.tex}
& \input{figures/fig_m3_pf2_pd1-1.2_lf2_la6-5-1-0.tex}
& \input{figures/fig_m3_pf2_pd1-1.2_lf2_la6-4-2-0.tex}
& \input{figures/fig_m3_pf2_pd1-1.2_lf2_la6-3-3-0.tex}
& \input{figures/fig_m3_pf2_pd1-1.2_lf2_la6-4-1-1.tex}
\\
& \input{figures/fig_m3_pf2_pd1-1.2_lf2_la6-3-2-1.tex}
& \input{figures/fig_m3_pf2_pd1-1.2_lf2_la6-2-2-2.tex}
& \input{figures/fig_m3_pf2_pd1-1.2_lf3_la4-4-0-0.tex}
& \input{figures/fig_m3_pf2_pd1-1.2_lf3_la4-3-1-0.tex}
& \input{figures/fig_m3_pf2_pd1-1.2_lf3_la4-2-2-0.tex}
& \input{figures/fig_m3_pf2_pd1-1.2_lf3_la4-2-1-1.tex}
& \input{figures/fig_m3_pf2_pd1-1.2_lf4_la2-2-0-0.tex}
& \input{figures/fig_m3_pf2_pd1-1.2_lf4_la2-1-1-0.tex}
& \input{figures/fig_m3_pf2_pd1-1.2_lf5_la0-0-0-0.tex}
\\
& \input{figures/fig_m3_pf4_pd0_lf0_la6-4-2-0-0.tex}
& \input{figures/fig_m3_pf4_pd0_lf0_la6-3-3-0-0.tex}
& \input{figures/fig_m3_pf4_pd0_lf0_la6-4-1-1-0.tex}
& \input{figures/fig_m3_pf4_pd0_lf0_la6-3-2-1-0.tex}
& \input{figures/fig_m3_pf4_pd0_lf0_la6-2-2-2-0.tex}
& \input{figures/fig_m3_pf4_pd0_lf0_la6-3-1-1-1.tex}
& \input{figures/fig_m3_pf4_pd0_lf0_la6-2-2-1-1.tex}
& \input{figures/fig_m3_pf4_pd0_lf1_la4-3-1-0-0.tex}
& \input{figures/fig_m3_pf4_pd0_lf1_la4-2-2-0-0.tex}
\\
& \input{figures/fig_m3_pf4_pd0_lf1_la4-2-1-1-0.tex}
& \input{figures/fig_m3_pf4_pd0_lf1_la4-1-1-1-1.tex}
& \input{figures/fig_m3_pf4_pd0_lf2_la2-2-0-0-0.tex}
& \input{figures/fig_m3_pf4_pd0_lf2_la2-1-1-0-0.tex}
& \input{figures/fig_m3_pf4_pd0_lf3_la0-0-0-0-0.tex}
& \input{figures/fig_m3_pf3_pd1-1.2_lf0_la7-4-0-3-0.tex}
& \input{figures/fig_m3_pf3_pd1-1.2_lf0_la7-3-1-3-0.tex}
& \input{figures/fig_m3_pf3_pd1-1.2_lf0_la7-2-2-3-0.tex}
& \input{figures/fig_m3_pf3_pd1-1.2_lf0_la7-2-1-3-1.tex}
\\
& \input{figures/fig_m3_pf3_pd1-1.2_lf0_la7-5-0-2-0.tex}
& \input{figures/fig_m3_pf3_pd1-1.2_lf0_la7-4-1-2-0.tex}
& \input{figures/fig_m3_pf3_pd1-1.2_lf0_la7-3-2-2-0.tex}
& \input{figures/fig_m3_pf3_pd1-1.2_lf0_la7-3-1-2-1.tex}
& \input{figures/fig_m3_pf3_pd1-1.2_lf0_la7-2-2-2-1.tex}
& \input{figures/fig_m3_pf3_pd1-1.2_lf0_la7-5-1-1-0.tex}
& \input{figures/fig_m3_pf3_pd1-1.2_lf0_la7-4-2-1-0.tex}
& \input{figures/fig_m3_pf3_pd1-1.2_lf0_la7-3-3-1-0.tex}
& \input{figures/fig_m3_pf3_pd1-1.2_lf0_la7-4-1-1-1.tex}
\\\hline
\end{tabular}%
\caption{Minimal problems with their associated degree.}%
\label{tab:min1}
\end{table*}

\begin{table*}[h!]
\setlength\tabcolsep{0pt}%
\centering
\renewcommand\arraystretch{0.0}%
\begin{tabular}{|@{\hspace{5pt}}c@{\hspace{5pt}}|*{9}{>{\centering\arraybackslash}p{5.10em}}|}
\hline
  $m$ & \multicolumn{9}{c|}{$(p^f,p^d,l^f,l^a)$, algebraic degree\vphantom{\raisebox{-4pt}{a}\rule{0pt}{10pt}}}
\\\hline
& \input{figures/fig_m3_pf3_pd1-1.2_lf0_la7-3-2-1-1.tex}\rule{0pt}{5.6em}
& \input{figures/fig_m3_pf3_pd1-1.2_lf0_la7-2-2-1-2.tex}
& \input{figures/fig_m3_pf3_pd1-1.2_lf0_la7-5-2-0-0.tex}
& \input{figures/fig_m3_pf3_pd1-1.2_lf0_la7-4-3-0-0.tex}
& \input{figures/fig_m3_pf3_pd1-1.2_lf0_la7-5-1-0-1.tex}
& \input{figures/fig_m3_pf3_pd1-1.2_lf0_la7-4-2-0-1.tex}
& \input{figures/fig_m3_pf3_pd1-1.2_lf0_la7-3-3-0-1.tex}
& \input{figures/fig_m3_pf3_pd1-1.2_lf0_la7-3-2-0-2.tex}
& \input{figures/fig_m3_pf3_pd1-1.2_lf1_la5-2-0-3-0.tex}
\\
& \input{figures/fig_m3_pf3_pd1-1.2_lf1_la5-1-1-3-0.tex}
& \input{figures/fig_m3_pf3_pd1-1.2_lf1_la5-3-0-2-0.tex}
& \input{figures/fig_m3_pf3_pd1-1.2_lf1_la5-2-1-2-0.tex}
& \input{figures/fig_m3_pf3_pd1-1.2_lf1_la5-1-1-2-1.tex}
& \input{figures/fig_m3_pf3_pd1-1.2_lf1_la5-4-0-1-0.tex}
& \input{figures/fig_m3_pf3_pd1-1.2_lf1_la5-3-1-1-0.tex}
& \input{figures/fig_m3_pf3_pd1-1.2_lf1_la5-2-2-1-0.tex}
& \input{figures/fig_m3_pf3_pd1-1.2_lf1_la5-2-1-1-1.tex}
& \input{figures/fig_m3_pf3_pd1-1.2_lf1_la5-4-1-0-0.tex}
\\
& \input{figures/fig_m3_pf3_pd1-1.2_lf1_la5-3-2-0-0.tex}
& \input{figures/fig_m3_pf3_pd1-1.2_lf1_la5-3-1-0-1.tex}
& \input{figures/fig_m3_pf3_pd1-1.2_lf1_la5-2-2-0-1.tex}
& \input{figures/fig_m3_pf3_pd1-1.2_lf2_la3-0-0-3-0.tex}
& \input{figures/fig_m3_pf3_pd1-1.2_lf2_la3-1-0-2-0.tex}
& \input{figures/fig_m3_pf3_pd1-1.2_lf2_la3-2-0-1-0.tex}
& \input{figures/fig_m3_pf3_pd1-1.2_lf2_la3-1-1-1-0.tex}
& \input{figures/fig_m3_pf3_pd1-1.2_lf2_la3-3-0-0-0.tex}
& \input{figures/fig_m3_pf3_pd1-1.2_lf2_la3-2-1-0-0.tex}
\\
& \input{figures/fig_m3_pf3_pd1-1.2_lf2_la3-1-1-0-1.tex}
& \input{figures/fig_m3_pf5_pd0_lf0_la3-3-0-0-0-0.tex}
& \input{figures/fig_m3_pf5_pd0_lf0_la3-2-1-0-0-0.tex}
& \input{figures/fig_m3_pf5_pd0_lf0_la3-1-1-1-0-0.tex}
& \input{figures/fig_m3_pf5_pd0_lf1_la1-1-0-0-0-0.tex}
& \input{figures/fig_m3_pf4_pd1-1.2_lf0_la4-0-0-2-2-0.tex}
& \input{figures/fig_m3_pf4_pd1-1.2_lf0_la4-1-0-2-1-0.tex}
& \input{figures/fig_m3_pf4_pd1-1.2_lf0_la4-2-0-2-0-0.tex}
& \input{figures/fig_m3_pf4_pd1-1.2_lf0_la4-2-0-1-1-0.tex}
\\
\raisebox{0em}{$3$}
& \input{figures/fig_m3_pf4_pd1-1.2_lf0_la4-1-1-2-0-0.tex}
& \input{figures/fig_m3_pf4_pd1-1.2_lf0_la4-1-1-1-1-0.tex}
& \input{figures/fig_m3_pf4_pd1-1.2_lf0_la4-3-0-1-0-0.tex}
& \input{figures/fig_m3_pf4_pd1-1.2_lf0_la4-2-1-1-0-0.tex}
& \input{figures/fig_m3_pf4_pd1-1.2_lf0_la4-1-1-1-0-1.tex}
& \input{figures/fig_m3_pf4_pd1-1.2_lf0_la4-4-0-0-0-0.tex}
& \input{figures/fig_m3_pf4_pd1-1.2_lf0_la4-3-1-0-0-0.tex}
& \input{figures/fig_m3_pf4_pd1-1.2_lf0_la4-2-2-0-0-0.tex}
& \input{figures/fig_m3_pf4_pd1-1.2_lf0_la4-2-1-0-0-1.tex}
\\
& \input{figures/fig_m3_pf4_pd1-1.2_lf1_la2-0-0-1-1-0.tex}
& \input{figures/fig_m3_pf4_pd1-1.2_lf1_la2-1-0-1-0-0.tex}
& \input{figures/fig_m3_pf4_pd1-1.2_lf1_la2-2-0-0-0-0.tex}
& \input{figures/fig_m3_pf4_pd1-1.2_lf1_la2-1-1-0-0-0.tex}
& \input{figures/fig_m3_pf4_pd1-1.2_lf2_la0-0-0-0-0-0.tex}
& \input{figures/fig_m3_pf3_pd2-1.2-1.3_lf0_la5-0-3-2-0-0.tex}
& \input{figures/fig_m3_pf3_pd2-1.2-1.3_lf0_la5-0-3-1-0-1.tex}
& \input{figures/fig_m3_pf3_pd2-1.2-1.3_lf0_la5-0-2-2-1-0.tex}
& \input{figures/fig_m3_pf3_pd2-1.2-1.3_lf0_la5-0-2-1-1-1.tex}
\\
& \input{figures/fig_m3_pf3_pd2-1.2-1.3_lf0_la5-0-3-1-1-0.tex}
& \input{figures/fig_m3_pf3_pd2-1.2-1.3_lf0_la5-0-2-1-2-0.tex}
& \input{figures/fig_m3_pf3_pd2-1.2-1.3_lf0_la5-0-3-0-2-0.tex}
& \input{figures/fig_m3_pf3_pd2-1.2-1.3_lf0_la5-1-2-2-0-0.tex}
& \input{figures/fig_m3_pf3_pd2-1.2-1.3_lf0_la5-1-2-1-0-1.tex}
& \input{figures/fig_m3_pf3_pd2-1.2-1.3_lf0_la5-1-1-1-1-1.tex}
& \input{figures/fig_m3_pf3_pd2-1.2-1.3_lf0_la5-1-3-1-0-0.tex}
& \input{figures/fig_m3_pf3_pd2-1.2-1.3_lf0_la5-1-2-1-1-0.tex}
& \input{figures/fig_m3_pf3_pd2-1.2-1.3_lf0_la5-1-3-0-1-0.tex}
\\
& \input{figures/fig_m3_pf3_pd2-1.2-1.3_lf0_la5-1-2-0-2-0.tex}
& \input{figures/fig_m3_pf3_pd2-1.2-1.3_lf0_la5-2-2-1-0-0.tex}
& \input{figures/fig_m3_pf3_pd2-1.2-1.3_lf0_la5-2-1-1-1-0.tex}
& \input{figures/fig_m3_pf3_pd2-1.2-1.3_lf0_la5-2-3-0-0-0.tex}
& \input{figures/fig_m3_pf3_pd2-1.2-1.3_lf0_la5-2-2-0-1-0.tex}
& \input{figures/fig_m3_pf3_pd2-1.2-1.3_lf0_la5-3-1-1-0-0.tex}
& \input{figures/fig_m3_pf3_pd2-1.2-1.3_lf0_la5-3-2-0-0-0.tex}
& \input{figures/fig_m3_pf3_pd2-1.2-1.3_lf0_la5-3-1-0-1-0.tex}
& \input{figures/fig_m3_pf6_pd0_lf0_la0-0-0-0-0-0-0.tex}
\\
& \input{figures/fig_m3_pf5_pd1-1.2_lf0_la1-0-0-1-0-0-0.tex}
& \input{figures/fig_m3_pf5_pd1-1.2_lf0_la1-1-0-0-0-0-0.tex}
& \input{figures/fig_m3_pf4_pd2-1.2-1.3_lf0_la2-0-1-1-0-0-0.tex}
& \input{figures/fig_m3_pf4_pd2-1.2-1.3_lf0_la2-0-2-0-0-0-0.tex}
& \input{figures/fig_m3_pf4_pd2-1.2-1.3_lf0_la2-0-1-0-0-1-0.tex}
& \input{figures/fig_m3_pf4_pd2-1.2-1.3_lf0_la2-1-1-0-0-0-0.tex}
& \input{figures/fig_m3_pf4_pd2-1.2-1.3_lf0_la2-2-0-0-0-0-0.tex}
& \input{figures/fig_m3_pf4_pd2-1.2-3.4_lf0_la2-1-0-1-0-0-0.tex}
& \input{figures/fig_m3_pf4_pd2-1.2-3.4_lf0_la2-2-0-0-0-0-0.tex}
\\
& \input{figures/fig_m3_pf4_pd2-1.2-3.4_lf0_la2-1-1-0-0-0-0.tex}
& \input{figures/fig_m3_pf4_pd2-1.2-3.4_lf1_la0-0-0-0-0-0-0.tex}
&
&
&
&
&
&
&
\\\hline
\end{tabular}%
\caption{Minimal problems with their associated degree.}%
\label{tab:min2}
\end{table*}

\begin{table*}[h!]
\setlength\tabcolsep{0pt}%
\centering
\renewcommand\arraystretch{0.0}%
\begin{tabular}{|@{\hspace{5pt}}c@{\hspace{5pt}}|*{9}{>{\centering\arraybackslash}p{5.10em}}|}
\hline
  $m$ & \multicolumn{9}{c|}{$(p^f,p^d,l^f,l^a)$, algebraic degree\vphantom{\raisebox{-4pt}{a}\rule{0pt}{10pt}}}
\\\hline
& \input{figures/fig_m4_pf1_pd0_lf3_la6-6.tex}\rule{0pt}{5.6em}
& \input{figures/fig_m4_pf1_pd0_lf4_la4-4.tex}
& \input{figures/fig_m4_pf1_pd0_lf5_la2-2.tex}
& \input{figures/fig_m4_pf1_pd0_lf6_la0-0.tex}
& \input{figures/fig_m4_pf3_pd0_lf0_la7-4-3-0.tex}
& \input{figures/fig_m4_pf3_pd0_lf0_la7-4-2-1.tex}
& \input{figures/fig_m4_pf3_pd0_lf0_la7-3-3-1.tex}
& \input{figures/fig_m4_pf3_pd0_lf0_la7-3-2-2.tex}
& \input{figures/fig_m4_pf3_pd0_lf1_la5-3-2-0.tex}
\\
& \input{figures/fig_m4_pf3_pd0_lf1_la5-3-1-1.tex}
& \input{figures/fig_m4_pf3_pd0_lf1_la5-2-2-1.tex}
& \input{figures/fig_m4_pf3_pd0_lf2_la3-3-0-0.tex}
& \input{figures/fig_m4_pf3_pd0_lf2_la3-2-1-0.tex}
& \input{figures/fig_m4_pf3_pd0_lf2_la3-1-1-1.tex}
& \input{figures/fig_m4_pf3_pd0_lf3_la1-1-0-0.tex}
& \input{figures/fig_m4_pf2_pd1-1.2_lf0_la8-5-3-0.tex}
& \input{figures/fig_m4_pf2_pd1-1.2_lf0_la8-4-4-0.tex}
& \input{figures/fig_m4_pf2_pd1-1.2_lf0_la8-5-2-1.tex}
\\
\raisebox{0em}{$4$}
& \input{figures/fig_m4_pf2_pd1-1.2_lf0_la8-4-3-1.tex}
& \input{figures/fig_m4_pf2_pd1-1.2_lf0_la8-4-2-2.tex}
& \input{figures/fig_m4_pf2_pd1-1.2_lf0_la8-3-3-2.tex}
& \input{figures/fig_m4_pf2_pd1-1.2_lf1_la6-4-2-0.tex}
& \input{figures/fig_m4_pf2_pd1-1.2_lf1_la6-3-3-0.tex}
& \input{figures/fig_m4_pf2_pd1-1.2_lf1_la6-4-1-1.tex}
& \input{figures/fig_m4_pf2_pd1-1.2_lf1_la6-3-2-1.tex}
& \input{figures/fig_m4_pf2_pd1-1.2_lf1_la6-2-2-2.tex}
& \input{figures/fig_m4_pf2_pd1-1.2_lf2_la4-4-0-0.tex}
\\
& \input{figures/fig_m4_pf2_pd1-1.2_lf2_la4-3-1-0.tex}
& \input{figures/fig_m4_pf2_pd1-1.2_lf2_la4-2-2-0.tex}
& \input{figures/fig_m4_pf2_pd1-1.2_lf2_la4-2-1-1.tex}
& \input{figures/fig_m4_pf2_pd1-1.2_lf3_la2-2-0-0.tex}
& \input{figures/fig_m4_pf2_pd1-1.2_lf3_la2-1-1-0.tex}
& \input{figures/fig_m4_pf2_pd1-1.2_lf4_la0-0-0-0.tex}
& \input{figures/fig_m4_pf5_pd0_lf0_la2-2-0-0-0-0.tex}
& \input{figures/fig_m4_pf5_pd0_lf0_la2-1-1-0-0-0.tex}
& \input{figures/fig_m4_pf5_pd0_lf1_la0-0-0-0-0-0.tex}
\\
& \input{figures/fig_m4_pf4_pd1-1.2_lf0_la3-1-0-1-1-0.tex}
& \input{figures/fig_m4_pf4_pd1-1.2_lf0_la3-2-0-1-0-0.tex}
& \input{figures/fig_m4_pf4_pd1-1.2_lf0_la3-1-1-1-0-0.tex}
& \input{figures/fig_m4_pf4_pd1-1.2_lf0_la3-3-0-0-0-0.tex}
& \input{figures/fig_m4_pf4_pd1-1.2_lf0_la3-2-1-0-0-0.tex}
& \input{figures/fig_m4_pf4_pd1-1.2_lf0_la3-1-1-0-0-1.tex}
& \input{figures/fig_m4_pf4_pd1-1.2_lf1_la1-1-0-0-0-0.tex}
& \input{figures/fig_m4_pf3_pd2-1.2-1.3_lf0_la4-0-2-2-0-0.tex}
& \input{figures/fig_m4_pf3_pd2-1.2-1.3_lf0_la4-0-2-1-0-1.tex}
\\
& \input{figures/fig_m4_pf3_pd2-1.2-1.3_lf0_la4-0-1-1-1-1.tex}
& \input{figures/fig_m4_pf3_pd2-1.2-1.3_lf0_la4-0-2-1-1-0.tex}
& \input{figures/fig_m4_pf3_pd2-1.2-1.3_lf0_la4-0-2-0-2-0.tex}
& \input{figures/fig_m4_pf3_pd2-1.2-1.3_lf0_la4-1-2-1-0-0.tex}
& \input{figures/fig_m4_pf3_pd2-1.2-1.3_lf0_la4-1-1-1-1-0.tex}
& \input{figures/fig_m4_pf3_pd2-1.2-1.3_lf0_la4-1-2-0-1-0.tex}
& \input{figures/fig_m4_pf3_pd2-1.2-1.3_lf0_la4-2-1-1-0-0.tex}
& \input{figures/fig_m4_pf3_pd2-1.2-1.3_lf0_la4-2-2-0-0-0.tex}
& \input{figures/fig_m4_pf3_pd2-1.2-1.3_lf0_la4-2-1-0-1-0.tex}
\\\hline
\end{tabular}%
\caption{Minimal problems with their associated degree.}%
\label{tab:min3}
\end{table*}

\begin{table*}[h!]
\setlength\tabcolsep{0pt}%
\centering
\renewcommand\arraystretch{0.0}%
\begin{tabular}{|@{\hspace{5pt}}c@{\hspace{5pt}}|*{9}{>{\centering\arraybackslash}p{5.10em}}|}
\hline
  $m$ & \multicolumn{9}{c|}{$(p^f,p^d,l^f,l^a)$, algebraic degree\vphantom{\raisebox{-4pt}{a}\rule{0pt}{10pt}}}
\\\hline
& \input{figures/fig_m5_pf1_pd0_lf3_la5-5.tex}\rule{0pt}{5.6em}
& \input{figures/fig_m5_pf1_pd0_lf4_la3-3.tex}
& \input{figures/fig_m5_pf1_pd0_lf5_la1-1.tex}
& \input{figures/fig_m5_pf4_pd0_lf0_la4-2-2-0-0.tex}
& \input{figures/fig_m5_pf4_pd0_lf0_la4-2-1-1-0.tex}
& \input{figures/fig_m5_pf4_pd0_lf0_la4-1-1-1-1.tex}
& \input{figures/fig_m5_pf4_pd0_lf1_la2-1-1-0-0.tex}
& \input{figures/fig_m5_pf4_pd0_lf2_la0-0-0-0-0.tex}
& \input{figures/fig_m5_pf3_pd1-1.2_lf0_la5-3-0-2-0.tex}
\\
\raisebox{3em}{$5$}
& \input{figures/fig_m5_pf3_pd1-1.2_lf0_la5-2-1-2-0.tex}
& \input{figures/fig_m5_pf3_pd1-1.2_lf0_la5-1-1-2-1.tex}
& \input{figures/fig_m5_pf3_pd1-1.2_lf0_la5-3-1-1-0.tex}
& \input{figures/fig_m5_pf3_pd1-1.2_lf0_la5-2-2-1-0.tex}
& \input{figures/fig_m5_pf3_pd1-1.2_lf0_la5-2-1-1-1.tex}
& \input{figures/fig_m5_pf3_pd1-1.2_lf0_la5-3-2-0-0.tex}
& \input{figures/fig_m5_pf3_pd1-1.2_lf0_la5-3-1-0-1.tex}
& \input{figures/fig_m5_pf3_pd1-1.2_lf0_la5-2-2-0-1.tex}
& \input{figures/fig_m5_pf3_pd1-1.2_lf1_la3-2-0-1-0.tex}
\\
& \input{figures/fig_m5_pf3_pd1-1.2_lf1_la3-1-1-1-0.tex}
& \input{figures/fig_m5_pf3_pd1-1.2_lf1_la3-2-1-0-0.tex}
& \input{figures/fig_m5_pf3_pd1-1.2_lf1_la3-1-1-0-1.tex}
&
&
&
&
&
&
\\
\hline
\raisebox{0em}{$6$}
& \input{figures/fig_m6_pf3_pd0_lf0_la6-3-3-0.tex}\rule{0pt}{5.6em}
& \input{figures/fig_m6_pf3_pd0_lf0_la6-3-2-1.tex}
& \input{figures/fig_m6_pf3_pd0_lf0_la6-2-2-2.tex}
& \input{figures/fig_m6_pf3_pd0_lf1_la4-2-2-0.tex}
& \input{figures/fig_m6_pf3_pd0_lf1_la4-2-1-1.tex}
& \input{figures/fig_m6_pf3_pd0_lf2_la2-2-0-0.tex}
& \input{figures/fig_m6_pf3_pd0_lf2_la2-1-1-0.tex}
& \input{figures/fig_m6_pf2_pd1-1.2_lf0_la7-4-3-0.tex}
& \input{figures/fig_m6_pf2_pd1-1.2_lf0_la7-4-2-1.tex}
\\
& \input{figures/fig_m6_pf2_pd1-1.2_lf0_la7-3-3-1.tex}
& \input{figures/fig_m6_pf2_pd1-1.2_lf0_la7-3-2-2.tex}
& \input{figures/fig_m6_pf2_pd1-1.2_lf1_la5-3-2-0.tex}
& \input{figures/fig_m6_pf2_pd1-1.2_lf1_la5-3-1-1.tex}
& \input{figures/fig_m6_pf2_pd1-1.2_lf1_la5-2-2-1.tex}
& \input{figures/fig_m6_pf2_pd1-1.2_lf2_la3-3-0-0.tex}
& \input{figures/fig_m6_pf2_pd1-1.2_lf2_la3-2-1-0.tex}
& \input{figures/fig_m6_pf2_pd1-1.2_lf2_la3-1-1-1.tex}
&
\\
\hline
\raisebox{2.7em}{$7$}
& \input{figures/fig_m7_pf2_pd0_lf0_la8-4-4.tex}\rule{0pt}{5.6em}
& \input{figures/fig_m7_pf2_pd0_lf1_la6-3-3.tex}
& \input{figures/fig_m7_pf2_pd0_lf2_la4-3-1.tex}
& \input{figures/fig_m7_pf2_pd0_lf2_la4-2-2.tex}
& \input{figures/fig_m7_pf2_pd0_lf3_la2-2-0.tex}
& \input{figures/fig_m7_pf2_pd0_lf3_la2-1-1.tex}
& \input{figures/fig_m7_pf2_pd0_lf4_la0-0-0.tex}
&
&
\\
\hline
\raisebox{2.7em}{$8$}
& \input{figures/fig_m8_pf1_pd0_lf3_la4-4.tex}\rule{0pt}{5.6em}
& \input{figures/fig_m8_pf1_pd0_lf4_la2-2.tex}
& \input{figures/fig_m8_pf1_pd0_lf5_la0-0.tex}
&
&
&
&
&
&
\\
\hline
\raisebox{2.7em}{$9$}
& \input{figures/fig_m9_pf0_pd0_lf6_la0.tex}\rule{0pt}{5.6em}
&
&
&
&
&
&
&
&
\\\hline
\end{tabular}%
\caption{Minimal problems with their associated degree.}%
\label{tab:min4}
\end{table*}}

The 285 minimal PLPs from Theorem~\ref{thm:main} \ref{item:thm:main:c}  
are depicted in Tables \ref{tab:min1} -- \ref{tab:min4}.

\end{document}